\DeclareMathOperator*{\argmin}{arg\,min}
\theoremstyle{plain}
\newtheorem{proposition}{Proposition}
\newtheorem{lemma}{Lemma}
\newtheorem{assumption}{Assumption}
\newtheorem{theorem}{Theorem}
\theoremstyle{definition}
\newtheorem{definition}{Definition}
\newtheorem{example}{Example}
\newtheorem{step}{Step}
\theoremstyle{remark}
\newtheorem{remark}{Remark}
\definecolor{rose}{rgb}{1.0, 0.33, 0.64}
\title{Distribution-Free Model-Agnostic Regression Calibration via Nonparametric Methods}
\author{Shang Liu\thanks{Equal contribution.} \quad  Zhongze Cai\footnotemark[1] \quad Xiaocheng Li}
\date{\small \small 
Imperial College Business School,
Imperial College London\\
\texttt{\{s.liu21, z.cai22, xiaocheng.li\}@imperial.ac.uk}}
\begin{document}
\maketitle

\onehalfspacing

\begin{abstract}
In this paper, we consider the uncertainty quantification problem for regression models. Specifically, we consider an individual calibration objective for characterizing the quantiles of the prediction model. While such an objective is well-motivated from downstream tasks such as newsvendor cost, the existing methods have been largely heuristic and lack of statistical guarantee in terms of individual calibration. We show via simple examples that the existing methods focusing on population-level calibration guarantees such as average calibration or sharpness can lead to harmful and unexpected results. We propose simple nonparametric calibration methods that are agnostic of the underlying prediction model and enjoy both computational efficiency and statistical consistency. Our approach enables a better understanding of the possibility of individual calibration, and we establish matching upper and lower bounds for the calibration error of our proposed methods. Technically, our analysis combines the nonparametric analysis with a covering number argument for parametric analysis, which advances the existing theoretical analyses in the literature of nonparametric density estimation and quantile bandit problems. Importantly, the nonparametric perspective sheds new theoretical insights into regression calibration in terms of the curse of dimensionality and reconciles the existing results on the impossibility of individual calibration. To our knowledge, we make the first effort to reach both individual calibration and finite-sample guarantee with minimal assumptions in terms of conformal prediction. Numerical experiments show the advantage of such a simple approach under various metrics, and also under covariates shift. We hope our work provides a simple benchmark and a starting point of theoretical ground for future research on regression calibration.

\end{abstract}

\section{Introduction}
\label{sec:intro}
Modern machine learning methods have witnessed great success on a wide range of tasks in the past decade for their high accuracy in dealing with various kinds of complicated data. Uncertainty quantification has played an important role in the interpretation and risk assessment of machine learning models for downstream tasks such as optimization and decision-making. People observe that the output of deep learning models tends to be over over-confident \citep{guo2017calibration, amodei2016concrete}, which inspires many recent works on uncertainty calibration for classification problems \citep{kumar2019verified, wenger2020non, luo2022local}. Comparatively, there has been less systematic theoretical understanding of the uncertainty calibration for regression problems. For a regression problem, the output of a prediction model can be the prediction of the conditional mean (by minimizing mean squared error) or of the conditional median (by minimizing mean absolute error). But such a single-point prediction cannot characterize the uncertainty, which motivates the development of uncertainty calibration methods for regression problems. Some efforts have been made in order to estimate the entire distribution, which includes Bayesian ways \citep{mackay1992practical, damianou2013deep}, the frequentists' ways \citep{kendall2017uncertainties, lakshminarayanan2017simple, cui2020calibrated, zhao2020individual}, and the nonparametric ways \citep{lei2014distribution, lei2018distribution, bilodeau2021minimax, song2019distribution}. Another easier task is to predict the quantiles rather than the whole distribution. Existing ways mainly focus on completing the task of quantile prediction in one go \citep{pearce2018high, thiagarajan2020building, chung2021beyond, takeuchi2006nonparametric, steinwart2011estimating}. However, all those methods suffer from either statistical inconsistency under model misspecification or computational intractability during the training phase, or sometimes even both (see the detailed review in Appendix \ref{apd:literature_review}).

In this paper, we suggest that previous ways of training a new quantile prediction model from scratch while discarding the pre-trained (mean) regression model may not be the best way for a quantile calibration objective, because the pre-trained model (though designed for a mean estimation objective) can be helpful for the quantile calibration. We propose a simple, natural, but theoretically non-trivial method that divides the whole quantile calibration task into two steps: (i) train a good regression model and (ii) estimate the conditional quantiles of its prediction error. Although a similar idea is applied in \textit{split conformal prediction}\citep{papadopoulos2002inductive, vovk2005algorithmic, lei2018distribution}, the theoretical justification is still in a lack since the conformal prediction only requires an \emph{average} calibration guarantee (see Definition \ref{def:marginal}) that can also be achieved without this first training step at all (where the detailed review is given in Appendix \ref{apd:literature_review}). By a careful analysis of the \emph{individual} calibration objective (see Definition \ref{def:individual}), we capture the intuition behind the two-step procedure and formalize it in a mathematical guarantee. After a comprehensive numerical experiment on real-world datasets against existing benchmark algorithms, we suggest that one neither needs to estimate the whole distribution nor to train a new quantile model from scratch, while a pre-trained regression model and a split-and-subtract suffice, both theoretically and empirically. Our contribution can be summarized below:

First, we propose a simple algorithm that can estimate all percent conditional quantiles simultaneously. We provide the individual consistency of our algorithm and prove the minimax optimal convergence rate with respect to the mean squared error (Theorem \ref{thm:consistent} and \ref{thm:lowerbound_Lipschitz}). Our analysis is new and it largely relaxes the assumptions in the existing literature on kernel density estimation and order statistics. By showing the necessity of the Lipschitz assumption (Theorem \ref{thm:lowerbound_nonsmooth}), our result uses \textit{minimal assumptions} to reach both \textit{finite sample guarantee} and \textit{individual calibration}, and our paper is the first to keep the latter two goals simultaneously up to our knowledge.

Second, we propose a two-step procedure of estimating ``mean + quantile of error''  rather than directly estimating the conditional quantiles, which enables a faster convergence rate both theoretically and empirically. Specifically, our convergence rate is of order $\tilde{O}(L^{\frac{2d}{d+2}}n^{-\frac{2}{d+2}})$, where $L$ is the Lipschitz constant of the conditional quantile with respect to the features, $n$ is the number of samples, and $d$ is the dimension of feature. Since the conditional mean function and the conditional quantile function are highly correlated, one can greatly reduce the Lipschitz constant by subtracting the mean from the quantile. 

Moreover, we construct several simple examples to show the unexpected behavior of the existing calibration methods, suggesting that a population-level criterion such as sharpness or MMD can be misleading. As our analysis works as a positive result on individual calibration, we also provide a detailed discussion about the existing results on the impossibility of individual calibration, illustrating that their definitions of individual calibration are either impractical or too conservative.

\section{Problem Setup}
\label{sec:setup}
Consider the regression calibration problem for a given pre-trained model $\hat{f}$. We are given a dataset $\{(X_i, Y_i)\}_{i=1}^n \in \mathcal{X} \times \mathcal{Y}$ \emph{independent} of the original data that trains $\hat{f}(x)$. Here $\mathcal{X}\in [0, 1]^d \subset \mathbb{R}^d$ denotes the covariate/feature space  and $\mathcal{Y}\subset \mathbb{R}$ denotes the response/label space. The i.i.d. samples $\{(X_i, Y_i)\}_{i=1}^n$ follow an unknown distribution $\mathcal{P}$ on $\mathcal{X}\times \mathcal{Y}$.  We aim to characterize the uncertainty of the regression error $U_i \coloneqq Y_i - \hat{f}(X_i)$ in a model agnostic manner, i.e., without any further restriction on the choice of the underlying prediction model $\hat{f}.$ 

Now we introduce several common notions of regression calibration considered in the literature. We first state all the definitions with respect to the error $U_i$ and then establish equivalence with respect to the original $Y_i.$ Define the $\tau^{\text{-th}}$ quantile of a random variable $Z$ as $Q_\tau(Z) \coloneqq \inf \{t:F(t) \geq \tau\}$ for any $\tau \in [0, 1]$ where F is the c.d.f. of $Z$. Accordingly, the quantile of the conditional distribution $U|X=x$ is denoted by $Q_\tau(U|X=x)$. A quantile prediction/calibration model is denoted by $\hat{Q}_\tau(x): \mathcal{X} \rightarrow \mathbb{R}$ which gives the $\tau$-quantile prediction of $U$ given $X=x$.


\begin{definition}[Marginal/Average Calibration]
\label{def:marginal}
The model $\hat{Q}_\tau(x)$ is \textit{marginally calibrated} if
\[ \mathbb{P}(U \leq \hat{Q}_\tau(X)) = \tau\]
for any $\tau\in[0,1]$. Here the probability distribution is with respect to the joint distribution of $(U,X).$
\end{definition}
 
As noted by existing works, the marginal calibration requirement is too weak to achieve the goal of uncertainty quantification. A predictor that always predicts the marginal quantile of $U$ for any $X=x$, i.e., $\hat{Q}_\tau(x)\equiv Q_{\tau}(U)$, is always marginally calibrated, but such a model does not capture the heteroscedasticity of the output variable $Y$ or the error $U$ with respect to $X$. 

\begin{definition}[Group Calibration]
\label{def:group}
For some pre-specified partition $\mathcal{X} = \mathcal{X}_1 \cup \dots \cup \mathcal{X}_K$,
the model $\hat{Q}_\tau(x)$ is \textit{group calibrated} if
\[\mathbb{P}(U \leq \hat{Q}_\tau(X) | X \in \mathcal{X}_k) = \tau\]
for any $\tau\in[0,1]$ and $k=1,...,K$.
\end{definition}

Group calibration is a stronger notion of calibration than marginal calibrations. It is often considered in a related but different problem called \emph{conformal prediction} \citep{vovk2012conditional, lei2014distribution, alaa2023conformalized} where the goal is to give a sharp covering set $\hat{C}(X)$ (or covering band if unimodality is assumed) such that $\mathbb{P}(Y \in \hat{C}(X)|X\in \mathcal{X}_k) \geq 1-\alpha, \forall k$. \citet{foygel2021limits} consider the case where the guarantee is made for all $\mathbb{P}(X\in\mathcal{X}_k)\geq \delta$ for some $\delta > 0$.

\begin{definition}[Individual Calibration]
\label{def:individual}
The model $\hat{Q}_\tau(x)$ is \textit{individually calibrated} if
\[ \mathbb{P}(U \leq \hat{Q}_\tau(X)|X=x) = \tau\]
for any $\tau\in[0,1]$ and $x \in \mathcal{X}$. Here the probability distribution is with respect to the conditional distribution $U|X.$
\end{definition}

In this paper, we consider the criteria of individual calibration, an even stronger notion of calibration than the previous two. It requires the calibration condition to hold with respect to any $x\in \mathcal{X}$ in a pointwise manner. The difference between the conformal prediction problem with group calibration and the regression calibration problem with individual calibration is that conformal prediction does not make any assumptions on the data distribution, while our lower bound result of Theorem \ref{thm:lowerbound_nonsmooth} suggests that without any assumption, the convergence rate of the conditional quantile estimator can be arbitrarily slow. The conformal prediction literature considers a weaker notion of calibration than the individual calibration setting considered in this paper where some mild assumptions are made.

Finally, the following proposition says that any quantile prediction model on the error $U$ is equivalent to a corresponding quantile prediction on $Y$. The result is stated for the individual calibration, while a similar statement can be made for marginal calibration and group calibration as well. 
\begin{proposition}
\label{prop:prob}
For any predictor $\hat{Q}_\tau(x)$ on the $\tau^{\text{-th}}$ quantile of $U|X=x$, we have
\[\mathbb{P}\left(Y \leq \hat{f}(X)+\hat{Q}_\tau(X) \big \vert X=x\right)= \mathbb{P}\left(U \leq \hat{Q}_\tau(X)\big \vert X=x\right).\]
\end{proposition}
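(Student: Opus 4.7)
The plan is essentially to unpack the definition of $U$ and observe that the two events inside the probabilities are literally the same subset of the sample space, so the equality of probabilities is immediate and requires no probabilistic machinery beyond a regular conditional probability.

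More concretely, I would start from the definition $U \coloneqq Y - \hat{f}(X)$ given in Section~\ref{sec:setup} and note that, pointwise on the sample space, the inequality $Y \leq \hat{f}(X) + \hat{Q}_\tau(X)$ is equivalent to $Y - \hat{f}(X) \leq \hat{Q}_\tau(X)$, which is exactly $U \leq \hat{Q}_\tau(X)$. Thus the two events
\[
\{Y \leq \hat{f}(X) + \hat{Q}_\tau(X)\} \quad \text{and} \quad \{U \leq \hat{Q}_\tau(X)\}
\]
coincide as measurable sets, and in particular they agree after conditioning on $\{X=x\}$ in the usual regular conditional probability sense. Taking conditional probabilities of both sides gives the claim.

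The main (and only) thing to be careful about is the conditioning on $\{X=x\}$, since this event may have probability zero; the statement should be interpreted as an equality of regular conditional probabilities, which is unambiguous because the two events are equal as sets and therefore their indicators are equal almost surely under any conditional distribution. No additional assumption on $\hat{f}$, $\hat{Q}_\tau$, or the joint law of $(X,Y)$ is needed beyond measurability, so there is no genuine obstacle; the proposition is a bookkeeping statement whose role is to let us switch freely between calibrating on $U$ and calibrating on $Y$ throughout the rest of the paper.
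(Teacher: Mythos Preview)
Your proposal is correct and matches the paper's own proof almost verbatim: the paper simply notes that $Y = U + \hat{f}(X)$, substitutes, and concludes that the two probabilities are identical. Your added remarks on regular conditional probabilities are a reasonable elaboration but not required by the paper's argument.
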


\subsection{Motivating examples}
\begin{figure}[t!]
\centering
\includegraphics[width=1.0\textwidth]{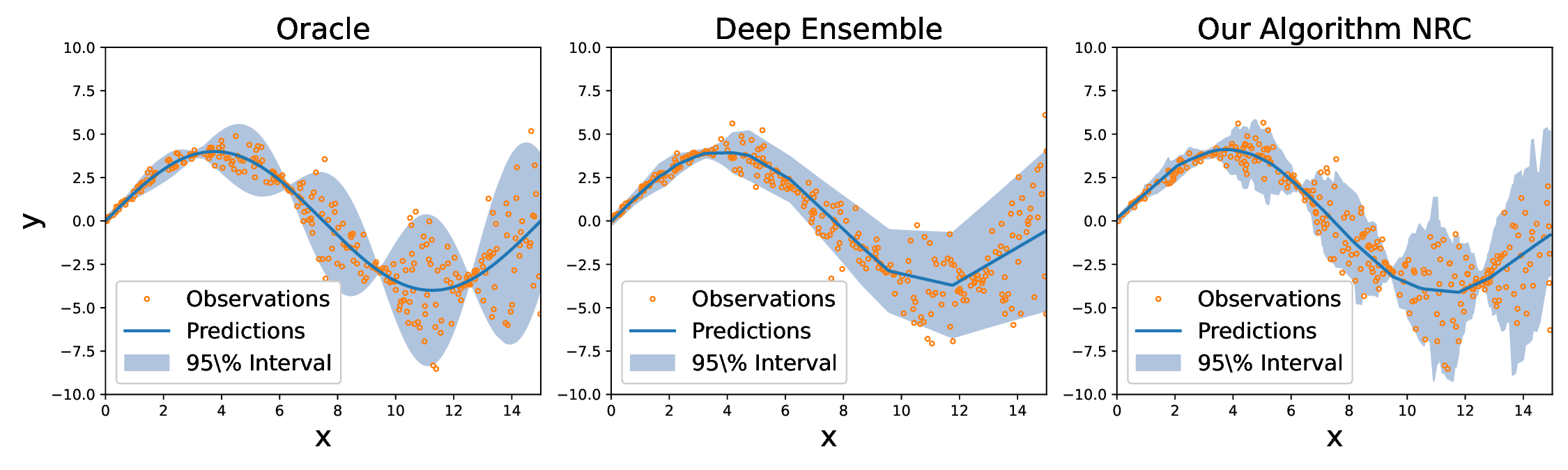}
\caption{Synthetic data where the underlying distribution is obtained by a combination of sine functions. The solid lines denote predicted means, the shaded area denotes predicted intervals between $97.5\%$ and $2.5\%$ quantiles, and the yellow dots denote a subset of real observations. The leftmost plot gives the real mean as well as oracle quantile values, while the rest two plots are predictions from different calibration models. The middle plot is produced by a Deep Ensemble \citep{lakshminarayanan2017simple} of 5 HNNs trained with 40,000 samples, which is both a common benchmark and a building block for several regression calibration methods. The rightmost plot is produced by our proposed nonparametric calibration method -- Algorithm \ref{alg:NQE} NRC of which the base regression model is an ordinary feed-forward regression network. The detailed setup is given in Appendix \ref{subsec:SyntheExp}.}
\label{fig:synthetic}
\end{figure}

In addition to the above calibration objective, existing literature also considers sharpness (the average length of confidence intervals \citep{zhao2020individual, chung2021beyond}) and maximum mean discrepancy (MMD) principle \citep{cui2020calibrated}. To the best of our knowledge, all the existing works consider population-level objectives such as marginal calibration, sharpness, and MMD when training the calibration model, while we are the first work that directly aims for individual calibration. Here we use a simple example to illustrate that such population-level objectives may lead to undesirable calibration results, and in Appendix \ref{apd:failure_of_existing_methods}, we elaborate with more examples on the inappropriateness of existing methods for regression calibration. 

\begin{example}
Consider $X \sim \mathrm{Unif}[0,1]$ and the conditional distribution $Y|X=x \sim \mathrm{Unif}[0, x]$. Then if one aims for $\tau=90\%,$ the outcome from sharpness maximization subject to marginal calibration is to $\hat{Q}_{\tau}(x) = x$ for $x\in[0,0.9]$ and $\hat{Q}_{\tau}(x) = 0$ for $x\in(0.9,1].$ Consequently, the quantile prediction has a $100\%$ coverage over $x\in[0,0.9]$ but $0\%$ coverage over $x\in(0.9, 1].$ 
\end{example}

Generally, the population-level criteria such as sharpness or MMD may serve as performance metrics to monitor the quality of a calibration outcome. However, the inclusion of such criteria in the objective function will encourage the calibration result to be over-conservative for the low variance region ($x\in[0,0.9]$) while giving up the high variance ($x\in(0.9,1]$), which is highly undesirable for risk-sensitive applications and/or fairness considerations.

Besides, individual calibration is sometimes also necessitated by the downstream application. For example, in some decision-making problems, regression calibration serves as a downstream task where loss is measured by the pinball loss; say, the newsvendor problem \citep{kuleshov2018accurate, ban2019big} in economics and operations research uses pinball loss to trade-off the backorder cost and the inventory cost. 

\begin{proposition}
\label{prop:pinball}
The pinball loss with respect to $\tau$ is defined by \[l_{\tau}(u_1, u_2) \coloneqq (1-\tau) (u_1 - u_2) \cdot \mathbbm{1}\{u_1 > u_2\}+\tau (u_2 - u_1)\cdot \mathbbm{1}\{u_1 < u_2\}.\]
We have
\[ Q_\tau(Y|X=x) = \hat{f}(x) + Q_\tau(U|X=x) \in \argmin_{y \in \mathbb{R}} \mathbb{E}[l_\tau(y, Y)|X=x].\]
\end{proposition}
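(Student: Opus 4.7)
The proposition contains two claims: the identity $Q_\tau(Y|X=x) = \hat f(x) + Q_\tau(U|X=x)$ and the inclusion of $Q_\tau(Y|X=x)$ in the argmin of the conditional expected pinball loss. The plan is to dispatch the two parts separately, first via the translation property of quantiles and then via a standard convex-analysis argument.

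For the identity, I would observe that conditioning on $X = x$ makes $\hat f(X) = \hat f(x)$ a deterministic constant, so $U = Y - \hat f(x)$ under the conditional distribution. The translation property of quantiles, which is immediate from the definition $Q_\tau(Z)=\inf\{t:F_Z(t)\geq\tau\}$ together with $F_{Z+c}(t) = F_Z(t-c)$, then gives $Q_\tau(U|X=x) = Q_\tau(Y|X=x) - \hat f(x)$, and rearranging yields the claimed identity.

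For the optimality claim, I would let $F$ denote the conditional c.d.f.\ of $Y|X=x$ and set $g(y) \coloneqq \mathbb{E}[l_\tau(y,Y)|X=x]$. Since $l_\tau(\cdot, t)$ is convex and piecewise linear for each fixed $t$, $g$ inherits convexity from the integrand. Computing the right- and left-derivatives of the piecewise linear map $y \mapsto l_\tau(y,t)$ at fixed $t$ and then passing to the expectation by dominated convergence gives
\[ g'_+(y) = F(y) - \tau, \qquad g'_-(y) = F(y-) - \tau.\]
At $y^\star \coloneqq Q_\tau(Y|X=x) = \inf\{t: F(t)\geq \tau\}$, one has $F(y^\star-) \leq \tau \leq F(y^\star)$, so $g'_-(y^\star) \leq 0 \leq g'_+(y^\star)$; equivalently $0 \in \partial g(y^\star)$. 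Convexity of $g$ upgrades this first-order condition to global minimality, yielding $y^\star \in \argmin_{y\in\mathbb{R}} g(y)$.

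The main subtlety is the possibility of an atom of $F$ at $y^\star$, or of a plateau of $F$ at height $\tau$, where $g$ need not be differentiable and $\argmin g$ may be a non-trivial interval. Passing through the subdifferential rather than setting an ordinary derivative to zero handles both cases cleanly and is consistent with the inf-based convention for $Q_\tau$ used in the paper, which is also why the conclusion is phrased as ``$\in \argmin$'' rather than an equality.
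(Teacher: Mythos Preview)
Your proposal is correct and follows essentially the same approach as the paper: the paper also computes the one-sided derivatives of the expected pinball loss (their Lemma~\ref{lem:derivative_of_loss} gives exactly $\partial_\pm \mathbb{E}[l_\tau(u,U)] = F(u^\pm)-\tau$) and uses convexity to conclude that the quantile is a minimizer. The only cosmetic difference is ordering and which translation property is invoked: the paper first shows $Q_\tau(U|X=x)\in\argmin_u \mathbb{E}[l_\tau(u,U)|X=x]$ and then applies the translation invariance $l_\tau(u_1+c,u_2+c)=l_\tau(u_1,u_2)$, whereas you first translate the quantile and then work directly with $Y$.
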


Proposition \ref{prop:pinball} serves as the original justification of pinball loss for quantile regression. More importantly, it says that a calibration method that is inconsistent with respect to individual calibration can be substantially suboptimal for downstream tasks such as the newsvendor problem. 


\section{Main Results}

\subsection{Generic nonparametric quantile estimator}
\label{subsec:upperbound}

In this subsection, we present and analyze a simple algorithm of a nonparametric quantile estimator, which can be of independent interest itself and will be used as a building block for the regression calibration. The algorithm takes the dataset $\{(X_i, U_i)\}_{i=1}^n$ as input and outputs a $\tau$-quantile estimation of the conditional distribution $U|X=x$ for any $\tau$ and $x.$

\begin{algorithm}[ht!]
    \caption{Simple Nonparametric Quantile Estimator}
    \label{alg:SNQ}
    \begin{algorithmic}[1] 
    \Require $\tau \in (0,1)$, dataset $\{(X_i, U_i)\}_{i=1}^n$, kernel choice $\kappa_h(\cdot, \cdot)$
    \State Estimate the distribution of $U|X=x$ by (where $\delta_u$ is a point mass distribution at $u$)
    \[\hat{\mathcal{P}}_{U|X=x} = \frac{\sum_{i=1}^n \kappa_h(x, X_i) \delta_{U_i}}{\sum_{i=1}^n \kappa_h(x, X_i)}.\]
    \State Output the conditional $\tau^{\text{-th}}$ quantile by minimizing the pinball loss on $\hat{\mathcal{P}}_{U|X=x}$
    \begin{equation}
        \hat{Q}_\tau^{\text{SNQ}}(x) = \argmin_{u} \mathbb{E}_{U\sim \hat{\mathcal{P}}_{U|X=x}}[l_\tau(u, U)]. \label{eqn:local_pinball}
    \end{equation}
    \end{algorithmic}
\end{algorithm}

The minimum of the optimization problem in Algorithm \ref{alg:SNQ} is indeed achieved by the $\tau$-quantile of the empirical distribution $\hat{\mathcal{P}}_{U|X=x}$. In the following, we analyze the theoretical properties of the algorithm under the naive kernel 
\[ \kappa_h(x_1,x_2) = \mathbbm{1}\{\|x_1 - x_2\| \leq h\}. \]
where $h$ is the hyper-parameter for window size, and $\|x_1 - x_2\|$ denotes Euclidean distance.

\begin{assumption}
We assume the following on the joint distribution $(U,X)$ and $\tau\in(0,1)$ of interest:
\begin{itemize}
\item[(a)] (Lipschitzness). The conditional quantile function is $L$-Lipschitz with respect to $x$,
\[ |Q_\tau(U|X=x_1) - Q_\tau(U|X=x_2)| \leq L\|x_1 - x_2\|, \quad \forall x_1, x_2\in \mathcal{X}.\] 
\item[(b)] (Boundedness). The quantile $Q_\tau(U|X=x) $ is bounded within $[-M,M]$ for all $x\in\mathcal{X}$.
\item[(c)] (Density). There exists a density function $p_x(u)$ for the conditional probability distribution $U|X=x$, and the density function is uniformly bounded away from zero in a neighborhood of the interested quantile. That is, there exist constants $\underline{p}$ and $\underline{r},$
\[ p_x(u) \geq \underline{p}, \quad \forall x \in \mathcal{X} \text{ and } \left\vert u-Q_{\tau}(U|X=x)\right\vert \le \underline{r}.\]
\end{itemize}
\label{assm}
\end{assumption}

In Assumption \ref{assm}, part (a) and part (b) impose Lipschitzness and boundedness for the conditional quantile, respectively. Part (c) requires the existence of a density function and a lower bound for the density function around the quantile of interest. Its aim is to ensure a locally strong convexity for the expected risk $\mathbb{E}_{U\sim \mathcal{P}_{U|X=x}}[l_\tau(u, U)].$ Under Assumption \ref{assm}, we establish consistency and convergence rate for Algorithm \ref{alg:SNQ}.

\begin{theorem}
\label{thm:consistent}
Under Assumption \ref{assm}, Algorithm \ref{alg:SNQ} is statistically consistent, i.e., for any $\epsilon>0,$
\[\lim_{n \rightarrow \infty}\mathbb{P}\left(\left\vert\hat{Q}_\tau^{\text{SNQ}}(X) - Q_\tau(U|X)\right\vert \geq \epsilon\right) = 0.\]
Furthermore, by choosing $h = \Theta(L^{\frac{2}{d+2}} n^{-\frac{1}{d+2}})$, we have for sufficiently large $n \geq C$, when $L>0,$
\[\mathbb{E}\left[\left\vert\hat{Q}_\tau^{\text{SNQ}}(X) - Q_\tau(U|X)\right\vert^2\right] \leq C^\prime L^{\frac{2d}{d+2}} n^{-\frac{2}{d+2}},\]
where $C$ and $C^\prime$ depends polynomially on $\frac{1}{\underline{p}}$, $\frac{1}{\underline{r}}$, $M$, and $\log(n)$. In addition, whens $L = 0$, the right-hand-side becomes $\tilde{O}(\frac{1}{n})$.
\end{theorem}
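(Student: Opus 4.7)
The plan is to prove the MSE bound first; consistency then follows by Markov's inequality combined with a truncation based on part (b). The proof is a pointwise bias--variance decomposition for local empirical quantiles, lifted to a uniform-in-$x$ bound via a covering-number argument, and then integrated over $X$.

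Fix $x$ and let $N_h(x) = \sum_{i=1}^n \mathbbm{1}\{\|X_i - x\|\leq h\}$. Define the mixture CDF $\bar{F}_x(u) = \mathbb{E}[F_{U|X}(u)\mid \|X-x\|\leq h]$ and let $\bar{Q}_\tau(x)$ be its $\tau$-quantile. For the \emph{bias}, Assumption \ref{assm}(a) gives $|Q_\tau(U|X{=}\tilde{x}) - Q_\tau(U|X{=}x)|\leq Lh$ for every $\tilde{x}$ in the $h$-ball, so $F_{U|X=\tilde{x}}(Q_\tau(U|X{=}x)\pm Lh)$ sandwich $\tau$ at every such $\tilde{x}$. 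Mixing preserves those sandwich inequalities, hence $|\bar{Q}_\tau(x) - Q_\tau(U|X{=}x)|\leq Lh$.

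For the \emph{variance}, conditional on $\{X_i\}$ the $U_i$'s landing in the window are independent with aggregate CDF $\bar{F}_x$, so DKW (or Bernstein over a fine grid of thresholds) gives $\sup_u|\hat{F}_{N_h(x)}(u) - \bar{F}_x(u)| = O(\sqrt{\log n / N_h(x)})$ with probability $\geq 1-n^{-c}$. Assumption \ref{assm}(c) together with the bias bound ensures that $\bar{F}_x$ has density at least $\underline{p}$ on an interval of radius $\underline{r}-Lh$ around $Q_\tau(U|X{=}x)$, which is non-degenerate once $Lh < \underline{r}/2$; inverting $\bar{F}_x$ on this interval converts the uniform CDF closeness into $|\hat{Q}^{\text{SNQ}}_\tau(x) - \bar{Q}_\tau(x)| = O(\underline{p}^{-1}\sqrt{\log n / N_h(x)})$. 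To pass from a fixed $x$ to every $x$ simultaneously, I take an $h/2$-cover of $\mathcal{X}$ of cardinality $O(h^{-d})$, union-bound the concentration events over the cover, and interpolate from the nearest cover point using Lipschitzness of $Q_\tau$ and monotonicity of the window; the interpolation costs only another $O(Lh)$. Balancing $Lh$ against $\underline{p}^{-1}\sqrt{\log n /(n h^d)}$ yields the announced bandwidth and the squared-error rate $\tilde{O}(L^{2d/(d+2)} n^{-2/(d+2)})$. The case $L=0$ collapses the bias entirely, so one may take $h$ equal to the diameter of $\mathcal{X}$, making $N_h(x)\equiv n$ and giving the $\tilde{O}(1/n)$ parametric rate.

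The hard part is the coupling between the random sample count $N_h(x)$ and the empirical quantile concentration: naively conditioning on $N_h(x)$ destroys the independence of the $(X_i,U_i)$ pairs. I handle this by first conditioning on the full covariate sequence $\{X_i\}_{i=1}^n$, using that the corresponding $U_i$'s remain independent under this conditioning, and invoking the covering argument uniformly in $x$. A secondary nuisance is the boundary of $[0,1]^d$, where $\mathbb{P}(\|X'-x\|\leq h)\ll h^d$: I either restrict to the interior set $\{x : d(x,\partial\mathcal{X})\geq h\}$ (whose complement has $X$-measure $O(h)$ and is absorbed into the bias) or reflect samples across the boundary. Finally, to integrate the uniform bound in $X$ I truncate $\hat{Q}^{\text{SNQ}}_\tau$ to $[-2M,2M]$, so that the failure event contributes only $O(M^2 h^{-d} n^{-c})$ to the MSE, negligible for $c$ large; the same truncation underlies the Markov step that extracts consistency from the MSE bound.
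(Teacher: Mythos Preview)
Your bias bound is exactly the paper's. For the variance term at a fixed $x$ you and the paper diverge: you invoke a DKW-type bound on the empirical CDF and invert it via the density lower bound, whereas the paper treats $\hat{Q}_\tau(x)$ as an M-estimator for the pinball loss, establishes local $\underline{p}$-strong convexity of the population risk $\bar{l}_\tau$ (Lemma~\ref{lem:expected_loss}), and runs a Bernstein-plus-covering argument over the one-dimensional parameter $u\in[-M,M]$ (Lemmas~\ref{lem:uniform_converge}--\ref{lem:generalization_bound}), \emph{not} over $\mathcal{X}$. Both routes deliver the same $\tilde{O}(\underline{p}^{-1}/\sqrt{N_h(x)})$ rate; yours is more elementary, the paper's sits closer to localized empirical-process machinery and would extend more readily to general kernels.

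Two gaps. First, your $\bar{F}_x$ is the \emph{population} mixture $\mathbb{E}[F_{U|X}(\cdot)\mid\|X-x\|\le h]$, but once you condition on $\{X_i\}$ the $U_{i_k}$'s are independent with \emph{distinct} laws $F_{U|X=X_{i_k}}$, and their empirical CDF concentrates around the \emph{sample} average $N_h(x)^{-1}\sum_k F_{U|X=X_{i_k}}$, not around your $\bar{F}_x$. The fix is to work with the sample average throughout (this is the paper's $\bar{F}$); the bias argument is unaffected since every $F_{U|X=X_{i_k}}$ already has its $\tau$-quantile within $Lh$ of $Q_\tau(U|X{=}x)$.

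Second, and more substantively, the uniform-in-$x$ lift via an $h/2$-cover of $\mathcal{X}$ is both unnecessary and, as written, does not go through. Your interpolation step (``monotonicity of the window'') fails: for $\|x-x_j\|\le h/2$ the windows $B(x,h)$ and $B(x_j,h)$ overlap but neither contains the other, so $\hat{Q}_\tau^{\text{SNQ}}(x)$ and $\hat{Q}_\tau^{\text{SNQ}}(x_j)$ are computed from different data subsets with no Lipschitz relation between them. The paper sidesteps this entirely by never seeking a uniform bound: it fixes the test point, obtains the conditional high-probability bound of Lemma~\ref{lem:final_bound}, and integrates directly over $X_{\text{test}}$ and the training sample. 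The only place a cover of $\mathcal{X}$ appears is in the elementary inequality $\mathbb{E}_X\bigl[1/\mathbb{P}(\|X'-X\|\le h)\bigr]\le C h^{-d}$, and that argument needs no pointwise lower bound on $\mathbb{P}(\|X'-x\|\le h)$---so your boundary workaround (interior restriction or reflection) is not required either.
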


While Algorithm \ref{alg:SNQ} seems to be a natural algorithm for nonparametric quantile regression, Theorem \ref{thm:consistent} provides the first convergence rate for such a nonparametric quantile estimator, to the best of our knowledge, in the literature of nonparametric regression; also, it is the first theoretical guarantee towards individual calibration for regression calibration. The standard analysis of nonparametric mean estimation \citep{gyorfi2002distribution} cannot be applied here directly because the algorithm involves a local optimization \eqref{eqn:local_pinball}. Even more challenging, the optimization problem \eqref{eqn:local_pinball} aims to optimize the quantile of the conditional distribution $U|X=x$ but the samples used in \eqref{eqn:local_pinball} are from distributions $U|X=x_i$, which causes a non-i.i.d. issue. To resolve these issues, we combine the idea of bias and variance decomposition in nonparametric analysis with a covering concentration argument for the local optimization problem. The detailed proof is deferred to Appendix \ref{apd:upperbound}. 

Theorem \ref{thm:consistent} also complements the existing results on finite-sample analysis for quantile estimators. One line of literature \citep{szorenyi2015qualitative, altschuler2019best} establishes the quantile convergence from the convergence of empirical processes, and this requires additional assumptions on the density function and does not permit the non-i.i.d. structure here. The quantile bandits problem also entails the convergence analysis of quantile estimators; for example, \citet{zhang2021quantile} utilize the analysis of order statistics \citep{boucheron2012concentration}, and the analysis inevitably requires a non-decreasing hazard rate for the underlying distribution.  Other works \citep{bilodeau2021minimax} that follow the kernel density estimation approach require even stronger conditions such as realizability. We refer to Appendix \ref{apd:comparison_with_other} for more detailed discussions. In comparison to these existing analyses, our analysis is new, and it only requires a minimal set of assumptions.  The necessity of the assumptions can be justified from the following lower bound.

Theorem \ref{thm:lowerbound_Lipschitz} rephrases the lower bound result in the nonparametric (mean) regression literature \citep{gyorfi2002distribution} for the quantile estimation problem. It states that the convergence rate in Theorem \ref{thm:consistent} is minimax optimal (up to poly-logarithmic terms). It is easy to verify that the class $\mathcal{P}^L$ satisfies Assumption \ref{assm}. In this light, Theorem \ref{thm:consistent} and Theorem \ref{thm:lowerbound_Lipschitz} establish the criticalness of Assumption \ref{assm}. Furthermore, in Theorem \ref{thm:lowerbound_nonsmooth} in Appendix \ref{apd:impossibility_result}, we establish that the convergence rate of any estimator can be arbitrarily slow without any Lipschitzness assumption on the conditional statistics.

\begin{theorem}
\label{thm:lowerbound_Lipschitz}
Let $\mathcal{P}^L$ be the class of distributions of $(X,U)$ such that $X \sim \mathrm{Uniform}[0,1]^d$, $U = \mu(X) + N$, and $\mu(x)$ is $L$-Lipschitz where $N$ is an independent standard Gaussian random variable. For any algorithm, there exists a distribution in the class $\mathcal{P}^L$ such that the convergence rate of the algorithm is $\Omega(L^{\frac{2d}{d+2}} n^{-\frac{2}{d+2}})$. \end{theorem}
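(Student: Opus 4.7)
The plan is to reduce quantile estimation to the classical Lipschitz mean-regression lower bound and then invoke Assouad's lemma. The key observation is that, by construction of the class $\mathcal{P}^L$, the noise $N$ is independent of $X$ and standard Gaussian, so
\[
Q_\tau(U\mid X=x)\;=\;\mu(x)+\Phi^{-1}(\tau),
\]
where $\Phi$ is the standard normal c.d.f. Hence estimating $Q_\tau(U\mid X=\cdot)$ in squared loss is equivalent to estimating the $L$-Lipschitz regression function $\mu$. We therefore reduce to proving a minimax lower bound of order $L^{2d/(d+2)}n^{-2/(d+2)}$ for estimating $\mu$ from $n$ i.i.d.\ observations $(X_i,U_i)$ with $U_i=\mu(X_i)+N_i$ and $X_i\sim\mathrm{Uniform}[0,1]^d$.

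To instantiate Assouad's construction, I would partition $[0,1]^d$ into $m=\lceil 1/h\rceil^d$ congruent cubes of side $h$, centred at points $\{x_k\}_{k=1}^m$, and fix a nonnegative bump $\psi:\mathbb{R}^d\to\mathbb{R}_{\ge 0}$ that is $1$-Lipschitz, supported in the unit ball, and with strictly positive $L_2$-mass $\|\psi\|_2^2>0$. For each sign vector $\sigma\in\{-1,+1\}^m$, define the hypothesis
\[
\mu_\sigma(x)\;=\;Lh\,\sum_{k=1}^{m}\sigma_k\,\psi\!\left(\tfrac{x-x_k}{h}\right).
\]
Because the bumps have disjoint support and $\psi$ is $1$-Lipschitz, $\mu_\sigma$ is $L$-Lipschitz, so the induced distribution lies in $\mathcal{P}^L$. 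Moreover, for any two sign vectors $\sigma,\sigma'$ the squared $L_2$ distance between the corresponding conditional quantile functions is $\|\mu_\sigma-\mu_{\sigma'}\|_2^2\;=\;c_1\,L^2 h^{2+d}\,d_H(\sigma,\sigma')$, where $c_1=4\|\psi\|_2^2$ and $d_H$ is Hamming distance.

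The hard part, and the main calculation, is the information-theoretic bound on how well any estimator can separate neighbouring hypotheses. For two vectors $\sigma,\sigma'$ differing in one coordinate, the product Gaussian likelihoods yield a per-sample KL divergence bounded by $\tfrac{1}{2}\|\mu_\sigma-\mu_{\sigma'}\|_\infty^2\cdot\mathbb{P}(X\in\text{bump})\;\lesssim\;L^2h^2\cdot h^d$, so the total KL between the $n$-fold product laws is $O(nL^2h^{d+2})$. Assouad's lemma (in the form that bounds the minimax $L_2$ risk from below by a constant times $m\,L^2h^{d+2}$ whenever this KL is at most a constant) then gives
\[
\inf_{\hat Q_\tau}\sup_{\mathcal{P}^L}\mathbb{E}\!\left[\|\hat Q_\tau-Q_\tau\|_2^2\right]\;\gtrsim\;L^2 h^2.
\]
Optimising the bandwidth by choosing $h\asymp L^{-2/(d+2)}n^{-1/(d+2)}$ (which also makes the KL budget $O(1)$) yields the claimed rate $\Omega\!\bigl(L^{2d/(d+2)}n^{-2/(d+2)}\bigr)$. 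Finally, since $\|\hat Q_\tau-Q_\tau\|_2^2=\mathbb{E}_X[(\hat Q_\tau(X)-Q_\tau(U\mid X))^2]$ under the uniform $X$, this is exactly the pointwise-in-expectation risk appearing in Theorem~\ref{thm:consistent}, matching the upper bound up to constants.
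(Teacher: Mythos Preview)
Your proposal is correct and shares the same reduction as the paper: since the noise is Gaussian with known variance, $Q_\tau(U\mid X=x)=\mu(x)+\Phi^{-1}(\tau)$, so quantile estimation in squared loss is equivalent to estimating the $L$-Lipschitz regression function $\mu$. From there, the paper simply invokes Theorem~3.2 of \citet{gyorfi2002distribution} as a black box (restated as Lemma~\ref{lem:lowerbound_Lipschitz}) and then verifies that $\mathcal{P}^L$ satisfies Assumption~\ref{assm}. You instead reprove that cited result from scratch via the cube-partition bump construction and Assouad's lemma. This is the same construction Gy\"orfi et al.\ use (the paper even alludes to it in the paragraph preceding the proof), so your argument is not a genuinely different route but rather an explicit unpacking of the cited lower bound. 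The benefit of your version is self-containment and transparency about where the rate $L^{2d/(d+2)}n^{-2/(d+2)}$ comes from; the paper's version is shorter and avoids repeating standard material. One minor omission: the paper additionally checks that $\mathcal{P}^L$ satisfies Assumption~\ref{assm} (to confirm the lower bound applies to the same class for which the upper bound holds), which you do not mention but is immediate.
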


Our result here provides a guarantee for the estimation of $Q_{\tau}(U|X)$ and thus a positive result for individual calibration with respect to a specific $\tau.$ We note that the result does not contradict the negative results on individual calibration \citep{zhao2020individual, lei2014distribution}. \citet{zhao2020individual} measure the quality of individual calibration via the closeness of the distribution $\hat{F}_{X}(Y)$ to the uniform distribution. In fact, such a measurement is only meaningful for a continuous distribution of $Y|X$, but they prove the impossibility result based on a discrete distribution of $Y|X$. So, their negative result on individual calibration does not exclude the possibility of individual calibration for a continuous $Y|X$. \citet{lei2014distribution} require an almost surely guarantee based on finite observations and prove an impossibility result. This is a very strong criterion; our results in Theorem \ref{thm:consistent} are
established for two weaker but more practical settings: either the strong consistency in the asymptotic case as $n\rightarrow\infty$ or a mean squared error guarantee under finite-sample. We defer more discussions to Appendix \ref{apd:impossibility_result}.

\subsection{Regression calibration}
\label{subsec:dimension_reduction}

Now we return to regression calibration and build a calibrated model from scratch. Specifically, Algorithm \ref{alg:NQE} splits the data into two parts; it uses the first part to train a prediction model and the second part to calibrate the model with Algorithm \ref{alg:SNQ}. The first part can be skipped if there is already a well-trained model $\hat{f}$ where it becomes a recalibration problem.

\begin{algorithm}[ht!]
    \caption{Nonparametric Regression Calibration}
    \label{alg:NQE}
    \begin{algorithmic}[1] 
    \Require Dataset $\{(X_i, Y_i)\}_{i=1}^n$, kernel choice $\kappa_h(\cdot, \cdot)$, $\tau$
    \State Split the dataset into half: $\mathcal{D}_1 = \{(X_i, Y_i)\}_{i=1}^{n_1 }$, $\mathcal{D}_2 = \{(X_i, Y_i)\}_{i=n_1 + 1}^{n}$.
    \State Use $\mathcal{D}_1$ to train a regression model $\hat{f}$.
    \State Calculate the estimation error of $\hat{f}$ on $\mathcal{D}_2$: $U_i = Y_i - \hat{f}(X_i), \enspace i = n_1+1, \cdots, n$.
    \State Run Algorithm \ref{alg:SNQ} on the data $\{(X_i, U_i)\}_{i=n_1 + 1}^{n}$ and obtain $\hat{Q}^{\text{SNQ}}_\tau(\cdot)$.
    \State Return $\hat{f}(\cdot) + \hat{Q}^{\text{SNQ}}_\tau(\cdot)$.
    \end{algorithmic}
\end{algorithm}

The theoretical property of Algorithm \ref{alg:NQE} can be established by combining the results from Section \ref{subsec:upperbound} with Proposition \ref{prop:prob}. In Algorithm \ref{alg:NQE}, the quantile calibration allows full flexibility in choosing the regression model $\hat{f}$ and does not require an associated uncertainty model to proceed. The motivation for calibrating the error $U_i$ instead of the original $Y_i$ is two-fold: First, if one treats $Y_i$ itself as $U_i$ and applies Algorithm \ref{alg:SNQ}, then it essentially restricts the prediction model to be a nonparametric one. Second, the reduction from $Y_i$ to $U_i$ gives a better smoothness of the conditional distribution (a smaller Lipschitz constant in Assumption \ref{assm} (a)). And thus it will give a faster convergence rate. We remark that Algorithm \ref{alg:NQE} induces an independence between the training of the prediction model and that of the calibration model. The design is intuitive and important because otherwise, it may result in predicting smaller confidence intervals both theoretically and empirically. 

\subsection{Implications from the nonparametric perspective}
 
The nonparametric approach gives a positive result in terms of the possibility of individual calibration, but it pays a price with respect to the dimensionality $d$. The following theorem states that such a dimensionality problem can be addressed under conditional independence. And more generally, it provides a guideline on which variables one should use to perform calibration tasks.

\begin{theorem}
\label{thm:Markovian}
Suppose $U = Y - \hat{f}(X)$. Suppose $Z  = m(X) \in \mathcal{Z}$, where $m$ is some measurable function, and $\mathcal{Z} \subset \mathcal{X}$ is a $d_0$-dimensional subspace of $\mathcal{X}$. If $X$ and $U$ are mutually independent conditioned on $Z$ (i.e. $X \rightarrow Z \rightarrow U$ is a Markov chain), then it is lossless to perform calibration using only $(U_i,Z_i)$'s. In particular, applying Algorithm \ref{alg:SNQ} on $(U_i,Z_i)$'s will yield the same consistency but with a faster convergence rate of $O(n^{-\frac{2}{d_0+2}})$.
\end{theorem}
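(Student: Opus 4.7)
The plan is to reduce the theorem to a direct application of Theorem \ref{thm:consistent} on the lower-dimensional pair $(Z_i, U_i)$ instead of $(X_i, U_i)$. The key observation is that the Markov assumption collapses the conditional distribution to one that depends on $X$ only through $Z$.

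First I would establish the identity
\[Q_\tau(U \mid X = x) = Q_\tau(U \mid Z = m(x)) \quad \text{for all } x \in \mathcal{X}, \tau \in (0,1),\]
which follows immediately from $U \perp X \mid Z$: the c.d.f.\ of $U \mid X=x$ equals that of $U \mid Z=m(x)$, so their quantiles agree. Consequently the oracle conditional quantile is a function $q_\tau(z) := Q_\tau(U \mid Z=z)$ on the lower-dimensional space $\mathcal{Z}$, and the target of calibration in terms of $X$ is recovered as $q_\tau(m(x))$. In particular, Proposition \ref{prop:prob} combined with this identity tells us that calibrating on $(Z_i, U_i)$ produces exactly the same ideal predictor as calibrating on $(X_i, U_i)$, which is the \emph{losslessness} claim.

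Next I would verify that the reduced problem $(Z, U)$ on $\mathcal{Z} \subset [0,1]^{d_0}$ still satisfies Assumption \ref{assm}, so that Theorem \ref{thm:consistent} applies. Boundedness transfers trivially since $q_\tau(z)$ takes values in $[-M, M]$. The density condition transfers because, under $U \perp X \mid Z$, the conditional density $p_{U \mid Z=z}(u)$ equals the common value of $p_{U \mid X=x}(u)$ over the fiber $m^{-1}(z)$, which is bounded below by $\underline{p}$ on the same neighborhood. For Lipschitzness, since $\mathcal{Z}$ is a $d_0$-dimensional subspace of $\mathcal{X}$, every $z \in \mathcal{Z}$ is itself an element of $\mathcal{X}$, so for any $z_1, z_2 \in \mathcal{Z}$,
\[\bigl|q_\tau(z_1) - q_\tau(z_2)\bigr| = \bigl|Q_\tau(U \mid X=z_1) - Q_\tau(U \mid X=z_2)\bigr| \leq L\,\|z_1 - z_2\|,\]
so the Lipschitz constant is preserved at $L$.

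Having checked the assumptions, I would simply invoke Theorem \ref{thm:consistent} on the dataset $(Z_i, U_i)_{i=1}^n$ with bandwidth $h = \Theta(L^{2/(d_0+2)} n^{-1/(d_0+2)})$, yielding consistency and the mean-squared-error rate $\tilde{O}(L^{2d_0/(d_0+2)} n^{-2/(d_0+2)})$ for $\hat{Q}^{\text{SNQ}}_\tau(Z) - q_\tau(Z)$; combining with the identity from the first step gives the same bound for $\hat{Q}^{\text{SNQ}}_\tau(m(X)) - Q_\tau(U \mid X)$. The main obstacle is conceptual rather than computational: one must carefully argue that conditional independence lets us replace $X$ by $Z$ not only in the quantile target but also in the density lower bound, and that the Lipschitz constant does not inflate under the embedding $\mathcal{Z} \hookrightarrow \mathcal{X}$. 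Everything else is a direct appeal to the already established nonparametric estimator guarantee.
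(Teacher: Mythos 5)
Your proposal is correct and follows essentially the same route as the paper: the key step in both is that conditional independence forces $F_{U|X=x} = F_{U|Z=m(x)}$, so the estimation target collapses to a function on the $d_0$-dimensional space $\mathcal{Z}$ and Theorem \ref{thm:consistent} applies there directly. You go slightly further than the paper by explicitly checking that Assumption \ref{assm} transfers to the reduced problem (the paper leaves this implicit), which is a worthwhile addition, though your Lipschitzness step tacitly uses $m(z)=z$ for $z\in\mathcal{Z}$, i.e.\ that $m$ is idempotent as a projection.
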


Theoretically, one can identify such $Z$ by independence test for each component of $X$ and $Y-\hat{f}(X)$ on the validation set. Some other practical methods such as random projection and correlation screening selection are shown in Section \ref{sec:num_exp}. When the conditional independence in Theorem \ref{thm:Markovian} does not hold, it is still not the end of the world if one calibrates $(U_i,Z_i).$ The following theorem tells that conditional calibration with respect to $Z$, as an intermediate between marginal calibration and individual calibration, can be achieved.

\begin{theorem}
\label{thm:subfeature}
Suppose $U = Y - \hat{f}(X)$ and $Z$ is a sub-feature of $X$. Without loss of generality, we assume that $Z = \Pi_{d_0}(X)$, where $\Pi_{d_0}$ projects from $\mathcal{X}$ onto a $d_0$-dimensional subspace $\mathcal{Z}$. Then
\[\mathbb{P}(U \leq Q_\tau(U|Z) \enspace \big| \enspace Z) = \tau,\]
which implies that
$\mathbb{P}(Y \leq \hat{f}(X) + Q_\tau(U|Z) \enspace \big| \enspace Z) = \tau.$
\end{theorem}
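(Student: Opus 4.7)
The plan is to view this as an essentially definitional statement and reduce it to the definition of a conditional quantile, with only a minor technical point regarding continuity of the conditional CDF at the quantile.

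First I would establish the first equality $\mathbb{P}(U \leq Q_\tau(U|Z)\,|\,Z) = \tau$. By construction $Q_\tau(U|Z)$ is a $Z$-measurable (in fact, a deterministic function of $Z$) random variable whose value on $\{Z=z\}$ is the $\tau$-quantile of the conditional distribution $U|Z=z$. Conditioning on $Z=z$, the event $\{U \leq Q_\tau(U|Z)\}$ becomes $\{U \leq Q_\tau(U|Z=z)\}$, whose conditional probability equals $F_{U|Z}(Q_\tau(U|Z=z)\,|\,z)$. By the definition of the quantile this is at least $\tau$, and equals $\tau$ whenever the conditional CDF is continuous at this point. The continuity follows from the existence of a conditional density for $U|Z$, which in turn is inherited from Assumption 1(c): $U|Z=z$ is a mixture over $x$ with $\Pi_{d_0}(x)=z$ of the conditional distributions $U|X=x$, each of which has a density in a neighborhood of its own quantile. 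I would briefly argue that the mixture therefore has a density on the relevant range, so that $F_{U|Z}(\cdot\,|\,z)$ is continuous at $Q_\tau(U|Z=z)$.

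Next, for the second equality, I would use the identity $U = Y - \hat{f}(X)$ to rewrite the event. Since this identity holds almost surely as an identity of random variables, the event $\{Y \leq \hat{f}(X) + Q_\tau(U|Z)\}$ coincides pointwise with $\{U \leq Q_\tau(U|Z)\}$. Taking conditional probabilities given $Z$ on both sides and applying the first equality yields
\[
\mathbb{P}\bigl(Y \leq \hat{f}(X) + Q_\tau(U|Z) \,\big|\, Z\bigr) = \mathbb{P}\bigl(U \leq Q_\tau(U|Z) \,\big|\, Z\bigr) = \tau,
\]
which is the claim. Note that although $\hat{f}(X)$ is not $Z$-measurable in general, it does not need to be, since the event has already been rewritten in terms of $U$ and $Q_\tau(U|Z)$ before taking the conditional expectation.

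The main (and only) obstacle is the continuity point in the first step; everything else is bookkeeping from the definitions. If one does not wish to invoke Assumption 1(c) here, the statement can equivalently be read as $\mathbb{P}(U \leq Q_\tau(U|Z)\,|\,Z)\geq \tau$ with equality whenever the conditional distribution of $U$ given $Z$ has no atom at its $\tau$-quantile, which is the customary qualification in the quantile literature and matches the standing regularity used elsewhere in the paper.
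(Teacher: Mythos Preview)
Your proposal is correct and follows essentially the same route as the paper: both establish $F_{U|Z}(Q_\tau(U|Z))\geq \tau$ from the definition of the quantile, then use continuity of $F_{U|Z}$ at the quantile (via existence of a conditional density) together with $F_{U|Z}(t)<\tau$ for $t<Q_\tau(U|Z)$ to obtain the reverse inequality, and finally invoke $U=Y-\hat f(X)$ for the second statement. The only cosmetic difference is that the paper simply \emph{assumes} $F_{U|Z}$ is absolutely continuous and writes out the $\epsilon$-argument via local Lipschitzness, whereas you try to derive the density of $U\mid Z$ from Assumption~1(c) through a mixture argument; that derivation is a bit loose (Assumption~1(c) is only local around $Q_\tau(U\mid X=x)$, not around $Q_\tau(U\mid Z=z)$), but the paper does not justify this point either, so your treatment is on par.
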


A notable implication from the above theorem is that if we want to calibrate the model against certain sub-features such as age, gender, geo-location, etc., we can summarize these features in $Z$ and use Algorithm \ref{alg:SNQ} with respect to $(U_i,Z_i)$.

However, if we return to the original pinball loss, the following theorem inherits from Proposition \ref{prop:pinball} and states that the inclusion of more variables will always be helpful in terms of improving the pinball loss. It highlights a trade-off between the representation ability of the quantile calibration model (inclusion of more variables)and its generalization ability (slowing down the convergence rate with higher dimensionality).

\begin{theorem}
\label{thm:trade_off}
Suppose $U = Y - \hat{f}(X)$. Suppose $Z^{(d_1)}$ and $Z^{(d_2)}$ are the first $d_1$ and $d_2$ components of $X$. Suppose $Q_\tau(F_{Z^{(d_i)}})$ is the true $\tau^{\text{-th}}$ quantile of $U$ conditioned on $Z^{(d_i)}$, $i=1,2$. Suppose $Q_\tau(F_{\text{margin}})$ is the true $\tau^{\text{-th}}$ marginal quantile of $U$. If $0 < d_1 < d_2 < d$, then 
\begin{align*}
\mathbb{E}[l_\tau(\hat{f}(X)+Q_\tau(U), Y)] &\geq \mathbb{E}[l_\tau(\hat{f}(X)+Q_\tau({U|Z^{(d_1)}}), Y)]\\
&\geq \mathbb{E}[l_\tau(\hat{f}(X)+Q_\tau({U|Z^{(d_2)}}), Y)]\geq \mathbb{E}[l_\tau(\hat{f}(X) + Q_\tau(U|X), Y)].
\end{align*}
\end{theorem}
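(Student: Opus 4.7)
The plan is to reduce the four-term inequality to a single application of the tower property combined with the quantile-minimization property stated in Proposition \ref{prop:pinball}. First, I would exploit the translation invariance of the pinball loss: since $l_\tau(u_1+c, u_2+c) = l_\tau(u_1, u_2)$ for any constant $c$, shifting by $\hat{f}(X)$ turns every term of the form $\mathbb{E}[l_\tau(\hat{f}(X)+q, Y)]$ into $\mathbb{E}[l_\tau(q, U)]$, where $U = Y - \hat{f}(X)$. This removes $\hat{f}(X)$ from the problem and reduces the claim to the chain
\[\mathbb{E}[l_\tau(Q_\tau(U),U)] \geq \mathbb{E}[l_\tau(Q_\tau(U|Z^{(d_1)}),U)] \geq \mathbb{E}[l_\tau(Q_\tau(U|Z^{(d_2)}),U)] \geq \mathbb{E}[l_\tau(Q_\tau(U|X),U)].\]

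Next I would re-read Proposition \ref{prop:pinball} as a conditional statement: for any sub-$\sigma$-algebra $\mathcal{G}$, the conditional quantile $Q_\tau(U|\mathcal{G})$ is the $\mathcal{G}$-measurable minimizer of the conditional pinball risk, i.e.,
\[\mathbb{E}[l_\tau(Q_\tau(U|\mathcal{G}),U) \mid \mathcal{G}] \le \mathbb{E}[l_\tau(g,U)\mid \mathcal{G}]\]
almost surely for every $\mathcal{G}$-measurable random variable $g$. Taking expectations and invoking the tower property gives the unconditional inequality $\mathbb{E}[l_\tau(Q_\tau(U|\mathcal{G}),U)] \le \mathbb{E}[l_\tau(g,U)]$ for all such $g$.

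Now I would apply this three times along the nested filtration $\sigma(Z^{(d_1)}) \subset \sigma(Z^{(d_2)}) \subset \sigma(X)$, which holds because $Z^{(d_1)}$ is a prefix of $Z^{(d_2)}$, which is in turn a prefix of $X$. Concretely: the constant $Q_\tau(U)$ is trivially $\sigma(Z^{(d_1)})$-measurable, so the inequality with $\mathcal{G}=\sigma(Z^{(d_1)})$ and $g=Q_\tau(U)$ yields the first bound; $Q_\tau(U|Z^{(d_1)})$ is $\sigma(Z^{(d_2)})$-measurable, so $\mathcal{G}=\sigma(Z^{(d_2)})$ and $g=Q_\tau(U|Z^{(d_1)})$ gives the second; and $Q_\tau(U|Z^{(d_2)})$ is $\sigma(X)$-measurable, so $\mathcal{G}=\sigma(X)$ and $g=Q_\tau(U|Z^{(d_2)})$ gives the third. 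Chaining these three inequalities produces the desired statement.

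The main (mild) obstacle is formalizing the conditional-minimization property of the quantile, since Proposition \ref{prop:pinball} is stated pointwise. I would handle this by noting that $\argmin_y \mathbb{E}[l_\tau(y,U)\mid \mathcal{G}=\cdot]$ admits the measurable selection $Q_\tau(U|\mathcal{G})$, so plugging in a generic $\mathcal{G}$-measurable $g$ and integrating against the distribution of the conditioning variable yields the conditional inequality above. Apart from this measurability bookkeeping, the argument is a pure tower-property calculation with no technical difficulty.
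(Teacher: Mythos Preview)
Your proposal is correct and follows essentially the same route as the paper: both arguments use the translation invariance $l_\tau(u+c,v+c)=l_\tau(u,v)$ to strip off $\hat{f}(X)$, invoke the conditional-minimization property of the quantile (Proposition~\ref{prop:pinball} read conditionally), and then apply the tower property along the nested filtration $\sigma(Z^{(d_1)})\subset\sigma(Z^{(d_2)})\subset\sigma(X)$. The paper proves only the middle inequality and says the others are analogous, whereas you spell out all three instantiations; otherwise the two proofs coincide.
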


\section{Numerical Experiments}
\label{sec:num_exp}

In this section, we evaluate our methods against a series of benchmarks. We first introduce the evaluation metrics that are widely considered in the literature of uncertainty calibration and conformal prediction, including Mean Absolute Calibration Error (MACE) which calculates the average absolute error for quantile predictions from 0.01 to 0.99; Adversarial Group Calibration Error (AGCE) which finds the sub-group of the test data with the largest MACE; Check Score which shows the empirical pinball loss; Length which measures the average length for the constructed 0.05-0.95 intervals; and Coverage which reflects the empirical coverage rate for those 0.05-0.95 intervals. Formal definitions of these calibration measurements are provided in Appendix \ref{subapd:EvalMetric}.

The benchmark methods are listed as follows (the details can be found in Appendix \ref{subapd:benchmark}). For the Gaussian-based methods, we implement the vanilla Heteroscedastic Neural Network (HNN) \citep{kendall2017uncertainties}, Deep Ensembles (DeepEnsemble) \citep{lakshminarayanan2017simple}, and Maximum Mean Discrepancy method (MMD) \citep{cui2020calibrated}. We also implement MC-Dropout (MCDrop) \citep{gal2016dropout} and Deep Gaussian Process model (DGP) \citep{damianou2013deep}. For models that do not rely on Gaussian assumption, we implement the combined calibration loss (CCL) introduced in Sec. 3.2 of \citep{chung2021beyond}. We also implement post-hoc calibration methods, such as isotonic regression (ISR) suggested by \citep{kuleshov2018accurate} with a pre-trained HNN model. Another post-hoc method named Model Agnostic Quantile Regression (MAQR) method in Sec. 3.1 of \citet{chung2021beyond} is also implemented. For conformal prediction algorithms, we implement Conformalized Quantile Regression (CQR) \citep{romano2019conformalized} and Orthogonal Quantile Regression (OQR) \citep{feldman2021improving}.

As our proposed nonparametric regression calibration (NRC) algorithm is agnostic with respect to the underlying prediction model, we implement a feed-forward neural network and a random forest model as the prediction model for NRC. Also, we apply the dimension reduction idea (called NRC-DR) including random projection and covariate selection where all technical details are given in Appendix \ref{subapd:DimRed} and \ref{subapd:uciDetail}.  All codes are available at \url{https://github.com/ZhongzeCai/NRC}.

In the following subsections, we summarize our results on UCI datasets, time series, and high-dimensional datasets. Additional experiments on covariate shift are deferred to \ref{subapd:CovShiftBody}.

\subsection{UCI dataset experiments}

We evaluate our NRC algorithms on the standard 8 UCI datasets \citet{Dua:2019}. Part of the experiment results on representative benchmark models is given in Table \ref{tab:UCI-8}, and the full results are presented in Appendix \ref{subapd:uciDetail} due to space limit. Additional details on dataset description, hyperparameter selection, and early stopping criteria are given in Appendix \ref{subapd:uciDetail}. 

\begin{table*}[ht!]
    \centering
\caption{Experiments on the standard UCI datasets. Each experiment (for a combination of dataset and method) is repeated 5 times in order to estimate the range of value fluctuation of each metric. On each dataset, the minimum loss for each metric is marked in bold and red, while we omit the comparisons on the Length and the Coverage due to the difficulty of trading off between sharper intervals and better coverage intervals. Full experiment details are deferred to Appendix \ref{apd:exp_setting}.}
    \resizebox{1.\textwidth}{!}{%
    \begin{tabular}{c c c c c c c c c}
\toprule

    Dataset & Metric & MCDrop & DeepEnsemble & ISR & MAQR & CQR & NRC & NRC-DR\\
\midrule
   \multirow{4}{*}{Boston}  
   & MACE  & 0.14$\pm$0.03 & 0.06$\pm$0.04 &  0.057$\pm$0.03 & 0.051$\pm$0.02 & 0.049$\pm$0.02 & \textcolor{red}{\textbf{0.048$\pm$0.02}} & 0.055$\pm$0.02\\
   & AGCE  & 0.22$\pm$0.04 & 0.27$\pm$0.07 &  0.27$\pm$0.08 &  0.25$\pm$0.06 &  0.24$\pm$0.06 & 0.19$\pm$0.05 & \textcolor{red}{\textbf{0.19$\pm$0.05}}\\
   & CheckScore & 1.9$\pm$0.2 & 1$\pm$0.3 & 0.95$\pm$0.2 & 1.1$\pm$0.2 & \textcolor{red}{\textbf{0.72$\pm$0.1}} & 1.1$\pm$0.2 & 1.1$\pm$0.2\\
   &Length(Coverage)  &  8.6$\pm$1(95\%) & 8.5$\pm$1(90\%) & 7.9$\pm$1(88\%)  & 9.1$\pm$1(88\%) & 7.1$\pm$0.7(82\%) & 10$\pm$1(94\%) & 9.7$\pm$1(91\%)\\

\midrule
   \multirow{4}{*}{Concrete}  
   & MACE  & 0.078$\pm$0.02 & 0.053$\pm$0.03 & 0.044$\pm$0.02 & 0.049$\pm$0.03 & 0.059$\pm$0.03 & 0.047$\pm$0.02 & \textcolor{red}{\textbf{0.038$\pm$0.03}}\\
   & AGCE & 0.2$\pm$0.04 & \textcolor{red}{\textbf{0.18$\pm$0.03}} & 0.19$\pm$0.05 & 0.18$\pm$0.06 & 0.23$\pm$0.05 & 0.18$\pm$0.05 & 0.19$\pm$0.04
\\
   & CheckScore & 3.6$\pm$0.2 & 1.9$\pm$0.3 & 2.3$\pm$0.6 & 2$\pm$0.2 & 1.72$\pm$0.6 & \textcolor{red}{\textbf{1.4$\pm$0.2}} & 2$\pm$0.4
\\
    &Length(Coverage)  &  25$\pm$4(95\%) & 19$\pm$1(93\%) & 20$\pm$0.8(89\%)  & 17$\pm$2(79\%) & 
 20$\pm$0.7(89\%) & 21$\pm$0.8(90\%) & 19$\pm$0.9(92\%)\\

\midrule
   \multirow{4}{*}{Energy}
   & MACE  & 0.16$\pm$0.02 & 0.065$\pm$0.04 & 0.055$\pm$0.03 & 0.063$\pm$0.02 & 0.075$\pm$0.03 &  \textcolor{red}{\textbf{0.037$\pm$0.008}} & 0.04$\pm$0.01
 \\
   & AGCE &  0.21$\pm$0.01& \textcolor{red}{\textbf{0.19$\pm$0.04}} & 0.21$\pm$0.07 & 0.21$\pm$0.04 & 0.18$\pm$0.008 & 0.22$\pm$0.06 & 0.24$\pm$0.08\\
   & CheckScore & 1.4$\pm$0.3 & 0.64$\pm$0.09 & 0.6$\pm$0.1 & 0.6$\pm$0.2 & \textcolor{red}{\textbf{0.14$\pm$0.03}} &  0.18$\pm$0.03 & 0.69$\pm$0.2\\
    &Length(Coverage)  &  9$\pm$4(93\%) & 7.7$\pm$0.6(94\%)  & 7.1$\pm$2(95\%)  & 6.2$\pm$3(84\%) & 
 1.8$\pm$0.3(95\%) & 4.5$\pm$1(88\%) & 1.5$\pm$0.1(89\%)\\

\midrule
   \multirow{4}{*}{Kin8nm}
   & MACE & 0.034$\pm$0.01 & 0.077$\pm$0.05 & 0.013$\pm$0.005 & 0.032$\pm$0.02 &  0.041$\pm$0.02 & \textcolor{red}{\textbf{0.013$\pm$0.004}} & 0.017$\pm$0.007

 \\
   & AGCE & 0.075$\pm$0.01 & 0.12$\pm$0.06 & 0.072$\pm$0.02 & 0.072$\pm$0.02 &  0.075$\pm$0.02 & 0.067$\pm$0.02 & \textcolor{red}{\textbf{0.056$\pm$0.02}}\\
   & CheckScore& 0.045$\pm$0.002 & \textcolor{red}{\textbf{0.027$\pm$0.002}} & 0.031$\pm$0.001 & 0.029$\pm$0.001 & 0.025$\pm$0.0009 &  0.032$\pm$0.002 & 0.032$\pm$0.002
\\
    &Length(Coverage)  &  0.5$\pm$0.02(95\%) & 0.29$\pm$0.006(93\%) & 0.28$\pm$0.007(90\%)  & 0.27$\pm$0.01(87\%) & 
 0.26$\pm$0.005(89\%) & 0.25$\pm$0.003(90\%) & 0.52$\pm$0.007(90\%)\\

\midrule
   \multirow{4}{*}{Naval}
   & MACE &  0.1$\pm$0.03 & 0.12$\pm$0.05 & \textcolor{red}{\textbf{0.011$\pm$0.005}} & 0.043$\pm$0.02 &  0.077$\pm$0.07 &  0.012$\pm$0.005 & 0.017$\pm$0.007
 \\
   & AGCE & 0.15$\pm$0.05 & 0.14$\pm$0.06 & 0.054$\pm$0.009 & 0.081$\pm$0.01 & 0.098$\pm$0.05 & \textcolor{red}{\textbf{0.052$\pm$0.008}} & 0.052$\pm$0.008\\
   & CheckScore & 0.0018$\pm$0.0006 & \textcolor{red}{\textbf{0.00064$\pm$0.0002}} & 0.00079$\pm$0.0003 & 0.0021$\pm$0.0007 &  0.0011$\pm$0.0009 &   0.00088$\pm$0.0001 & 0.002$\pm$0.0007
\\
   &Length(Coverage)  &  0.02$\pm$0.002(94\%) & 0.015$\pm$0.0003(100\%) & 0.0078$\pm$0.0008(90\%)  & 0.017$\pm$0.003(84\%) & 
 0.015$\pm$0.002(98\%) & 0.0097$\pm$0.002(91\%) & 0.0088$\pm$0.0006(91\%)\\

\midrule
   \multirow{4}{*}{Power}
   & MACE &  0.29$\pm$0.01 & 0.045$\pm$0.03 & 0.011$\pm$0.006 & 0.029$\pm$0.02 & 0.049$\pm$0.005 &  \textcolor{red}{\textbf{0.0086$\pm$0.003}} & 0.0086$\pm$0.003
 \\
   & AGCE&  0.3$\pm$0.01 & 0.075$\pm$0.03 & 0.065$\pm$0.02 &  0.06$\pm$0.01 &  0.071$\pm$0.02 & 0.057$\pm$0.007 & \textcolor{red}{\textbf{0.053$\pm$0.008}}\\
   & CheckScore & 24$\pm$3 & 1.3$\pm$0.1 & 1.3$\pm$0.1 & 1.2$\pm$0.03 & 1.1$\pm$0.03 & \textcolor{red}{\textbf{1$\pm$0.03}} & 1.2$\pm$0.05\\
   &Length(Coverage)  &  20$\pm$3(97\%) & 16$\pm$0.5(97\%) & 13$\pm$0.8(90\%)  & 12$\pm$0.8(88\%) & 
 13$\pm$0.7(93\%) & 13$\pm$0.3(91\%) & 11$\pm$0.1(90\%)\\

\midrule
   \multirow{4}{*}{Wine}
   & MACE & 0.036$\pm$0.02 & 0.049$\pm$0.03 & 0.016$\pm$0.009 & 0.029$\pm$0.02 &  0.056$\pm$0.007
 &  \textcolor{red}{\textbf{0.016$\pm$0.01}} & 0.017$\pm$0.006

 \\
   & AGCE &  0.1$\pm$0.03 & 0.12$\pm$0.05 & 0.091$\pm$0.02 & 0.094$\pm$0.03 & 0.1$\pm$0.06 &  0.075$\pm$0.03 & \textcolor{red}{\textbf{0.075$\pm$0.02}}\\
   & CheckScore& 0.2$\pm$0.01 & 0.2$\pm$0.009 & 0.2$\pm$0.01 &  0.2$\pm$0.02 &  0.21$\pm$0.02
 & \textcolor{red}{\textbf{0.19$\pm$0.01}} & 0.21$\pm$0.01\\
    &Length(Coverage)  &  2.4$\pm$0.09(90\%) & 2.2$\pm$0.02(88\%) & 2.2$\pm$0.03(87\%)  & 2.3$\pm$0.1(84\%) & 
 2.3$\pm$0.04(89\%) & 2.3$\pm$0.04(90\%) & 2.2$\pm$0.02(90\%)\\

\midrule
   \multirow{4}{*}{Yacht}
   & MACE &  0.11$\pm$0.03 & \textcolor{red}{\textbf{0.049$\pm$0.02}} & 0.06$\pm$0.03 & 0.088$\pm$0.04 & 0.073$\pm$0.05 &  0.096$\pm$0.05 & 0.087$\pm$0.04
 \\
   & AGCE &  \textcolor{red}{\textbf{0.29$\pm$0.08}} & 0.35$\pm$0.06 & 0.34$\pm$0.09 & 0.31$\pm$0.07 & 0.34$\pm$0.04  &  0.34$\pm$0.07 & 0.31$\pm$0.04\\
   & CheckScore& 1.9$\pm$0.6 & 1.1$\pm$0.4 & 1.5$\pm$0.6 & 0.32$\pm$0.09 & 0.19$\pm$0.07 &   \textcolor{red}{\textbf{0.18$\pm$0.03}} & 0.4$\pm$0.2\\
   &Length(Coverage)  &  18$\pm$5(94\%) & 10$\pm$1(91\%) & 10$\pm$0.9(88\%)  & 3.3$\pm$1(83\%) & 
 1.7$\pm$0.4(72\%) & 5.3$\pm$1(93\%) & 3.2$\pm$1(91\%)\\

   \bottomrule
\end{tabular}
}
\label{tab:UCI-8}
\end{table*}

The result supports the competitiveness of our algorithm. In terms of MACE, our methods achieve the minimum loss on 6 out of 8 datasets, with the non-optimal results close to the best performance seen on other benchmarks. The rest metrics also witness our algorithms outperforming the benchmark models, with a superior result on 6 datasets in terms of AGCE, and 4 in terms of CheckScore. Note that even for the rest 4 datasets where our NRC algorithm does not achieve the best CheckScore (the empirical pinball loss), our performance is still competitive compared to the best score, while for some specific datasets (such as Energy and Yacht) we achieve a significant improvement over the CheckScore, which verifies the potential of our methods for the related downstream tasks. As for the Length metric (which evaluates the average interval length of a symmetric $90\%$ confidence interval), it is not easy to trade-off between the sharpness requirement and the coverage rate due to the fact that one can trivially sacrifice one for the other, so we omit the direct comparison. However, we can still observe that our NRC algorithm behaves reasonably well, with a stable coverage rate performance and a relatively sharp interval length amongst the rest benchmarks.

\subsection{Bike-sharing time series dataset}

This experiment explores the potential of our algorithm to quantify the uncertainty of the time series objective, where independent variables are aggregated in a sliding window fashion from the past few days. The Bike-Sharing dataset provided in \citet{fanaee2014event} is used to visually evaluate our proposed algorithm. Due to the limitations in the representation ability of the feed-forward structure, we instead deploy LSTM networks for the time series data. We design LSTM-HNN, which is an ordinary LSTM network followed by a fully connected layer with $2$ output units; and LSTM-NRC, whose regression model shares the same LSTM setting as LSTM-HNN except for the ending layer having only one output unit (without variance/uncertainty unit). For more details, see Appendix \ref{subapd:BikeEXPdetails}.

\begin{figure}[ht!]
\centering
\includegraphics[width=1.0\textwidth]{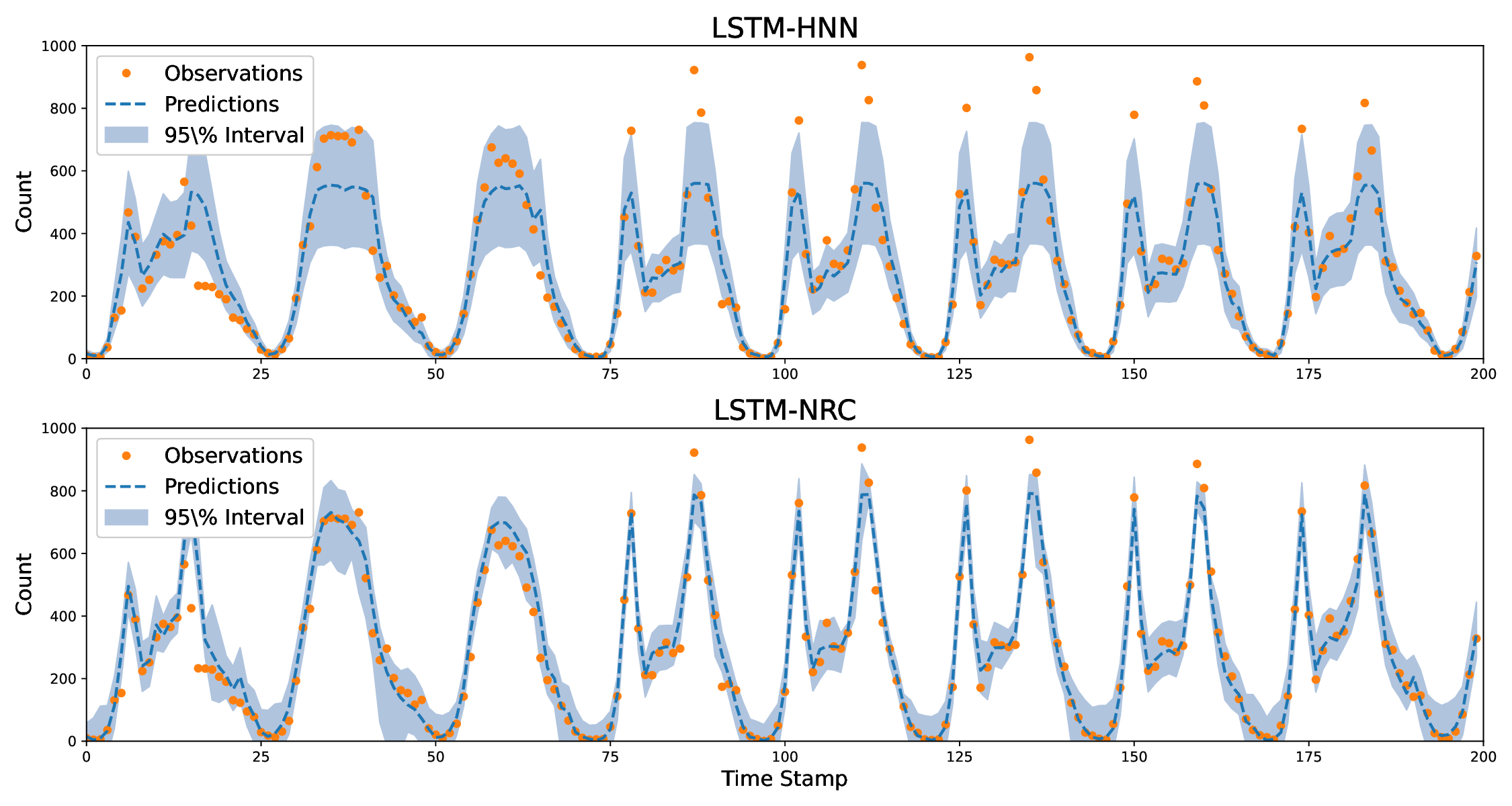}
\caption{Experiments on the Bike-Sharing dataset. The demonstrated time stamps are consecutively taken from the test set, thus temporally separated from the training set. The top figure gives the predicted $95\%$ confidence intervals of an LSTM-HNN model, and the bottom figure is produced from a random projection variant of our NRC method (LSTM-NRC). The target variable shows strong periodicity, which both two models capture. Nevertheless, our NRC method delivers sharper confidence intervals and is better adapted to the underlying distribution shift. In particular, our method also has a better coverage rate on the extreme values at time stamps between 100 and 200.}
\label{fig:TSplot}
\end{figure}

Figure \ref{fig:TSplot} visually compares the two models on the $95\%$ confidence interval prediction task on the test set (see Appendix \ref{subapd:BikeEXPdetails} for how the test set is split out). A first observation is that our LSTM-NRC is producing sharper and better calibrated $95\%$ intervals. Also, in spite of the satisfying coverage rate of both the methods on time range 0 to 100 (time stamp -1 is exactly the terminating point of the training set, thus time range 0 to 100 roughly denotes the near future of the historical events), on time range 100 to 200 the behaviors of the two models diverge. After an evolution of 100 time stamps the target variable (rental count per hour) is under a shift in distribution: the peak values get higher than historical records would foresee. Confidence intervals produced by LSTM-HNN fail to cover these "extremer" peaks, while our LSTM-NRC method could easily adapt to it, taking advantage of a high-quality regression model. Note that both LSTM-HNN and LSTM-NRC share the same regression architecture, we argue that the additional variance unit of LSTM-HNN may hurt the mean prediction performance of the model.

\subsection{Higher-dimensional datasets}

One major drawback of the non-parametric type methods is its unsatisfying dependence on the dimension of the data. Our paper attacks such an obstacle from two aspects simultaneously: on the one hand, we subtract a regression model from the original data to gain a smaller Lipschitz constant $L$; on the other hand, we apply several dimensional reduction methods to reduce the data dimension manually. In this paper, we mainly implement three of them: random projection, covariate selection, and second-to-last layer feature extraction, of which the details can be found in Appendix \ref{subapd:DimRed}. We call the variants of our NRC algorithm NRC-RP, NRC-Cov, and NRC-Embed, respectively. We examine them against several high-dimensional datasets: the medical expenditure panel survey (MEPS) datasets \citep{meps19, meps20, meps21}, as suggested in \citep{romano2019conformalized}. The datasets' details are also deferred to Appendix \ref{subapd:DimRed}.

\begin{table*}[ht!]
    \centering
    \resizebox{\columnwidth}{!}{%
    \begin{tabular}{c c c c c c c c}
\toprule

    Dataset & Metric & DeepEnsemble & ISR & CQR & OQR & NRC-RP & NRC-Embed\\
\midrule
   \multirow{4}{*}{meps\_19}  
   &MACE &  0.26$\pm$0.04 & 0.12$\pm$0.05 & 0.05$\pm$0.01 & 0.055$\pm$0.02 &  0.024$\pm$0.01 & \textcolor{red}{\textbf{0.0087$\pm$0.006}}\\
   &AGCE  &  0.26$\pm$0.04 & 0.14$\pm$0.04 & 0.079$\pm$0.01 & 0.091$\pm$0.03  & 0.062$\pm$0.02 &  \textcolor{red}{\textbf{0.04$\pm$0.02}}\\
   &CheckScore  &  25$\pm$7 & 10$\pm$6 & \textcolor{red}{\textbf{2.4$\pm$0.2}} & 2.5$\pm$0.2 & 3.2$\pm$0.4 & 2.4$\pm$0.7    \\ 
   &Length(Coverage)  & 230$\pm$140(99\%) & 50$\pm$6(93\%) & 28$\pm$2(89\%) & 25$\pm$2(85\%) & 35$\pm$8(90\%) & 24$\pm$0.5(90\%)\\

\midrule
   \multirow{4}{*}{meps\_20}  
   & MACE  & 0.25$\pm$0.02  & 0.16$\pm$ 0.08 & 0.046$\pm$0.01 & 0.057$\pm$0.024 & \textcolor{red}{\textbf{0.0052$\pm$0.004}} &  0.0058$\pm$0.005\\
   & AGCE  & 0.25$\pm$0.02 & 0.18$\pm$0.08 & 0.061$\pm$0.01 & 0.08$\pm$0.03 & \textcolor{red}{\textbf{0.033$\pm$0.01}} & 0.041$\pm$0.007\\
   & CheckScore  & 23$\pm$8 & 7.4$\pm$1 & 2.5$\pm$0.1 & 2.5$\pm$0.1 & 3$\pm$0.4 & \textcolor{red}{\textbf{2.4$\pm$0.08}} \\
   & Length(Coverage) & 190$\pm$50(99\%) & 69$\pm$38(94\%) & 28$\pm$2(89\%) & 27$\pm$4(85\%) & 32$\pm$5(90\%) & 27$\pm$3(91\%) \\

\midrule
   \multirow{4}{*}{meps\_21}
   & MACE  & 0.25$\pm$0.02 & 0.13$\pm$0.1 & 0.05$\pm$0.02 & \textcolor{red}{\textbf{0.0087$\pm$0.004}} & 0.012$\pm$0.002 & 0.0088$\pm$0.005\\
   & AGCE  & 0.26$\pm$0.02 & 0.15$\pm$0.1 & 0.063$\pm$0.008 & 0.075$\pm$0.02 & 0.045$\pm$0.02 & \textcolor{red}{\textbf{0.041$\pm$0.01}}\\
   & CheckScore  & 36$\pm$20 & 23$\pm$10 & \textcolor{red}{\textbf{2.5$\pm$0.3}} & 2.6$\pm$0.3 & 3.5$\pm$0.7 & 2.7$\pm$1\\
   & Length(Coverage) & 160$\pm$40(99\%) & 43$\pm$20(93\%) & 27$\pm$3(86\%) & 29$\pm$3(89\%) & 39$\pm$3(90\%) & 24$\pm$2(91\%) \\

   \bottomrule
\end{tabular}
}
\caption{Experiments on MEPS Datasets}
\label{tab:HighDimTable}
\end{table*}

In Table \ref{tab:HighDimTable} we summarize the dimension reduction variants of our NRC algorithm against several benchmarks. The result sees a substantial advantage of our method over the selected benchmark methods, while most of the unshown benchmarks either are too computationally expensive to run or do not deliver competitive results on the high-dimensional dataset.

\section{Conclusion}
In this paper, we propose simple nonparametric methods for regression calibration, and through its lens, we aim to gain a better theoretical understanding of the problem. While numerical experiments show the advantage, we do not argue for a universal superiority of our methods. Rather, we believe the contribution of our work is first to give a positive result for individual calibration and an awareness of its importance, and second to show the promise of returning to simple statistical estimation approaches than designing further complicated loss functions without theoretical understanding. Future research includes (i) How to incorporate the uncertainty output from a prediction model with our nonparametric methods; (ii) While we do not observe the curse of dimensionality for our methods numerically and we explain it by the reduction of Lipschitzness constant, it deserves more investigation on how to adapt the method for high-dimensional data.

\section*{Acknowledgment}
The authors would like to thank Martin B. Haugh for bringing up the problem and the helpful discussions.

\bibliographystyle{informs2014} 
\bibliography{main.bib} 

\newpage
\appendix
\newpage

\section{Related Literature}
\label{apd:literature_review}

\subsection{Estimating the whole distribution}
The most straightforward thought is to estimate the whole distribution which accomplishes the task of mean regression and uncertainty quantification simultaneously. Generally, there are two ways to fit a distribution: either fitting a parametric model or turning to nonparametric methods. Representative parametric methods are Bayesian Neural Networks \citep{mackay1992practical} and Deep Gaussian Process (DGP) \citep{damianou2013deep}, of which the latter assumes the data is generated according to a Gaussian process of which the parameters are further generated by another Gaussian process. The major drawbacks of these Bayesian-type methods are their high computation cost during the training phase and statistical inconsistency under model misspecification. To resolve the computational issue, \citep{gal2016dropout} proposes a simple MC-Dropout method that captures the model uncertainty without changing the network structure. Despite the view of Bayesian, the task of distribution estimation also falls into the range of frequentists: by assuming the underlying distribution family (say, Gaussian), one can fulfill the task by estimating the moments or maximizing the likelihood. For example, the Heteroscedastic Neural Network (HNN) \citep{kendall2017uncertainties} gives an estimation of both mean and variance at the final layer by assuming the Gaussian distribution. An ensemble method called Deep Ensemble \citep{lakshminarayanan2017simple} is constructed on the basis of HNN. \citep{cui2020calibrated} designs an algorithm that finds the best Gaussian distribution to minimize the empirical Maximum Mean Discrepancy (MMD) from the observed data so as to ensure the marginal calibration property of quantiles. \citep{zhao2020individual} raises a randomized forecasting principle that minimizes the randomized calibration error with a negative log-likelihood regularization term to ensure the so-called sharpness. Although they claim the construction of (randomized) individual calibration (named PAIC) from the existence of mPAIC predictors, they do not provide any non-trivial mPAIC predictor except for the one that permanently predicts the uniform distribution.

The other approach to estimating the entire distribution is the nonparametric way without assuming the underlying distribution. For example, Kernel Density Estimation (KDE) utilizes kernel methods to estimate the conditional probability density function. Although the method enjoys certain theoretical consistency guarantees \citep{lei2014distribution, lei2018distribution, bilodeau2021minimax}, it is often computationally expensive and not sample-efficient in high dimensional cases, which makes it impractical for uncertainty quantification. A similar idea is applied in \citep{song2019distribution} via a post-hoc way. The method is to improve on some given distribution estimation via the Gaussian process and Beta link function. However, unlike the KDE literature, the method in \citep{song2019distribution} lacks a theoretical guarantee of its consistency.

\subsection{Estimating quantiles}
Due to the excessive and sometimes unnecessary difficulties in estimating the distribution, some researchers suggest estimating the quantiles only, since a satisfying quantile estimation suffices in many downstream tasks. The quantile prediction is now regarded as a new task, where one natural idea is to combine the average calibration error on quantiles with a sharpness regularization term. \citep{pearce2018high} appends a calibration error (with respect to coverage and sharpness) to the loss functions to obtain the calibrated prediction intervals at a pre-specified confidence level. \citep{thiagarajan2020building} induces an extra interval predictor with a similar combined loss. But their methods require the pre-determined confidence or quantile level and the whole model needs to be retrained if one wants another level of prediction. Such methods in \citep{pearce2018high, thiagarajan2020building} are only evaluated empirically without theoretical guarantee for even the marginal distribution. \citep{chung2021beyond} proposes an average calibration loss combined with a similar sharpness regularization term, which achieves an average (but not individually, as we will discuss later) calibration. Along with these models that aim to give a quantile prediction out of scratch, \citep{kuleshov2018accurate} develops a post-hoc way of calibrating (on average) a pre-trained quantile prediction model via isotonic regression methods. The aim is to choose a specific level of quantile prediction that reaches the desired marginal calibration of which the consistency relies solely on the pre-trained model in contrast to our model that applies a model-agnostic post-hoc calibration.

Another line of research is to use Kernelized Support Vector Machines (KSVM) to predict the conditional quantiles in one step, which fixes the inconsistency issue \citep{takeuchi2006nonparametric, steinwart2011estimating}. Essentially, methods from both papers search for a quantile prediction function over the Reproducing Kernel Hilbert Space (RKHS), say $\mathcal{H}$. Theoretically, \citet{takeuchi2006nonparametric} derive a performance guarantee for the conditional quantile estimator based on the assumption of finite $\mathcal{H}$-norm, which is in parallel with our Lipschitz class (Assumption \ref{assm} (a)). The Lipschitz function class and the bounded $\mathcal{H}$-norm function class overlap, but neither one is contained in the other. \citet{steinwart2011estimating} analyze the same algorithm but adopt a different approach. Instead of imposing a bounded $\mathcal{H}$-norm, they impose a decay rate for the eigenvalues of the kernel integral operator, alongside some other assumptions on the underlying distribution. Those works focus on the RKHS space while we focus on the Lipschitz space. For the Lipschitz function class, to the best of our knowledge, our work is the first result to provide an individual guarantee. From the practical point of view, both papers rely on solving a kernelized learning problem that cannot scale well with the sample size or dimension, while our algorithm is simple to implement and of low computational cost.

\subsection{Conformal prediction}
A very closely related field is conformal prediction, where the goal is to give a coverage set that contains the true label with a certain probability. Theoretically, the goal of conformal prediction is no harder than our quantile prediction task since correct quantiles induce desired coverage sets, while empirically different measurements are considered: the conformal prediction cares about giving the sharpest covering set rather than specifying which quantiles are estimated. For the regression problem, there is an ``impossible triangle'' summarized in a concurrent work \citep{gibbs2023conformal}: (i) a conditional/individual coverage (as our Definition \ref{def:individual}) (ii) no assumptions on the underlying distribution (iii) a finite-sample guarantee and asymptotic consistency. Such a triangle is shown by \citet{vovk2012conditional, foygel2021limits} to be impossible to achieve simultaneously. Some works in the conformal prediction literature aim to reconcile the impossibility: \citet{romano2019conformalized} consider the marginal/average case (as defined in Definition \ref{def:marginal}), which greatly relaxes (i). \citet{gibbs2023conformal} violates (i) in a milder but different way, focusing on a midway between marginal coverage and conditional coverage defined by the linear function class.

The most related paper to ours may be \citet{feldman2021improving}: both theirs and ours attempt to address the minimum cost of keeping the requirement (i). Before diving into the detailed comparison, we summarize the main difference first: \citet{feldman2021improving} relaxes the remaining two, while our paper keeps (iii) and relaxes (ii) in a much milder way. \citet{feldman2021improving} achieves a theoretical guarantee that the true conditional quantile is the minimizer of the regularized population loss under a realizability assumption (that is, the true conditional quantiles lie in the function class considered) and an assumption that the conditional distribution is continuous with respect to features. However, its theoretical results have several limitations. First, it achieves only asymptotic consistency compared to our finite-sample guarantee in Theorem \ref{thm:consistent}, which means that \citet{feldman2021improving} violates requirement (iii). Besides, it only achieves a necessary condition that the true conditional quantile is the minimizer of the population loss but not vice versa. Second, \citet{feldman2021improving} violates (ii) to a much deeper content than ours. Note that the key assumptions made in \citet{feldman2021improving} are the zero-approximation-error assumption and the continuity assumption. The former means that there should be no model misspecification at all, which is not a weak one in the statistical learning theory and limits the guarantee from a desirable ``distribution-free'' property. On the contrary, our algorithm can have a finite-sample guarantee with only three very general assumptions summarized in Assumption \ref{assm}. Furthermore, our Lipschitz assumption is made only to achieve a finite-sample guarantee (of which the necessity is supported by Theorem \ref{thm:lowerbound_nonsmooth}), while it can still be relaxed to a weaker one than \citet{feldman2021improving} to reach the same asymptotic result. To see that, our result only requires the conditional quantile to be continuous, while \citet{feldman2021improving} requires continuity for the whole conditional distribution. Third, the regularization term in \citet{feldman2021improving} is related to a zero-one indicator, which is discontinuous for gradient descent. The authors twist their algorithm by replacing it with a smooth approximation to overcome the issue without any formal guarantee on the twisted one. In contrast, our theory accompanies the exact same algorithm that we propose.

Our paper can also serve as a starting point for understanding the empirical success of the ``split'' procedure that is common among the conformal prediction algorithms. As summarized before, most conformal prediction methods theoretically abandon requirement (i) with only a marginal coverage guarantee. But a tricky point is that if one only needs the marginal coverage, it suffices to merely use the empirical quantile without the splitting step at all. Why does the learner bother to ``split''? Empirically, the splitting procedure helps to obtain a more homogeneous coverage across the whole feature space, while our paper is the first theoretical justification up to our knowledge: to reduce the Lipschitz constant $L$ and smooth the target. Such an intuition is justified by Theorem \ref{thm:consistent} and Theorem \ref{thm:lowerbound_Lipschitz} from both the upper and the lower bounds.

\section{Failure of Existing Algorithms in Individual Calibration}
\label{apd:failure_of_existing_methods}
In this subsection, we will give a brief discussion of the existing methods in regression calibration literature. We give a close inspection of the existing calibration algorithms, proving that they cannot achieve a desirable result (or even avoid naive and meaningless ones).

Some methods assume that the underlying distribution is Gaussian \citep{cui2020calibrated, zhao2020individual}. The most straightforward drawback one can think of is that these Gaussian-based methods are not consistent once the Gaussian assumption is violated. However, we will prove with some counter-examples that they cannot prevent the naive or meaningless prediction even if the underlying model is Gaussian.

\citet{cui2020calibrated} use a two-step procedure to estimate the conditional distribution, where at each step the underlying model is a neural network predicting the mean $\mu(x)$ and variance $\sigma^2(x)$. The first step is simply fitting the observed $\{(X_i, Y_i)\}_{i=1}^n$ into the model, getting an initial guess of $\mu$ and $\sigma$. The second step utilizes the maximum mean discrepancy (MMD) principle. To be more specific, the second step iterates the following steps until convergence: first generate $N$ new samples $\{\hat{Y}_i\}_{i=1}^N$ from the estimated Gaussian model $\mathcal{N}(\hat{\mu}(X_i), \hat{\sigma}^2(X_i))$, then update the model estimation $(\hat{\mu}, \hat{\sigma}^2$ to minimize the empirical MMD $\|\frac1N \sum_{i=1}^N \phi(Y_i) - \frac1N \sum_{i=1}^N \phi(\hat{Y}_i)\|$. But the problem is: the corresponding relation between each $\{(X_i, Y_i)\}$ pair is wiped out from the model by calculating the empirical MMD. In other words, the procedure produces the same result even if one permute $n$ estimated conditional distributions $\{(\hat{\mu}(X_i), \hat{\sigma}(X_i))\}$ arbitrarily. This can lead to a failure in predicting the true conditional distribution even under the Gaussian condition, especially when they use multi-layer neural networks which are well-known universal approximators that can fit any continuous function \citep{hornik1989multilayer}.

\citet{zhao2020individual} try to give a randomized prediction of the cdf (in their paper denoted by $h[x, r](\cdot)$, where $r$ is some $\mathrm{Unif}[0, 1]$ random variable). The idea behind this is that if one gets a good cdf estimation, then $\hat{F}_x(Y)$ should be close to the uniform distribution. So the target is to minimize the empirical absolute difference between $h[x, r](y)$ and $r$ (denoted by $L_1$). If $L_1 < \epsilon$ with probability at least $1-\delta$, then they call it $(\epsilon, \delta)$-probably approximately individual calibrated (PAIC). But minimizing $L_1$ alone cannot avoid being trapped by a naïve and meaningless calibration: $h[x, r](y) \equiv r$. So they combine the loss with another maximum likelihood type loss $L_2$ (negative logarithm of likelihood, by assuming the underlying distribution of $Y$ is Gaussian). But combining that loss cannot guarantee a meaningful calibration. To illustrate this, one just needs to consider the continuous distribution case, where each $X_i = x_i$ is only observed once almost surely. A global minimizer for the empirical loss $L_1 = \sum_{i=1}^n \frac{1}{n} |h[x_i, r_i](y_i) - r_i|$ is obtained if one predicts $h[x_i, r_i](\cdot)$ to be $\mathcal{N}(y_i - \sigma \Phi^{-1}(r_i), \sigma^2)$, where $\mathcal{N}$ is the Gaussian and $\Phi(\cdot)$ is the cdf of standard Gaussian. It can be easily checked that this Gaussian distribution’s $r_i^{\text{-th}}$ quantile is exactly $y_i$). The maximum likelihood type loss $L_2$ is now $\frac{1}{2}\log(\sigma^2) + \frac{1}{n}\sum_{i=1}^n \Phi^{-1}(r_i)^2 = \frac{1}{2}\log(\sigma^2) + \text{constant}$. Minimizing it means that the $\sigma^2$ term will asymptotically go to zero, leading to an overfit and meaningless result of $h[x_i, r_i](\cdot) \rightarrow \delta_{y_i}$. Again, this overfitting problem can be exacerbated by the fact that they train the model via a neural network, which is commonly believed to be highly overparametrized that can fit into any data nowadays. We must also point out that although \citet{zhao2020individual} claims that they prove the existence of an $(\epsilon, \delta)$-PAIC predictor, the existence is only guaranteed by the existence of another monotonically PAIC (where monotonicity requires $h[x,r](y)$ is monotonically increasing with respect to $r$) predictor (see their Theorem 1). But they do not provide the existence of any non-trivial mPAIC predictor other than the trivial solution $h[x,r](y) \equiv r$. Therefore, we think their argument is at least incomplete, if not suspicious.

For those existing methods that do not assume Gaussian distributions, unfortunately, they are still inconsistent from an individual calibration perspective \citep{kuleshov2018accurate, chung2021beyond}.

\citet{kuleshov2018accurate} develop an isotonic regression way to calibrate the quantiles in the regression setting, which is inspired by the widely used isotonic regression in the classification setting, hence is a recalibration method based on some initial estimation of all quantiles (say $\{\tilde{Q}\}_\tau = \hat{F}^{-1}(\tau)$). Then the authors propose an isotonic regression method that learns a monotonically increasing function $g(\tau_{\text{expected}}) = \tau_{\text{observed}}$, where $\tau_{\text{observed}} = \frac{1}{n} \sum_{i=1}^n \mathbbm{1}\{Y_i \leq \tilde{Q}_{\tau_{\text{expected}}}\}$. The final output is $\hat{Q}_\tau = \tilde{Q}_{g^{-1}(\tau)}$. The major drawback of their method is that it heavily relies on the initial calibration $\tilde{Q}_\tau$. Hence the final marginal (or individual) recalibration result is only valid when the function class $\{\tilde{Q}_\tau : \tau \in [0,1]\}$ contains a marginal (or individual) calibrated prediction. Despite the marginal case, such a realizability condition is too strong to obtain in the individual calibration set, leading to the failure of their recalibration methods in the individual case.

\citet{chung2021beyond} propose a combined loss in their Section 3.2 with respect to both average calibration and sharpness. The first term minimizes the calibration error: $L_1 = \mathbbm{1}\{\frac{1}{n} \sum_{i=1}^n \mathbbm{1}\{Y_i \leq \hat{Q}_\tau\} < \tau\} \cdot \frac{1}{n} \sum_{i=1}^n [(Y_i - \hat{Q}_\tau(X_i)) \mathbbm{1}\{Y_i > \hat{Q}_\tau(X_i)\}] + \mathbbm{1}\{\frac{1}{n} \sum_{i=1}^n \mathbbm{1}\{Y_i \leq \hat{Q}_\tau\} > \tau\} \cdot \frac{1}{n} \sum_{i=1}^n [(-Y_i + \hat{Q}_\tau(X_i)) \mathbbm{1}\{Y_i < \hat{Q}_\tau(X_i)\}]$. Note that $L_1$ is zero if the prediction $\hat{Q}_\tau$ is marginally calibrated. The second term is $L_2 = \frac1n \sum_{i=1}^n \hat{Q}_{\tau}(X_i) - \hat{Q}_{1-\tau}(X_i)$, called sharpness regularization term. But that combined loss $L = (1-\lambda)L_1 + \lambda L_2$ will fail even for calibrating such a simple instance: $X \sim \mathrm{Unif}[0,1]$, $u \sim \mathrm{Unif}[-1, 1]$, and $Y = u X$. Any $\tau^{\text{-th}}(\tau>0.5)$ quantile should be $Q_\tau(F_x) = (\tau-0.5)x$, while minimizing their proposed loss for $\lambda \in [0,\lambda_0]$ ends up with $\hat{Q}_\tau = x$ for those $x <\tau-0.5$ and $\hat{Q}_\tau = 0$ for those $x \geq \tau - 0.5$. For larger $\lambda \in [\lambda_0, 1]$, minimizing their proposed loss will lead to an all-zero meaningless prediction $\hat{Q}_\tau = 0$. The detailed calculation can be found in Appendix.

Finally, we point out another issue that using the training residuals as errors to estimate the error distribution will lead to a biased (usually over-confident) estimation. For example, the Section 3.1 of \citet{chung2021beyond} proposes a combined method of first training a regression model $\hat{f}$ on $\{(X_i, Y_i)\}_{i=1}^n$ and then train a regression model on $\text{res}_i = Y_i-\hat{f}(X_i)$. But they use the same set of data to do both steps of training, leading to a statistically inconsistent estimator. The inconsistency becomes especially severe for modern highly overparametrized machine learning models that can fit into even random data \citep{zhang2021understanding}.

\begin{remark}
Many researchers \citep{chung2021beyond, zhao2020individual} argue that combining a sharpness regularization term with the average calibration loss will help to force the calibration model to stay away from marginal calibration. However, in this section, we show that those combining methods fail to provide a meaningful individually calibrated quantile prediction. In conclusion, a sharpness regularization term may help to achieve a marginal calibration, while it does no good for the goal of individual calibration.
\end{remark}

\section{Discussions on Negative/Impossibility Results for Individual Calibration}
\label{apd:impossibility_result}

Before elaborating on those impossibility results, we adopt the lower bound result from the nonparametric (mean) estimation literature \citet{gyorfi2002distribution}, showing that any estimator may suffer from an arbitrarily slow convergence rate without additional assumptions on the underlying estimation, even in a noiseless setting. The original result is established for the conditional mean regression, and the same result holds for the conditional quantile estimator in our case.

\begin{theorem}[Theorem 3.1 in \citet{gyorfi2002distribution}, rephrased]
\label{thm:lowerbound_nonsmooth}
For any sequence of conditional quantile estimators $\{\hat{Q}_{\tau, n}\}$ and any arbitrarily slow convergence rate $\{a_n\}$ with $\lim_{n\rightarrow \infty} a_n = 0$ and $a_1 \geq a_2 \geq \dots > 0$, 
$\exists \mathcal{P}$, s.t. $X \sim \mathrm{Unif}[0,1]$, $U|X \sim \delta_{g(X)}$ (hence $Q_\tau(U|X=x) = g(x),\forall \tau > 0$), where $g(X) \in \{-1, +1\}$, and
\[\limsup_{n \rightarrow \infty} \frac{\mathbb{E}\Big[\big|\hat{Q}_{\tau,n}(X) - g(X)\big|^2\Big]}{a_n} \geq 1.\]
\end{theorem}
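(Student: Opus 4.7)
The conditional distribution $U|X=x$ is a Dirac mass at $g(x)$, so $Q_\tau(U|X=x)=g(x)$ for every $\tau\in(0,1)$, and the problem reduces to noiseless binary regression: estimate $g:[0,1]\to\{-1,+1\}$ from i.i.d.\ samples $(X_i,g(X_i))$ with $X_i\sim\mathrm{Unif}[0,1]$. I would carry out a classical two-stage lower-bound argument in the spirit of Theorem 3.1 of \citet{gyorfi2002distribution}: (i) a per-$n$ minimax lower bound on a class of piecewise constant functions at a carefully chosen scale, and (ii) a diagonal/Baire-category construction that upgrades the per-$n$ minimax statement into a single $g$ which is bad along infinitely many $n$.

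For step (i), fix $n$, partition $[0,1]$ into $N$ equal subintervals $I_1,\dots,I_N$, and let $\mathcal{G}_N$ be the $2^N$ functions constant on each $I_k$ with values $b_k\in\{-1,+1\}$. Drawing $g$ uniformly from $\mathcal{G}_N$ independently of the sample, any interval $I_k$ containing no $X_i$ makes $\hat Q_{\tau,n}$ independent of $b_k$, so $\mathbb{E}_{b_k}[(\hat Q_{\tau,n}(X)-b_k)^2\mathbbm{1}\{X\in I_k\}]\ge \mathbbm{1}\{X\in I_k\}$. Summing in $k$ and taking expectation over the sample yields the average-case bound
\[
\frac{1}{|\mathcal{G}_N|}\sum_{g\in\mathcal{G}_N}\mathbb{E}\bigl[(\hat Q_{\tau,n}(X)-g(X))^2\bigr]\;\ge\;(1-1/N)^n.
\]
Choosing $N=N_n:=\lceil n/\log(1/(2a_n))\rceil$ makes the right-hand side at least $2a_n$ for all $n$ large enough (feasible because $a_n\to 0$, hence $\log(1/(2a_n))\to\infty$). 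A Markov argument then shows that a positive fraction of $g\in\mathcal{G}_{N_n}$ individually satisfies $\mathbb{E}[(\hat Q_{\tau,n}(X)-g(X))^2]\ge a_n$.

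For step (ii), pick a sparse subsequence $n_k\to\infty$ and a refining sequence of partitions $\mathcal{P}_k$ with $|\mathcal{P}_k|=N_{n_k}$, each $\mathcal{P}_k$ refining $\mathcal{P}_{k-1}$. A function $g$ is encoded by its values on the leaves of this infinite refinement; endow this encoding with the uniform product measure $\mu$. By step (i), the event $B_k=\{g:\mathbb{E}[(\hat Q_{\tau,n_k}(X)-g(X))^2]\ge a_{n_k}\}$ has $\mu(B_k)$ bounded below uniformly in $k$. If one arranges $n_k$ and $N_{n_k}$ to grow fast enough that the contribution to the step-$k$ error from scales coarser than $\mathcal{P}_k$ is at most $a_{n_k}/4$, then $B_k$ is determined, up to this negligible bias, by the independent coordinates of $g$ at level $k$; a second Borel--Cantelli then gives $\mu(\limsup_k B_k)>0$, so some $g$ lies in $B_k$ for infinitely many $k$ and for this $g$ we have $\limsup_n \mathbb{E}[(\hat Q_{\tau,n}(X)-g(X))^2]/a_n\ge 1$.

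The main obstacle is step (ii): turning the per-$n$ minimax bound into a pointwise statement about a single $g$. The subtlety is that $B_k$ depends on the whole function $g$, not only on its values at scale $k$, so a direct Borel--Cantelli is not available; the fix is the scale-decoupling described above, which requires quantifying how the choice of $g$ at coarse scales affects the error at later scales. An alternative route that avoids probabilistic language is a fully deterministic diagonal construction: inductively freeze the values of $g$ on $\mathcal{P}_k\setminus\mathcal{P}_{k-1}$ to achieve the step-$k$ lower bound conditionally on already-fixed coarse values, which is possible because averaging over the finer coordinates matches the $\mu$-average bound of step~(i).
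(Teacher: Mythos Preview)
The paper does not give an independent proof of this statement: it is presented as a direct rephrasing of Theorem~3.1 in \citet{gyorfi2002distribution}, with the only additional observation being the trivial one you also make, namely that when $U\mid X=x=\delta_{g(x)}$ every conditional quantile equals $g(x)$, so the quantile lower bound reduces to the noiseless regression lower bound. There is therefore no ``paper's own proof'' to compare against beyond this reduction.

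Your reconstruction is in the right spirit, and step~(i) is correct (modulo the harmless constant in the choice of $N_n$, since one needs a \emph{lower} bound on $(1-1/N)^n$, e.g.\ $(1-1/N)^n\ge e^{-2n/N}$ for $N\ge 2$). The gap is in step~(ii). For a $\{-1,+1\}$-valued function, a \emph{refining} sequence of partitions does not decompose $g$ into independent coordinates across scales: once $g$ carries structure at scale $\mathcal{P}_{k+1}$ it is no longer piecewise constant on $\mathcal{P}_k$, so the step~(i) bound---which was proved only for functions constant on $\mathcal{P}_k$---does not apply at sample size $n_k$. Your proposed ``scale decoupling'' targets the coarser scales, but the obstruction comes from the \emph{finer} ones; and the inductive alternative (``freeze values on $\mathcal{P}_k\setminus\mathcal{P}_{k-1}$'') is not well defined, since a refining partition has no new cells disjoint from the previous ones. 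The standard way to repair this---and the route taken in \citet{gyorfi2002distribution}---is to use \emph{disjoint} blocks rather than nested ones: write $[0,1]=\bigcup_{k\ge 1}B_k$ with $|B_k|=p_k$ chosen from $\{a_n\}$ along a sparse subsequence, subdivide each $B_k$ into $M_k$ equal pieces and assign $\pm 1$ values there. The randomness on different blocks is then genuinely independent, the step~(i) computation localizes to $B_k$ at sample size $n_k$, and either Borel--Cantelli or a deterministic diagonal choice of signs on each $B_k$ yields a single $g$ that is bad infinitely often.
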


Such a negative result further justifies our Assumption \ref{assm} in a way that one cannot achieve any meaningful convergence rate guarantee without making any assumptions on the underlying distribution.

Discussions on \citet{zhao2020individual}: The negative result obtained by \citet{zhao2020individual} (their Proposition 1) is for the target of estimating the conditional distribution's cdf. The authors argue that the quality of an estimation $\hat{F}_x(\cdot)$ should be measured via how close the distribution $\hat{F}_x(Y)$ is to the uniform distribution $\mathrm{Unif}[0,1]$. The measurement of closeness is defined by the Wasserstein-1 distance. Their counter-example is constructed in a way that $\mathcal{P}_{Y|X=x}$ is a singleton (say, a delta distribution on the point $g(x)$, where $g(x)$ is chosen adversarially by the nature). They prove that any learner cannot distinguish between the true distribution and the adversarially chosen distribution from the observed data, thus any algorithm that claims a small closeness between $\hat{F}_x(Y)$ and $\mathrm{Unif}[0,1]$ cannot guarantee that the observed data are not sampled from the adversarial distribution. Since the Wasserstein-1 distance between a delta distribution and $\mathrm{Unif}[0, 1]$ is at least $\frac{1}{4}$, the closeness claim is false for at least the adversarial distribution. We need to note that the measurement they take is a little suspicious: if the conditional distribution itself is a delta distribution, then the closeness of $F_x(Y)$ to the uniform distribution cannot be regarded as a good measurement of distribution calibration, since even the oracle will also suffer an at least $\frac{1}{4}$ Wasserstein distance. Although a certain negative result when $Y|X\sim \delta_{g(X)}$ is also made in our Theorem \ref{thm:lowerbound_nonsmooth}, we need to note that the closeness of $\hat{F}_x(Y)$ to $\mathrm{Unif}[0,1]$ can only be a good calibration measurement if the conditional distribution is continuous. On the contrary, for the general continuous conditional distributions (with only those mild assumptions), our methods can achieve a minimax optimal rate individual calibration guarantee with respect to any quantile (Theorem \ref{thm:consistent}).

Discussions on \citet{lei2014distribution, vovk2012conditional}: \citet{vovk2012conditional}'s impossibility result (their Proposition 4) is a version of Lemma 1 in \citet{lei2014distribution}. Thus we will only discuss Lemma 1 in \citet{lei2014distribution} here. The impossibility result is established for those distributions which have continuous marginal distributions $\mathcal{P}_X$. They focus on a conditional coverage condition, meaning that one should obtain a $(1-p)$ coverage set $\hat{C}(X_{\text{test}})$ such that $\mathbb{P}(Y_{\text{test}} \in \hat{C}(X_{\text{test}}) | X_{\text{test}} = x) \geq 1-p$ for any $x\in \mathcal{X}$ almost surely. \citet{lei2014distribution} prove that for any distribution $\mathcal{P}$ and its any continuous point $x_0$, for any \emph{finite} $n$, such an \emph{almost sure} conditional coverage must result in an infinite expected length of covering set with respect to Lebesgue measure. This negative result seems to be disappointing for individual calibration at first glance. However, an almost sure guarantee in a finite sample case is usually too strong in practice. Our results are established for two weaker but more practical settings: either the strong consistency in the asymptotic case as $n \rightarrow \infty$ or a mean squared error guarantee at the finite sample case. A similar spirit of relaxing the requirement in order to get a positive result can also be found in \citet{lei2018distribution}, where the authors get an asymptotic result that the covering band will converge to the oracle under certain assumptions. Note that their requirement is to get the \emph{sharpest} covering band, meaning that they need to estimate the probability density function, which leads to their more restricted assumptions and different analysis, compared to ours.

\section{Comparison with Other Results}
\label{apd:comparison_with_other}
In this subsection, we make a comparison of the technical contributions with other related literature such as concentration of order statistics and Kernel Density Estimation (KDE).

Current quantile concentration analyses are all based on the i.i.d. assumption, which is not applicable to the continuous case conditional quantile estimation problem, where the learner almost surely does not observe identical features. But to complete our discussions, we give a brief introduction to the existing quantile concentration methods. There are two lines of research, which we call vertical and horizontal correspondingly. Vertical concentration guarantee is based on curving the convergence rate of the Smirnov process (the process of the empirical cdf converges to the true cdf) and applying the Dvoretzky-Kiefer-Wolfowitz inequality \citep{dvoretzky1956asymptotic, massart1990tight} to get a sharp coverage along the vertical line (in regards to cdf function value), that is, the get a sharp $\delta > 0$ such that $\hat{Q}_{\tau - \delta} \leq Q_\tau \leq \hat{Q}_{\tau+\delta}$ with high probability \citep{szorenyi2015qualitative, altschuler2019best}. But turning such a vertical guarantee that guarantees the coverage probability into a horizontal one that guarantees the estimation error is highly non-trivial. By assuming the cdf to be strictly increasing and continuous, \citet{szorenyi2015qualitative, torossian2019mathcal, kolla2019concentration, howard2022sequential} get a horizontal concentration of order $\tilde{O}(1 / \sqrt{n})$. \citet{cassel2018general} assume a Lipschitz cdf (hence an upper bound for pdf) and get a rate of $\tilde{O}(1 / \sqrt{n})$. \citet{altschuler2019best} directly assumes a function (which they call $R$) of the cdf to transform the vertical concentration into a horizontal one. Another line of quantile concentration is to directly get a horizontal guarantee. \citet{yu2013sample} use Chebyshev's inequality to prove the horizontal concentration while remaining coarse-grained compared to Hoeffding type guarantee. \citet{zhang2021quantile} utilize an alternative assumption and analysis of \citet{boucheron2012concentration} that relates to a certain type of distribution with non-decreasing hazard rate (or monotone hazard rate, MHR). Although the MHR assumption is common in survival analysis, such an assumption is no longer appropriate for many other distributions, such as Gaussian. In our proof, we make a combination of the finite sample theory of the M-estimator and the local strong convexity of the conditional expected pinball loss to derive a new proof for the concentration of empirical conditional quantile around the true conditional quantile. We do not assume the upper bound or lower bound on the pdf, the strict increasing cdf, or the non-decreasing hazard rate. We also generalize the i.i.d. setting of quantile concentration to the independent but not identical setting, which is the first one to our knowledge in the quantile concentration literature.

Another important line of research is the kernel density estimation (KDE) literature focuses on giving a full characterization of the conditional distribution. Such a goal is aligned with the need for the full characterization of pdf in some downstream tasks. For example, if one wants to find the sharpest covering set (or covering band, if one further assumes the conditional distribution is univariate), then one must identify the probability density function so that thresholding can thus be constructed \citep{lei2014distribution, lei2018distribution}. But for a simpler and probably wider class of downstream tasks, such as decision-making based on quantiles (newsvendor problem) and evaluating and minimizing the quantiles (robust optimization), an accurate estimation of the conditional quantiles should suffice.

From the point of developing practical algorithms, KDE methods are often computationally intractable for conditional quantile estimations. To get a quantile estimation, one has to integrate the full probability density function. The whole procedure is highly costly even if one discretized the integration. Another point is that theoretical guarantees in the KDE literature are often established with respect to the mean integrated squared error (MISE), while it does not simultaneously guarantee convergence with respect to some quantile. Consider a simple instance where the estimated probability density function $\hat{p}(y)$ differs from true probability density function $p(y)$ by a factor of $O\left(\frac{1}{\sqrt{n}} \cdot \frac{1}{|y|}\right)$. The MISE converges with a rate of $O\left(\frac{1}{n}\right)$. However, the cumulative pdf estimation error (denoted by $\int_{-\infty}^y |\hat{p}(t) - p(t)| \mathrm{d}t$ may diverge (since $\int_{-\infty}^0 \frac{1}{|t|} \mathrm{d}t$ is a divergent integration). The divergence is caused by the non-exchangeability between the square and the integration.

Since the final goal is to get an estimation of the conditional quantile, we can relax the assumptions in the KDE literature largely. For those nonparametric methods such as \citet{lei2014distribution}, they require the smoothness (to be more specific, H\"{o}lderness) of the underlying conditional pdf with respect to $y$ and the Lipschitzness with respect to $x$ in the infinite norm, which is a far stronger assumption than our case. Instead, we only assume that the conditional quantile (rather than the full pdf with respect to the sup norm) is Lipschitz. For those methods that establish the KDE convergence results in a generalization bound via complexity arguments, our assumption is still much weaker. For example, \citet{bilodeau2021minimax} make a realizability assumption (that the true pdf must lie in the hypothesis class) to achieve a generalization bound, while our assumption just requires the true conditional quantile lies in a bounded set $[-M, M] \subset \mathbb{R}$.

In a word, our analysis combines the idea of decomposing the error into bias and variance terms as in nonparametric regression methods and the spirit of the covering number arguments in the analysis of parametric models. Such a combination is novel and non-trivial, which is critical for the relaxation of the assumptions.

\section{Numerical Experiment Details}
\label{apd:exp_setting}
In this part, we provide detailed experiment settings for Section \ref{sec:num_exp}.
\subsection{Evaluation metrics}
\label{subapd:EvalMetric}
\begin{itemize}
    \item Mean Absolute Calibration Error (MACE): for (expected) quantile level $\tau$ and a quantile prediction model $Q(X, \tau)$, define "observed quantile level" by
    \[
    \tau^{obs}:= \dfrac{1}{n}\sum_{i=1}^n \mathbbm{1}\{Y_i\leq Q(X_i, \tau)\}.
    \]
    For pre-specified quantile levels $0\leq \tau_1<\cdots<\tau_M\leq 1$, MACE is calculated by:
    \[
    \text{MACE}(\{(X_i, Y_i)\}_{i=1}^n,Q) := \dfrac{1}{M}\sum_{j=1}^M \left|\tau_j - \tau_j^{obs}\right|.
    \]
    In practice we set $M=100$ and pick $\tau$'s from $0.01$ to $0.99$ with equal step size.
    \item  Adversarial Group Calibration Error (AGCE): introduced in \citet{zhao2020individual}, AGCE is calculated in practice by first sub-sampling (possibly with replacement) $B_1, \cdots, B_r \subseteq \{(X_i, Y_i)\}_{i=1}^n$ and then retrieve the maximum MACE from the pool:
    \[
    \text{AGCE}(\{B_k\}_{k=1}^r, Q) := \max_{k}\{\text{MACE}(B_k, Q)\}.
    \]
    \item  Check Score: check score is an empirical version of the expected pinball loss. For pre-specified quantile levels $0\leq \tau_1<\cdots<\tau_m\leq 1$ (in our experiments $\tau$'s are set in the same way as in MACE), the check score is defined as: 
    \[
    \text{CheckScore}(\{(X_i, Y_i)\}_{i=1}^n,Q) := \dfrac{1}{m} \sum_{j=1}^m \rho_{\tau_j}\left(Q(X, \tau_j),Y\right),
    \]
    with $\rho_{\tau}(Q(X, \tau),Y) := \frac{1}{n}\sum_{i=1}^n l_{\tau}(Q(X_i, \tau),Y_i)$.
    \item Length: our length metric gives the average interval length of a predicted $90\%$ confidence interval. The interval is given as the area between the lower quantile (at level $\tau_{\mathrm{low}} = 0.05$) and the upper quantile (at level $\tau_{\mathrm{up}} = 0.95$):
    $$
    \mathrm{Length}(\{(X_i, Y_i)\}_{i=1}^n,Q) := \dfrac{1}{n} \sum_{i=1}^n \left(Q(X, \tau_\mathrm{up}) - Q(X, \tau_\mathrm{low}) \right).
    $$
    \item Coverage: to encourage a fairer comparison, we always attach the empirical coverage rate to the length metric. Coverage rate evaluates the percentage of real data that is covered by the given $90\%$ confidence intervals:
    $$
    \mathrm{Coverage}(\{(X_i, Y_i)\}_{i=1}^n,Q) := \dfrac{1}{n}\sum_{i=1}^n \mathbbm{1}\{Q(X_i, \tau_\mathrm{low})\leq Y_i\leq Q(X_i, \tau_\mathrm{up})\}.
    $$
\end{itemize}

We make a brief explanation for the selection of the performance metrics as follows. The former three metrics are typical measurements in the literature of quantile regression and uncertainty calibration that measure the quality of the calibrations, while the latter two are classic metrics in the field of conformal prediction. Although the quantile calibration task is very closely related to the conformal prediction task, the goals are still different. Conformal prediction aims to provide as sharp as possible prediction sets that cover the true outcomes with desired rates. While a precise quantile prediction guarantees the coverage rate, such a covering band may not be the sharpest. As is discussed in Appendix \ref{apd:failure_of_existing_methods}, the additional sharpness regularization term could harm the goal of precise quantile prediction. For example, people may select 0.05 and 0.95 quantiles to construct a 0.9 coverage band, but the sharpest covering band probably would deviate from such quantiles. However, a high-quality quantile prediction alone is of independent interest for many downstream tasks, such as the newsvendor problem. This points to a difference between the objectives of calibration and conformal prediction. As a result, the popular performance metrics applied in conformal prediction such as the average interval length and the coverage rate are not direct measurements for the quantile calibration problem. Nevertheless, we still present the Length and the Coverage metrics for completeness. Some other measurements, for example, the independence between the coverage indicator and the band length in \citet{feldman2021improving}, are only a necessary condition of the quantile calibration and are therefore not considered here.

\subsection{Synthetic Dataset}
\label{subsec:SyntheExp}
For an example of heteroscedastic Gaussian noises, consider an underlying generation scheme of $(X, Y)\in \mathbb{R}^2$ from $X\sim \mathrm{Unif}[0,15]$ and $Y|X\sim \mathcal{N}(\mu_0(X), \sigma_0^2(X))$, with $\mu_0(X) = 4 \cdot \text{sin}(\frac{2\pi}{15}X)$ and $\sigma_0(X) = \max\{0.2 X\cdot |\text{sin}(X)|, 0.1\}$. We generate $n_{\text{train}} = 40000$ samples for training and $n_{\text{test}} = 1000$ for testing. For our NRC method, we randomly select $n_1 = 36000$ for training a regression model and $n_{\text{train}} - n_1 = 4000$ samples for the quantile calibration step. All neural networks used in this example have hidden layers of size $[100,50]$. As shown in Figure \ref{fig:synthetic}, our NRC method successfully captures the actual fluctuation of conditional variance, while the traditional benchmark model Deep Ensemble which relies on 5 heteroscedastic neural networks cannot capture the individual dependence on the feature of the noise variance.

\subsection{Dimension reduction implementation}
\label{subapd:DimRed}
As given in Theorem \ref{thm:consistent}, for high dimensional data (i.e., large $d$) the upper bound guarantee may get too loose as a theoretical guarantee. Motivated by this dilemma, we propose applying dimension-reduction techniques when doing nonparametric approximation. We summarize them as the NRC-DR algorithms. There are loads of choices for dimension reduction, in this work we only implement three of them: random projection, covariate selection, and second-to-last layer feature extraction.
\begin{itemize}
    \item Random Projection \citet{dasgupta2013experiments}: we use the Gaussian random projection method, in which we project the $d$-dimensional inputs on the column space of a randomly generated $d_0\times d$ matrix (assuming that $d_0 < d$, otherwise original input will be retained), whose elements are independently generated from $\mathcal{N}(0, \frac{1}{d})$.
    \item Covariate Selection: we select the most relevant features from the input vector. In this experiment, we select $d_0$ out of $d$ features that have the largest absolute value of Pearson correlation coefficient with the target variable.
    \item Second-to-Last Layer Embedding: The second-to-last layer of a trained neural network has been widely used for feature extraction when the input dimension is high. We add an additional layer of size $d_0$ to the regression network $\hat{f}(\cdot)$ in Algorithm \ref{alg:NQE} and directly use this layer as the feature embedding.
\end{itemize}

NRC algorithms implemented with each of these dimension-reduction techniques are called NRC-RP, NRC-Cov, and NRC-Embed respectively. In the UCI-8 experiments we implement NRC-RP and NRC-Cov, with the full result presented in Table \ref{tab:UCI-8-full}, while for high-dimensional datasets we choose the medical expenditure panel survey (MEPS) datasets \citep{meps19, meps20, meps21} as suggested in \citep{romano2019conformalized} to test out our NRC-RP and NRC-Embed variants. The sizes and feature dimensions of these datasets are listed in Table \ref{tab:datasetDescribe}. Due to the relative abundance of training data, we set the hidden layers of neural networks to $[64, 64]$, the rest hyperparameters are pre-determined by grid search. For the results presented in Table \ref{tab:HighDimTable}, we reduce the dimension to $d_0 = 30$.

\subsection{Benchmark methods}
\label{subapd:benchmark}

We explain the benchmark models and uncertainty quantification methods we are competing against in Section \ref{sec:num_exp}. The first type of method is based on the Gaussian assumption. They assume that the conditional distribution of $Y$ given $X$ can be fully determined by the conditional mean and variance. These models are of the same output structure of two neurons: one for prediction of $\mu(X)$, another for $\sigma(X)$. At the implementation of the experiments, we assume that $\mu(X)$ and $\sigma(X)$ share the same overall network structure, with two hidden layers of size $[20, 20]$ for small datasets and $[64, 64]$ for large or high dimensional datasets. Amongst them, MC-Dropout (MCDrop) \citet{gal2016dropout}, Deep Ensembles (DeepEnsemble) \citet{lakshminarayanan2017simple} and Heteroscedastic Neural Network (HNN) \citet{kendall2017uncertainties} are trained by minimizing the negative log-likelihood (NLL) error. The Maximum Mean Discrepancy method (MMD) \citet{cui2020calibrated} trains by first minimizing the NLL loss of an HNN model, and then re-train the model to minimize the MMD between the model prediction and the same training set at the second phase. The method introduced in section 3.2 of \citet{chung2021beyond} introduces a combined calibration loss (CCL) with an average calibration loss and a sharpness regularization term, and the model is trained by minimizing it. Apart from these feed-forward structures, there is also a Deep Gaussian Process model (DGP) \citet{damianou2013deep} which assumes that the data are generated from a Gaussian process, and is trained by the doubly stochastic variational inference algorithm \citet{salimbeni2017doubly}. The DGP model is of high training cost, we thus only test it on small datasets. We also run experiments on post-hoc calibration methods, such as isotonic regression (ISR) suggested by \citet{kuleshov2018accurate}. In our experiments, we evaluate the post-hoc ISR method by attaching it to a pre-trained HNN model. We also implement another post-hoc method named Model Agnostic Quantile Regression (MAQR) method given in section 3.1 of \citet{chung2021beyond}. Their algorithm is similar to ours in terms of training a regression model first, but different in two aspects: first, MAQR uses the same dataset to do both regression training and quantile calibration; second, MAQR obtains a quantile result at the second phase via training a new quantile regression model, which in our implementation is a neural network. For the algorithms coming from the conformal prediction society, we test the Conformalized Quantile Regression (CQR) method introduced by \citet{romano2019conformalized} and the Orthogonal Quantile Regression (OQR) approach from \citet{feldman2021improving}. The CQR algorithm also takes a training step followed by a calibration step, with the training step learning the target quantiles by minimizing the pinball loss and the calibration step further calibrating the quantile predictions by conformal prediction. Based on the CQR paradigm, the OQR algorithm further improves \emph{conditional coverage} by introducing an additional regularization term to the vanilla pinball loss used for quantile regression. The regularization term encourages the independence between interval length and the violation (falling out of the predicted interval) event of the response variable.

\subsection{Experiment details on the 8 UCI datasets}
\label{subapd:uciDetail}

We have run extensive experiments to compare our methods against multiple state-of-the-art benchmark models. The full results are listed in Table \ref{tab:UCI-8-full}. A brief review of all the benchmarks has been provided in Section \ref{subapd:benchmark}, and realizations of algorithms NRC-RP and NRC-Cov are explained in Appendix \ref{subapd:DimRed}.

A short summary of the 8 UCI datasets is listed in \ref{tab:datasetDescribe}. For all the network structures used in Section \ref{sec:num_exp} we fix the size of the hidden layers to $[20, 20]$ (and $[10]$ for the DGP model). For our dimension reduction algorithms, the target dimension that we reduce to is set to $4$. The rest hyperparameters (including the learning rate, minibatch size, kernel width, etc) are pre-determined by running a grid search. The learning rate is searched within $[10^{-2}, 5\times 10^{-3}, 10^{-3}]$, the minibatch size is searched within $[10, 64, 128]$, and the rest hyperparameters exclusive to each model are searched in the same fashion. For each possible combination of model and dataset, we save its optimal hyperparameters into separate configuration files.

During the main experiment, for each combination of model and dataset, we load its optimal hyperparameter configurations, on which we repeat the whole training process 5 times. Each time we randomly split out $10\%$ of the whole sample as the testing set. On the remaining set, for our NRC algorithms we additionally separate out $30\%$ for recalibration, and for the rest algorithms, no additional partitioning is required. The rest data is then fed into the whole train-validation loop, and we set up an early stopping scheme with a patience count of $20$. That is, if for $20$ epochs no decrease in loss is observed on the validation set, then the training is early stopped.

\begin{table}[ht]
\caption{Descriptions of Datasets. Each row gives the number of examples contained in the corresponding dataset and the number of features in the input variable.}
    \begin{subtable}[h]{0.45\textwidth}
        \centering
        \caption{}
        \begin{tabular}{c c c}
        \toprule
                Dataset & \# Examples & \# Features \\
            \midrule
                Boston & 506 & 13 \\
            \midrule
                Concrete & 1030 & 8 \\
            \midrule
                Energy & 768 & 8 \\
            \midrule
                Kin8nm & 8192 & 8 \\
            \midrule
                Naval & 11934 & 17\\
            \midrule
                Power & 9568 & 4\\
            \midrule
                Wine & 4898 & 11\\
            \midrule
                Yacht & 308 & 6\\
            \bottomrule
       \end{tabular}
       
    \end{subtable}
    \hfill
    \begin{subtable}[h]{0.45\textwidth}
        \centering
        \caption{}
        \begin{tabular}{c c c}
        \toprule
                Dataset & \# Examples & \# Features \\
            \midrule
                Meps\_19 & 15785 & 139 \\
            \midrule
                Meps\_20 & 17541 & 139 \\
            \midrule
                Meps\_21 & 15656 & 139 \\
            \midrule
                Bike-Sharing & 17379 & 11 \\
            \midrule
                Wine-Red & 1599 & 11 \\
            \midrule
                Auto-MPG & 392 & 6 \\
            \bottomrule
       \end{tabular}
       
    \end{subtable}
    
     \label{tab:datasetDescribe}
\end{table}

\begin{table*}[t!]
    \centering
    \resizebox{1.1 \columnwidth}{!}{%
    \begin{tabular}{c c c c c c c c c}
\toprule

    Dataset & Metric & HNN & MCDrop & DeepEnsemble & CCL & ISR & DGP & MMD\\
\midrule
   \multirow{4}{*}{Boston}  
   &MACE        & 0.065$\pm$0.04  & 0.14$\pm$0.03 & 0.06$\pm$0.04 & 0.11$\pm$0.04 & 0.057$\pm$0.03    & 0.11$\pm$0.01 & 0.18$\pm$0.03 \\
   &AGCE        & 0.25$\pm$0.09   & 0.22$\pm$0.04 & 0.27$\pm$0.07 & 0.34$\pm$0.1  & 0.27$\pm$0.08     & 0.36$\pm$0.1  & 0.41$\pm$0.09 \\
   &CheckScore  &0.96$\pm$0.2  & 1.9$\pm$0.2 & 1$\pm$0.3& 1.1$\pm$0.4& 0.95$\pm$0.2& 1.1$\pm$0.3 & 1.3$\pm$0.4\\
   &Length(Coverage) & 8.9$\pm$1(88\%) & 8.6$\pm$1(95\%) & 8.5$\pm$1(90\%) & 5.3$\pm$0.7(90\%) & 7.9$\pm$1(88\%) & 4.6$\pm$0.1(75\%) &  10$\pm$1(97\%)
   \\

\midrule
   \multirow{4}{*}{Concrete}  
   & MACE & 0.038$\pm$0.02 & 0.078$\pm$0.02& 0.053$\pm$0.03 & 0.11$\pm$0.03 & 0.044$\pm$0.02 & 0.17$\pm$0.02 & 0.098$\pm$0.05\\
   & AGCE &0.18$\pm$0.03& 0.2$\pm$0.04& 0.18$\pm$0.03& 0.22$\pm$0.03& 0.19$\pm$0.05& 0.25$\pm$0.04 & 0.24$\pm$0.07 \\
   & CheckScore&2.2$\pm$0.6& 3.6$\pm$0.2& 1.9$\pm$0.3& 2.3$\pm$0.6& 2.3$\pm$0.6& 1.7$\pm$0.2 & 3.9$\pm$1  \\
   & Length(Coverage) & 19$\pm$1(89\%) & 25$\pm$4(95\%) & 19$\pm$1(93\%) & 17$\pm$1(77\%) & 20$\pm$0.8(89\%) & 5.1$\pm$0.1(50\%) & 23$\pm$3(94\%)
\\

\midrule
   \multirow{4}{*}{Energy}
   & MACE & 0.07$\pm$0.04 & 0.16$\pm$0.02& 0.065$\pm$0.04& 0.13$\pm$0.07 & 0.055$\pm$0.03 & 0.085$\pm$0.01 & 0.13$\pm$0.05
 \\
   & AGCE& 0.2$\pm$0.07& 0.21$\pm$0.01& 0.19$\pm$0.04 & 0.29$\pm$0.1& 0.21$\pm$0.07& 0.2$\pm$0.03 & 0.28$\pm$0.1
\\
   & CheckScore&0.61$\pm$0.1& 1.4$\pm$0.3& 0.64$\pm$0.09& 0.79$\pm$0.1& 0.6$\pm$0.1& 0.25$\pm$0.02 & 0.71$\pm$0.2 \\
   & Length(Coverage) & 7.8$\pm$2(88\%) & 9$\pm$4(93\%) & 7.7$\pm$0.6(94\%) & 2.1$\pm$0.2(60\%) & 7.1$\pm$2(95\%) & 4.2$\pm$0.4(97\%) & 8$\pm$0.8(92\%)
\\

\midrule
   \multirow{4}{*}{Kin8nm}
   & MACE & 0.049$\pm$0.04& 0.034$\pm$0.01 & 0.077$\pm$0.05 & 0.1$\pm$0.02& 0.013$\pm$0.005& 0.035$\pm$0.01 & 0.13$\pm$0.01

 \\
   & AGCE &0.11$\pm$0.04& 0.075$\pm$0.01& 0.12$\pm$0.06& 0.13$\pm$0.02& 0.072$\pm$0.02& 0.08$\pm$0.03 & 0.18$\pm$0.02
\\
   & CheckScore&0.031$\pm$0.001& 0.045$\pm$0.002& 0.027$\pm$0.002& 0.03$\pm$0.002& 0.031$\pm$0.001& \textbf{0.023$\pm$0.001} & 0.035$\pm$0.003
\\
   & Length(Coverage) & 0.26$\pm$0.005(88\%) & 0.5$\pm$0.02(95\%) & 0.29$\pm$0.006(93\%) & 0.25$\pm$0.003(80\%) & 0.28$\pm$0.007(90\%) & 0.28$\pm$0.002(94\%) & 0.37$\pm$0.002(86\%)
\\

\midrule
   \multirow{4}{*}{Naval}
   & MACE & 0.12$\pm$0.05& 0.1$\pm$0.03& 0.12$\pm$0.05& 0.18$\pm$0.1& \textbf{0.011$\pm$0.005}& 0.12$\pm$0.06 & 0.2$\pm$0.007
 \\
   & AGCE &0.17$\pm$0.06& 0.15$\pm$0.05&0.14$\pm$0.06& 0.22$\pm$0.1& 0.054$\pm$0.009& 0.15$\pm$0.05 & 0.2$\pm$0.005
\\
   & CheckScore&0.00086$\pm$0.0002& 0.0018$\pm$0.0006& \textbf{0.00064$\pm$0.0002}& 0.0021$\pm$0.0006& 0.00079$\pm$0.0003& 0.0026$\pm$0.0006 & 0.006$\pm$0.0007
\\
    & Length(Coverage) & 0.013$\pm$0.0008(99\%) & 0.02$\pm$0.002(94\%) & 0.015$\pm$0.0003(100\%) & 0.016$\pm$0.0007(84\%) & 0.0078$\pm$0.0008(90\%) & 0.05$\pm$0.004(97\%) & 0.0054$\pm$0.0003(77\%)
\\

\midrule
   \multirow{4}{*}{Power}
   & MACE & 0.045$\pm$0.02& 0.29$\pm$0.01& 0.045$\pm$0.03& 0.16$\pm$0.07& 0.011$\pm$0.006& 0.13$\pm$0.01 & 0.21$\pm$0.03
 \\
   & AGCE& 0.1$\pm$0.02& 0.3$\pm$0.01& 0.075$\pm$0.03& 0.21$\pm$0.08& 0.065$\pm$0.02& 0.16$\pm$0.007 & 0.23$\pm$0.03
\\
   & CheckScore& 1.3$\pm$0.09& 24$\pm$3& 1.3$\pm$0.1& 1.5$\pm$0.2& 1.3$\pm$0.1& 1.3$\pm$0.009 & 1.7$\pm$0.2\\
   & Length(Coverage) & 13$\pm$1(87\%) & 20$\pm$3(97\%) & 16$\pm$0.5(97\%) & 14$\pm$1(80\%) & 13$\pm$0.8(90\%) & 17$\pm$0.1(86\%) & 13$\pm$1(85\%)
\\

\midrule
   \multirow{4}{*}{Wine}
   & MACE & 0.05$\pm$0.04& 0.036$\pm$0.02& 0.049$\pm$0.03& 0.057$\pm$0.009& 0.016$\pm$0.009& 0.047$\pm$0.03 & 0.13$\pm$0.02
 \\
   & AGCE & 0.11$\pm$0.05& 0.1$\pm$0.03& 0.12$\pm$0.05& 0.12$\pm$0.03& 0.091$\pm$0.02& 0.11$\pm$0.05 & 0.16$\pm$0.02
\\
   & CheckScore&0.2$\pm$0.01& 0.2$\pm$0.01& 0.2$\pm$0.009& 0.21$\pm$0.01& 0.2$\pm$0.01& 0.2$\pm$0.01 & 0.24$\pm$0.02
\\
    & Length(Coverage) & 2.4$\pm$0.07(89\%) & 2.4$\pm$0.09(90\%) & 2.2$\pm$0.02(88\%) & 2.3$\pm$0.06(86\%) & 2.2$\pm$0.03(87\%) & 2.3$\pm$0.008(89\%) & 2.4$\pm$0.08(92\%)
\\

\midrule
   \multirow{4}{*}{Yacht}
   & MACE & 0.075$\pm$0.05& 0.11$\pm$0.03&\textbf{0.049$\pm$0.02}& 0.11$\pm$0.04& 0.06$\pm$0.03 & 0.11$\pm$0.03 & 0.14$\pm$0.05
\\
   & AGCE & 0.34$\pm$0.1& 0.29$\pm$0.08& 0.35$\pm$0.06& 0.45$\pm$0.05& 0.34$\pm$0.09& \textbf{0.29$\pm$0.05} & 0.42$\pm$0.08
\\
   & CheckScore&1.4$\pm$0.6&1.9$\pm$0.6& 1.1$\pm$0.4& 0.49$\pm$0.2& 1.5$\pm$0.6& 0.54$\pm$0.2 & 1.1$\pm$0.3
\\
    & Length(Coverage) & 8.1$\pm$2(70\%) & 18$\pm$5(94\%) & 10$\pm$1(91\%) & 7.8$\pm$1(86\%) & 10$\pm$0.9(88\%) & 5.3$\pm$0.2(85\%) & 9.2$\pm$2(90\%)
\\

\midrule

\midrule
    Dataset & Metric & MAQR & CQR & OQR & NRC & NRC-RF & NRC-RP & NRC-Cov \\
\midrule
   \multirow{4}{*}{Boston}  
   & MACE & 0.051$\pm$0.02  &  0.049$\pm$0.02   & 0.057$\pm$0.03  & 0.055$\pm$0.02& \textbf{0.048$\pm$0.02} & 0.055$\pm$0.02& 0.055$\pm$0.02\\
   & AGCE &0.25$\pm$0.06 &  0.24$\pm$0.06  & 0.19$\pm$0.05  & 0.19$\pm$0.05& 0.25$\pm$0.08& 0.26$\pm$0.07& \textbf{0.19$\pm$0.05}\\
   & CheckScore & 1.1$\pm$0.2 &  \textbf{0.72$\pm$0.1}   & 0.73$\pm$0.2  & 1.1$\pm$0.2& 1.2$\pm$0.4& 1.1$\pm$0.2& 1.1$\pm$0.2    \\
   & Length(Coverage) & 9.1$\pm$1(88\%) & 7.1$\pm$0.7(82\%) & 8$\pm$1(90\%) & 10$\pm$1(94\%) & 9.7$\pm$1(91\%) & 11$\pm$1(94\%) & 9.6$\pm$1(90\%) 
\\

\midrule
   \multirow{4}{*}{Concrete}  
   & MACE &0.049$\pm$0.03 &   0.059$\pm$0.03  & 0.048$\pm$0.02  & 0.058$\pm$0.02& 0.047$\pm$0.02& \textbf{0.038$\pm$0.03}& 0.045$\pm$0.02
\\
   & AGCE & 0.18$\pm$0.06 &  0.23$\pm$0.05 & 0.18$\pm$0.05  & 0.19$\pm$0.05& \textbf{0.18$\pm$0.05}& 0.19$\pm$0.04& 0.2$\pm$0.06\\
   & CheckScore & 2$\pm$0.2 & 1.72$\pm$0.6 & 1.57$\pm$0.2  & 1.9$\pm$0.4& \textbf{1.4$\pm$0.2}& 2.6$\pm$0.7& 2$\pm$0.4\\
   & Length(Coverage) & 17$\pm$2(79\%) & 20$\pm$0.7(89\%) & 17$\pm$0.8(89\%) & 21$\pm$0.8(90\%) & 19$\pm$0.9(92\%) & 21$\pm$1(89\%) & 21$\pm$1(89\%)
\\

\midrule
   \multirow{4}{*}{Energy}
   & MACE &  0.063$\pm$0.02 & 0.075$\pm$0.03   & 0.058$\pm$0.01  & 0.04$\pm$0.01& \textbf{0.037$\pm$0.008} & 0.04$\pm$0.01& 0.04$\pm$0.01\\
   & AGCE &  0.21$\pm$0.04 &  0.18$\pm$0.008   & \textbf{0.17$\pm$0.006}  & 0.26$\pm$0.1& 0.22$\pm$0.06& 0.24$\pm$0.08& 0.26$\pm$0.1\\
   & CheckScore & 0.6$\pm$0.2 &  \textbf{0.14$\pm$0.03}   & 0.31$\pm$0.2  & 0.69$\pm$0.2& 0.18$\pm$0.03& 0.69$\pm$0.2& 0.69$\pm$0.2\\
   & Length(Coverage) & 6.2$\pm$3(84\%) & 1.8$\pm$0.3(95\%) & 3.7$\pm$0.8(94\%) & 4.5$\pm$1(88\%) & 1.5$\pm$0.1(89\%) & 8.5$\pm$3(91\%) & 7.3$\pm$2(91\%)
\\

\midrule
   \multirow{4}{*}{Kin8nm}
   & MACE & 0.032$\pm$0.02 &  0.041$\pm$0.02   & 0.053$\pm$0.01  & 0.017$\pm$0.007& \textbf{0.013$\pm$0.004}& 0.017$\pm$0.007& 0.017$\pm$0.007\\
   & AGCE  & 0.072$\pm$0.02 &  0.075$\pm$0.02   & 0.099$\pm$0.01  & 0.07$\pm$0.02& 0.067$\pm$0.02& \textbf{0.056$\pm$0.02}& 0.07$\pm$0.02\\
   & CheckScore & 0.029$\pm$0.001 &  0.025$\pm$0.0009 & 0.036$\pm$0.001 & 0.032$\pm$0.002& 0.044$\pm$0.001& 0.032$\pm$0.002& 0.032$\pm$0.002\\
   & Length(Coverage) & 0.27$\pm$0.01(87\%) & 0.26$\pm$0.005(89\%) & 0.45$\pm$0.005(93\%) & 0.25$\pm$0.003(90\%) & 0.52$\pm$0.007(90\%) & 0.28$\pm$0.006(89\%) & 0.27$\pm$0.009(90\%) 
\\

\midrule
   \multirow{4}{*}{Naval}
   & MACE & 0.043$\pm$0.02 &  0.077$\pm$0.07  & 0.24$\pm$0.2  & 0.017$\pm$0.007& 0.012$\pm$0.005& 0.017$\pm$0.007& 0.017$\pm$0.007\\
   & AGCE & 0.081$\pm$0.01 & 0.098$\pm$0.05 & 0.27$\pm$0.1  & \textbf{0.052$\pm$0.008}& 0.054$\pm$0.01& 0.063$\pm$0.02& 0.052$\pm$0.008\\
   & CheckScore & 0.0021$\pm$0.0007 &  0.0011$\pm$0.0009   & 0.0032$\pm$0.001  & 0.002$\pm$0.0007& 0.00088$\pm$0.0001& 0.002$\pm$0.0007& 0.002$\pm$0.0007\\
   & Length(Coverage) & 0.017$\pm$0.003(84\%) & 0.015$\pm$0.002(98\%) & 0.023$\pm$0.004(63\%) & 0.0097$\pm$0.002(91\%) & 0.0088$\pm$0.0006(91\%) & 0.012$\pm$0.002(90\%) & 0.01$\pm$0.002(91\%)
\\

\midrule
   \multirow{4}{*}{Power}
   & MACE & 0.029$\pm$0.02 &  0.049$\pm$0.005  & 0.057$\pm$0.007  & \textbf{0.0086$\pm$0.003}& 0.014$\pm$0.008& 0.01$\pm$0.004& 0.0086$\pm$0.003\\
   & AGCE & 0.06$\pm$0.01 &  0.071$\pm$0.02 & 0.084$\pm$0.008  & 0.057$\pm$0.007& 0.058$\pm$0.01& \textbf{0.053$\pm$0.008}& 0.057$\pm$0.007\\
   & CheckScore & 1.2$\pm$0.03 & 1.1$\pm$0.03 & 1.2$\pm$0.08  & 1.2$\pm$0.05& \textbf{1$\pm$0.03}& 1.3$\pm$0.07& 1.2$\pm$0.05\\
   & Length(Coverage) & 12$\pm$0.8(88\%) & 13$\pm$0.7(93\%) & 14$\pm$1(95\%) & 13$\pm$0.3(91\%) & 11$\pm$0.1(90\%) & 14$\pm$1(91\%) & 13$\pm$1(92\%) 
\\

\midrule
   \multirow{4}{*}{Wine}
   & MACE & 0.029$\pm$0.02 &  0.056$\pm$0.007   & 0.05$\pm$0.004  & 0.017$\pm$0.006& \textbf{0.016$\pm$0.01}& 0.017$\pm$0.006& 0.017$\pm$0.006\\
   & AGCE & 0.094$\pm$0.03 & 0.1$\pm$0.06  & 0.13$\pm$0.08  & \textbf{0.075$\pm$0.03}& 0.077$\pm$0.02& 0.078$\pm$0.03& 0.075$\pm$0.02\\
   & CheckScore & 0.2$\pm$0.02 & 0.21$\pm$0.02 & 0.21$\pm$0.02  & 0.21$\pm$0.01& \textbf{0.19$\pm$0.01}& 0.21$\pm$0.01& 0.21$\pm$0.01\\
   & Length(Coverage) & 2.3$\pm$0.1(84\%) & 2.3$\pm$0.04(89\%) & 2.3$\pm$0.03(89\%) & 2.3$\pm$0.04(90\%) & 2.2$\pm$0.02(90\%) & 2.3$\pm$0.05(90\%) & 2.4$\pm$0.06(90\%)
\\

\midrule
   \multirow{4}{*}{Yacht}
   & MACE & 0.088$\pm$0.04 &  0.073$\pm$0.05 & 0.14$\pm$0.08 & 0.12$\pm$0.06& 0.096$\pm$0.05& 0.087$\pm$0.04& 0.093$\pm$0.05\\
   & AGCE & 0.31$\pm$0.07 &  0.34$\pm$0.04 &  0.38$\pm$0.05 & 0.34$\pm$0.07& 0.42$\pm$0.08& 0.31$\pm$0.08& 0.31$\pm$0.04 \\
   & CheckScore & 0.32$\pm$0.09 & 0.19$\pm$0.07 & 0.41$\pm$0.2  & 0.44$\pm$0.2& \textbf{0.18$\pm$0.03}& 0.54$\pm$0.2& 0.4$\pm$0.2\\
   & Length(Coverage) & 3.3$\pm$1(83\%) & 1.7$\pm$0.4(72\%) & 3.6$\pm$0.8(88\%) & 5.3$\pm$1(93\%) & 3.2$\pm$1(91\%) & 4.5$\pm$1(93\%) & 5.5$\pm$1(91\%)
\\

   \bottomrule
\end{tabular}
}
\caption{Full table of experiment results on 8 UCI datasets}
\label{tab:UCI-8-full}
\end{table*}

\subsection{Experiment details on the Bike-Sharing dataset}
\label{subapd:BikeEXPdetails}

The Bike-Sharing dataset is a time series dataset that records the number of bike rentals as well as the value of $11$ related features at each hour. As briefly described in Section \ref{sec:num_exp}, for this experiment we design the LSTM-HNN and LSTM-NRC models, each having 2 layers, $40$ hidden neurons for each layer, and a sliding window of size $5$. For the LSTM-NRC model, in the recalibration step we are required to calculate (as part of the whole kernel matrix calculation) the distance between to input variables, and we retrieve the input vector by concatenating all the features within the history window (into a $55$-dimension vector). As the dimension is high, we adopt dimension-reduction techniques and use random projection to map the concatenated feature vector to a $4$-dimensional space.

Most experiment details, in terms of hyperparameter selection, early stopping criteria, etc, as similar to the settings described in Appendix \ref{subapd:uciDetail}. One difference lies in the splitting of the dataset. Due to the temporal property, we are only allowed to predict "the future" from "the past", thus we split the starting $80\%$ of time stamps for training (and further split for recalibration, if required), and the ending $20\%$ of time stamps for testing.

\subsection{Robustness to Covariate Shift}
\label{subapd:CovShiftBody}
Covariate shift \citep{quinonero2008dataset} describes a frequently observed phenomenon that the distribution of the covariates (also called the independent variable, feature variable, etc) that differ between the training and testing datasets. In the presence of covariate shift, a calibration model attained from the training set that only achieves average calibration (i.e., naive predictions of quantiles) may no longer suit well for the testing dataset, while an individually calibrated model maintains its optimality in terms of all the metrics mentioned above.

Following the experiment design in \citet{chen2016robust} and \citet{wen2014robust}, we experiment on both real datasets and half-synthetic ones (all of which are available on \citet{Dua:2019}). The experiment design is almost the same as in Section \ref{sec:num_exp}, except for a difference in the processing of testing sets. The experiment results on a subset of benchmarks are presented in Table \ref{tab:CovShift}, which further validates the robustness of our algorithm to covariate shift, especially from the dominating performance on the half-synthetic datasets. A few remarks on the datasets: the Wine dataset and Auto-MPG dataset both induce a natural interpretation of covariate shift, with the training set and the testing set having different colors in the experiment on the Wine dataset, and different origin cities on the Auto-MPG dataset; and on the rest two datasets we synthetically simulate a covariate shift for the testing set, as recommended in \citet{chen2016robust}. To be specific, we introduce a covariate shift into the testing set by following steps:
\begin{enumerate}
    \item Randomly split out $10\%$ of the whole dataset as the "testing pool".
    \item On the training set $X_T$, Calculate empirical mean and variance $\hat{\mu} = \overline{X_T}, \hat{\Sigma} = \mathrm{Cov}(X_T)$.
    \item Sample $X_{\text{seed}}\sim \mathcal{N}(\hat{\mu}, \hat{\Sigma})$.
    \item Resample (with replacement) $1000$ examples from the testing pool, following density $\mathcal{N}(X_{\text{seed}}, 0.3\hat{\Sigma})$.
\end{enumerate}

\begin{table*}[ht!]
\caption{Experiments on covariate shift datasets. The first three rows are real datasets, while the last two are half-synthetic ones. For the Auto-MPG dataset, models are always trained on data samples from city $1$, and tested on dataset of either city $2$ or city $3$, denoted by "MPG2" and "MPG3" respectively. Our algorithm has an outstanding performance, especially so on the two half-synthetic datasets.}
    \centering
    \resizebox{\columnwidth}{!}{%
    \begin{tabular}{c c c c c c c c}
\toprule

    Dataset & Metric & HNN & MCDrop & DeepEnsemble & ISR & DGP & NRC\\
\midrule
   \multirow{3}{*}{Wine}  
   & MACE & 0.32$\pm$0.1 & 0.22$\pm$0.07  & 0.32$\pm$0.06 & 0.31$\pm$0.09 & 0.26$\pm$0.06 & \textbf{0.2$\pm$0.03}

           \\
   
   & AGCE & 0.33$\pm$0.1 & 0.23$\pm$0.07  & 0.34$\pm$0.06 & 0.33$\pm$0.08 & 0.28$\pm$0.05 & \textbf{0.23$\pm$0.03}

            \\
   
   & CheckScore & 0.51$\pm$0.2 & \textbf{0.29$\pm$0.06} & 0.43$\pm$0.1 & 0.48$\pm$0.2 & 0.31$\pm$0.06 & 0.31$\pm$0.09

\\

\midrule
   \multirow{3}{*}{MPG12}  
   & MACE & 0.065$\pm$0.03  & 0.12$\pm$0.01  & 0.07$\pm$0.03 & \textbf{0.059$\pm$0.02} & 0.17$\pm$0.01 & 0.069$\pm$0.008

                      \\
   
   & AGCE & \textbf{0.21$\pm$0.05} & 0.22$\pm$0.04 & 0.19$\pm$0.03 & 0.21$\pm$0.06 & 0.30$\pm$0.05 & 0.29$\pm$0.06

\\
   & CheckScore & 1.2$\pm$0.2 & 1.5$\pm$0.2 & \textbf{1.1$\pm$0.04} & 1.1$\pm$0.06 & 1.7$\pm$0.2 & 1.2$\pm$0.05

\\

\midrule
   \multirow{3}{*}{MPG13}
   & MACE & 0.1$\pm$0.02 & 0.16$\pm$0.02 & 0.11$\pm$0.02 & 0.1$\pm$0.03 & 0.23$\pm$0.03 & \textbf{0.08$\pm$0.03}

\\
   & AGCE & 0.2$\pm$0.009 & 0.2$\pm$0.03 & \textbf{0.2$\pm$0.02} & 0.24$\pm$0.06 & 0.37$\pm$0.07 & 0.26$\pm$0.1

\\
   & CheckScore & 1.1$\pm$0.04 & 1.6$\pm$0.3 & 1.2$\pm$0.08 & 1$\pm$0.05 & 1.4$\pm$0.07 & \textbf{0.9$\pm$0.06}

\\

\midrule
   \multirow{3}{*}{Concrete}
   & MACE & 0.1$\pm$0.04 & 0.17$\pm$0.09 & 0.12$\pm$0.03 & 0.059$\pm$0.01 & 0.19$\pm$0.02 & \textbf{0.05$\pm$0.008}

\\
   & AGCE & 0.13$\pm$0.04 & 0.18$\pm$0.08 & 0.15$\pm$0.04 & 0.093$\pm$0.02 & 0.21$\pm$0.01 & \textbf{0.07$\pm$0.01}

\\
   & CheckScore & 4.11$\pm$0.2 & 4.8$\pm$0.8 & 4.3$\pm$0.2 & 4$\pm$0.3 & \textbf{2.3$\pm$0.27} & 2.5$\pm$0.1

\\

\midrule
   \multirow{3}{*}{Boston}
   & MACE & 0.24$\pm$0.08  & 0.33$\pm$0.1 & 0.2$\pm$0.01 & 0.11$\pm$0.02 & 0.08$\pm$0.03 & \textbf{0.05$\pm$0.02}

\\
   & AGCE & 0.26$\pm$0.07 & 0.34$\pm$0.1 & 0.21$\pm$0.01 & 0.14$\pm$0.02 & 0.1$\pm$0.03 & \textbf{0.075$\pm$0.02}

\\
   & CheckScore& 2.1$\pm$0.7 & 3.2$\pm$1.2 & 1.5$\pm$0.05 & 1.1$\pm$0.1 & 0.71$\pm$0.3 & \textbf{0.61$\pm$0.05}

\\

   \bottomrule
\end{tabular}
}
\label{tab:CovShift}
\end{table*}

\newpage
\section{Some Basic Probability Results}
In this section, we present several well-known probability theory results.
\label{apd:basic_ineq}
\begin{lemma}[Bernstein's Inequality]
\label{lem:Bernstein}
Let $\xi_1,\dots,\xi_n$ be $N$ independent random variables. Suppose $|\xi_i| \leq M_0$ almost surely. Then $\forall t > 0$,
\[\mathbb{P}\left(\left|\frac1N \sum_{i=1}^N (\xi_i - \mathbb{E}[\xi_i])\right| \geq t\right) \leq \exp\left(-\frac{\frac{1}{2} N t^2}{\sum_{i=1}^N \frac1N \mathrm{Var}(\xi_i) + \frac13 M_0 t}\right).\]
\end{lemma}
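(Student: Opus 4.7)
The plan is to prove Bernstein's inequality via the standard Chernoff-Cram\'er approach, combining Markov's inequality applied to the exponential moment with a careful Taylor-series bound on the moment generating function of a bounded centered random variable. Throughout I will let $Z_i = \xi_i - \mathbb{E}[\xi_i]$, so the $Z_i$ are independent, centered, and bounded (up to a harmless factor of $2$) by $M_0$, with $\mathrm{Var}(Z_i)=\mathrm{Var}(\xi_i)$. Write $S_N=\sum_{i=1}^N Z_i$ and $V=\sum_{i=1}^N \mathrm{Var}(\xi_i)$.

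\textbf{Step 1 (Chernoff step).} For any $\lambda>0$, Markov's inequality gives
\[
\mathbb{P}(S_N \ge Nt) \;=\; \mathbb{P}\bigl(e^{\lambda S_N}\ge e^{\lambda N t}\bigr)\;\le\; e^{-\lambda N t}\prod_{i=1}^N \mathbb{E}\bigl[e^{\lambda Z_i}\bigr],
\]
using independence to factor the MGF of the sum.

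\textbf{Step 2 (MGF bound for a single summand).} This is the technical core. Using the Taylor expansion $e^{\lambda Z_i}=1+\lambda Z_i+\sum_{k\ge 2}\frac{\lambda^k Z_i^k}{k!}$, the fact that $\mathbb{E}[Z_i]=0$, and the crude moment bound $\mathbb{E}[|Z_i|^k]\le M_0^{k-2}\mathrm{Var}(Z_i)$ for $k\ge 2$ (valid because $|Z_i|\le M_0$ almost surely, absorbing the centering factor), I would show
\[
\mathbb{E}[e^{\lambda Z_i}] \;\le\; 1+\mathrm{Var}(Z_i)\sum_{k\ge 2}\frac{\lambda^k M_0^{k-2}}{k!}\;\le\;\exp\!\left(\frac{\lambda^2\mathrm{Var}(Z_i)/2}{1-\lambda M_0/3}\right),
\]
valid for $0<\lambda<3/M_0$. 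The last step uses $\sum_{k\ge 2}\frac{x^{k-2}}{k!}\le \frac{1}{2(1-x/3)}$ for $0\le x<3$, followed by $1+u\le e^u$.

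\textbf{Step 3 (Aggregate and optimize $\lambda$).} Multiplying the single-summand bounds yields
\[
\mathbb{P}(S_N\ge Nt)\;\le\;\exp\!\left(-\lambda N t+\frac{\lambda^2 V/2}{1-\lambda M_0/3}\right).
\]
Choosing $\lambda=\dfrac{Nt}{V+M_0 N t/3}$ (which lies in the admissible range $(0,3/M_0)$), the exponent collapses to $-\dfrac{\tfrac12 N^2 t^2}{V+M_0 N t/3}$. Dividing numerator and denominator by $N$ gives exactly the stated exponent $-\dfrac{\tfrac12 N t^2}{\tfrac{1}{N}\sum_i\mathrm{Var}(\xi_i)+\tfrac13 M_0 t}$. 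A symmetric argument applied to $-Z_i$ bounds the lower-tail probability; a union bound then gives the two-sided inequality.

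\textbf{Main obstacle.} The only non-routine ingredient is the tightness of Step 2: naive bounds on $\mathbb{E}[e^{\lambda Z_i}]$ (such as Hoeffding's $e^{\lambda^2 M_0^2/2}$) lose the variance factor and break the argument. The geometric-series manipulation used to get the $\tfrac{1}{1-\lambda M_0/3}$ factor has to be done carefully so that the constant $1/3$ (rather than some larger constant) appears, since this is precisely what matches the form of the lemma. The remaining steps are entirely mechanical once this bound is in hand.
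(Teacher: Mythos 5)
The paper states this lemma in its appendix of ``basic probability results'' and offers no proof at all, so there is no in-paper argument to compare yours against; it is quoted as a classical fact. Your Chernoff--Cram\'er derivation is the textbook proof of Bernstein's inequality, and its skeleton is sound: the Markov/independence step, the moment bound $\mathbb{E}[|Z_i|^k]\le b^{k-2}\mathrm{Var}(Z_i)$ for a centered variable bounded by $b$, the series estimate $\sum_{k\ge2}x^{k-2}/k!\le \tfrac12(1-x/3)^{-1}$ (from $k!\ge 2\cdot 3^{k-2}$), and the choice $\lambda = Nt/(V+bNt/3)$ that collapses the exponent all check out.

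Two constant-level points, however, prevent your argument from delivering the inequality exactly as displayed. First, the hypothesis is $|\xi_i|\le M_0$, not $|\xi_i-\mathbb{E}[\xi_i]|\le M_0$; centering can genuinely double the almost-sure bound (take $\xi_i=M_0$ with small probability and $-M_0$ otherwise), so the ``harmless factor of $2$'' you wave away propagates into the denominator: with $|Z_i|\le 2M_0$ your Step 2 legitimately yields $\tfrac23 M_0 t$ there, not $\tfrac13 M_0 t$. Your parenthetical claim that $|Z_i|\le M_0$ holds ``absorbing the centering factor'' is simply false as stated. Second, the union bound over the two tails produces a prefactor of $2$ in front of the exponential, which the lemma as printed omits. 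Both discrepancies are arguably imprecisions in the paper's statement of a classical result (standard references state Bernstein for centered variables and with the prefactor $2$), but you should say explicitly that your proof establishes $\mathbb{P}(|\tfrac1N\sum_i(\xi_i-\mathbb{E}[\xi_i])|\ge t)\le 2\exp\bigl(-\tfrac12 Nt^2/(\tfrac1N\sum_i\mathrm{Var}(\xi_i)+\tfrac23 M_0 t)\bigr)$ rather than claim the displayed constants; the literal statement does not follow from the argument you outline.
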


\begin{lemma}[Hoeffding's Inequality]
\label{lem:Hoeffding}
Let $\xi_1,\dots,\xi_n$ be $N$ independent random variables. Suppose $a_i \leq \xi_i \leq b_i$ almost surely. Then $\forall t > 0$,
\[\mathbb{P}\left(\sum_{i=1}^N (\xi_i - \mathbb{E}[\xi_i]) \geq t\right) \leq \exp\left(-\frac{2 t^2}{\sum_{i=1}^N (b_i - a_i)^2}\right),\]
\[\mathbb{P}\left(\sum_{i=1}^N (\mathbb{E}[\xi_i] - \xi_i) \geq t\right) \leq \exp\left(-\frac{2 t^2}{\sum_{i=1}^N (b_i - a_i)^2}\right).\]
\end{lemma}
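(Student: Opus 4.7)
The plan is to prove this by the classical Chernoff bounding (exponential Markov) technique, together with Hoeffding's lemma on the moment generating function of a bounded centered random variable. I will focus on the first (upper-tail) inequality; the second is obtained by applying the first to $-\xi_i$, whose range is $[-b_i,-a_i]$, an interval of the same length.

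First, for any $\lambda>0$, by Markov's inequality applied to the nondecreasing map $x\mapsto e^{\lambda x}$,
\[\mathbb{P}\!\left(\sum_{i=1}^N(\xi_i-\mathbb{E}[\xi_i])\ge t\right)\le e^{-\lambda t}\,\mathbb{E}\!\left[\exp\!\Big(\lambda\sum_{i=1}^N(\xi_i-\mathbb{E}[\xi_i])\Big)\right].\]
Independence of the $\xi_i$ lets the right-hand expectation factor into $\prod_{i=1}^N\mathbb{E}[e^{\lambda(\xi_i-\mathbb{E}[\xi_i])}]$. So the problem reduces to a sharp bound on the MGF of a single centered bounded variable.

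The core lemma I would establish is Hoeffding's lemma: if a random variable $X$ satisfies $\mathbb{E}[X]=0$ and $X\in[a,b]$ almost surely, then $\mathbb{E}[e^{\lambda X}]\le\exp(\lambda^2(b-a)^2/8)$ for every $\lambda\in\mathbb{R}$. The standard route I would take is: (i) use convexity of the exponential to bound $e^{\lambda X}\le\tfrac{b-X}{b-a}e^{\lambda a}+\tfrac{X-a}{b-a}e^{\lambda b}$ pointwise, then take expectations using $\mathbb{E}[X]=0$ to get $\mathbb{E}[e^{\lambda X}]\le e^{\psi(u)}$ where $u=\lambda(b-a)$ and $\psi(u)=-pu+\log(1-p+pe^u)$ with $p=-a/(b-a)$; (ii) verify $\psi(0)=\psi'(0)=0$ and $\psi''(u)\le 1/4$ for all $u$ (the maximum of $q(1-q)$ over $q\in[0,1]$), then invoke Taylor's theorem to conclude $\psi(u)\le u^2/8$. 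This gives the stated MGF bound with optimal constant $1/8$.

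Plugging the lemma into the factorized bound yields
\[\mathbb{P}\!\left(\sum_{i=1}^N(\xi_i-\mathbb{E}[\xi_i])\ge t\right)\le\exp\!\left(-\lambda t+\frac{\lambda^2}{8}\sum_{i=1}^N(b_i-a_i)^2\right).\]
Finally, I optimize in $\lambda>0$: the right-hand side is a quadratic in $\lambda$ minimized at $\lambda^\star=4t/\sum_{i=1}^N(b_i-a_i)^2$, which produces the claimed $\exp\!\big(-2t^2/\sum_{i=1}^N(b_i-a_i)^2\big)$. I expect the only real technical step to be the proof of Hoeffding's lemma with the sharp constant $1/8$; the rest is bookkeeping around Chernoff plus optimization in $\lambda$. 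The second inequality follows verbatim applied to $-\xi_i$, using that $(-\xi_i)-\mathbb{E}[-\xi_i]$ lies in an interval of length $b_i-a_i$ as well.
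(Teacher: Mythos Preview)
Your proposal is correct and is the standard textbook proof of Hoeffding's inequality via the Chernoff method combined with Hoeffding's lemma on the MGF of a bounded centered random variable. Note, however, that the paper does not actually prove this lemma: it is listed in the appendix of ``basic probability results'' as a well-known fact and is stated without proof, so there is no paper-side argument to compare against. Your write-up would serve perfectly well as a self-contained proof should one be desired.
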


\begin{lemma}[Poisson Approximation]
\label{lem:poisson}
Let $S_n = \sum_{k=1}^n \xi_{n,k}$, where for each $n$ the random variables $\xi_{n,k}, k \in [n]$ are mutually independent, each taking value in the set of non-negative integers. Suppose that $p_{n,k} = \mathbb{P}(\xi_{n,k} = 1)$ and $\epsilon_{n,k} = \mathbb{P}(\xi_{n,k} \geq 2)$ are such that as $n \rightarrow \infty$,

(a) $\sum_{k=1}^n p_{n,k} \rightarrow \lambda < \infty$;

(b) $\max_{k \in [n]} \{p_{n,k}\} \rightarrow 0$;

(c) $\sum_{k=1}^n \epsilon_{n,k} \rightarrow 0$.

Then, $S_n \xrightarrow[]{\mathcal{D}} \mathrm{Poisson}(\lambda)$ of a Poisson distribution with parameter $\lambda$ as $n \rightarrow \infty$.
\end{lemma}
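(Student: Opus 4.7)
The plan is to prove convergence in distribution by the method of characteristic functions and Lévy's continuity theorem. Since the $\mathrm{Poisson}(\lambda)$ distribution has characteristic function $\psi(t) = e^{\lambda(e^{it}-1)}$, it suffices to show that for every $t \in \mathbb{R}$,
$$\phi_{S_n}(t) := \mathbb{E}[e^{itS_n}] \longrightarrow e^{\lambda(e^{it}-1)}.$$

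First, I would exploit independence of $\{\xi_{n,k}\}_k$ to factor $\phi_{S_n}(t) = \prod_{k=1}^n \phi_{n,k}(t)$, and then decompose each factor according to whether $\xi_{n,k}$ equals $0$, equals $1$, or is at least $2$. Since $\mathbb{P}(\xi_{n,k}=0) = 1 - p_{n,k} - \epsilon_{n,k}$, this yields
$$\phi_{n,k}(t) = 1 + p_{n,k}(e^{it}-1) + r_{n,k}(t),$$
where the remainder $r_{n,k}(t) = -\epsilon_{n,k} + \sum_{j\ge 2}\mathbb{P}(\xi_{n,k}=j)e^{ijt}$ satisfies $|r_{n,k}(t)| \le 2\epsilon_{n,k}$ by the triangle inequality.

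Next I would take logarithms and expand. Condition (b), together with (c) (which implies $\max_k \epsilon_{n,k} \le \sum_k \epsilon_{n,k} \to 0$), guarantees that $|p_{n,k}(e^{it}-1) + r_{n,k}(t)|$ becomes uniformly small over $k$ for large $n$, so the principal-branch expansion $\log(1+z) = z + O(|z|^2)$ applies with a uniform constant. Summing gives
$$\log \phi_{S_n}(t) = (e^{it}-1)\sum_{k=1}^n p_{n,k} + \sum_{k=1}^n r_{n,k}(t) + \sum_{k=1}^n O\bigl((p_{n,k}+\epsilon_{n,k})^2\bigr).$$
Condition (a) sends the first sum to $\lambda(e^{it}-1)$; condition (c) kills the second because $|\sum_k r_{n,k}(t)| \le 2\sum_k \epsilon_{n,k} \to 0$; and conditions (a)--(c) together bound the quadratic error by $(\max_k p_{n,k})\sum_k p_{n,k} + C\sum_k \epsilon_{n,k} \to 0$. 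Therefore $\log \phi_{S_n}(t) \to \lambda(e^{it}-1)$, so $\phi_{S_n}(t) \to e^{\lambda(e^{it}-1)}$, and Lévy's continuity theorem delivers $S_n \xrightarrow{\mathcal{D}} \mathrm{Poisson}(\lambda)$.

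The main obstacle I anticipate is making precise the uniform smallness required for the logarithm expansion and correctly absorbing the several error terms so they are genuinely controlled by the three hypotheses rather than by a stronger standalone assumption such as $\sum_k p_{n,k}^2 \to 0$. The rest of the argument is routine bookkeeping, and the three hypotheses are tailored exactly for the task: (a) supplies the limit of the main term, (c) handles the mass placed on jumps of size two or more, and (b) prevents any single summand from contributing non-negligibly to the product.
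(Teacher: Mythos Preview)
Your proof via characteristic functions and L\'evy's continuity theorem is correct; the decomposition of each factor, the bound $|r_{n,k}(t)|\le 2\epsilon_{n,k}$, and the control of the quadratic remainder by $(\max_k p_{n,k})\sum_k p_{n,k}+C\sum_k\epsilon_{n,k}$ all check out. Note, however, that the paper does not supply its own proof of this lemma: it is listed in the appendix under ``Some Basic Probability Results'' alongside Bernstein's and Hoeffding's inequalities, merely stated as a well-known fact and invoked later in the consistency argument. Your write-up is therefore not competing with any argument in the paper; it is the standard textbook proof (essentially the triangular-array Poisson limit theorem as in, e.g., Durrett), and nothing more is needed.
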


\section{Proofs in Section \ref{sec:setup}}
\label{apd:proof_of_setup}
\begin{proof}[Proof of Proposition \ref{prop:prob}]
From the fact that $Y = U + \hat{f}(X)$, one can replace the $Y$ term in each probability with $U + \hat{f}(X)$, proving that the LHS and RHS probabilities are identical.
\end{proof}

\begin{proof}[Proof of Proposition \ref{prop:pinball}]
By Lemma \ref{lem:derivative_of_loss}, we know that 
\[Q_\tau(x) \in \argmin_{u \in \mathbb{R}} \mathbb{E}\big[l_\tau(u, U)|X=x\big].\]
Since $l_\tau(u_1, u_2) = l_\tau(u_1 + c, u_2 + c), \ \forall c \in \mathbb{R}$, we have
\[Q_\tau(x) \in \argmin_{u \in \mathbb{R}} \mathbb{E}\big[l_\tau(u + \hat{f}(X), U+\hat{f}(X))|X=x\big],\]
which completes the proof.
\end{proof}

\section{Proofs in Section \ref{subsec:upperbound}}
\label{apd:upperbound}
We start the analysis on a fixed point $x \in \mathcal{X}$. We first analyze the case where $n(x) > 0$, i.e. $\exists i \in [n], \|X_i - x\| \leq h$. We denote those $\|X_i - x\| \leq h$ by $\{X_{i_k}\}_{k=1}^{n(x)}$. We denote $F_{U|X_{i_k}}$ by $F_k$. We denote the average distribution of $F_k$ by $\bar{F} = \sum_{k=1}^{n(x)} \frac1{n(x)} F_k$, and denote its $\tau^{\text{-th}}$ quantile by $Q_\tau(\bar{F})$. We abbreviate the estimation $\hat{Q}_\tau^{\text{SNQ}}$ obtained by Algorithm \ref{alg:SNQ} to be $\hat{Q}_\tau$ in this section. To bound the difference $\big|\hat{Q}_\tau(x) - Q_\tau(U|X=x)\big|$, we factorize it into two terms, as an analogy to the bias-variance factorization in nonparametric analysis:
\[\Big|\hat{Q}_\tau(x) - Q_\tau(U|X=x)\Big| \leq \Big|Q_\tau(\bar{F}) - Q_\tau(U|X=x)\Big| + \Big|\hat{Q}_\tau(x) - Q_\tau(\bar{F})\Big|,\]
where the former term is called the \emph{bias} term and the latter the \emph{variance} term. We now deal with them one by one.
\begin{lemma}
\label{lem:bias_term}
Under Assumption \ref{assm} (a) with Lipschitz constant $L$, we have
\[\left|Q_\tau(\bar{F}) - Q_\tau(U|X=x)\right| \leq L h.\]
\end{lemma}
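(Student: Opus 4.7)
The plan is to decompose the bound into two one-sided inequalities, $Q_\tau(\bar{F}) \leq q + Lh$ and $Q_\tau(\bar{F}) \geq q - Lh$, where $q := Q_\tau(U\mid X=x)$. The key observation is that Lipschitzness is a pointwise statement about the conditional quantiles $Q_\tau(F_k)$, and we need to transfer it to a statement about the quantile of the \emph{mixture} $\bar{F} = \tfrac{1}{n(x)}\sum_k F_k$. Since the selection rule for the indices $i_k$ already enforces $\|X_{i_k} - x\| \leq h$, Assumption \ref{assm}(a) gives immediately $|Q_\tau(F_k) - q| \leq Lh$ for every $k$, so the whole ``cloud'' of individual quantiles sits inside $[q-Lh,\, q+Lh]$. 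The task is then to sandwich $Q_\tau(\bar F)$ inside the same interval.

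For the upper direction, I would use the monotonicity of cdfs together with the definition of the quantile as an infimum: because $q+Lh \geq Q_\tau(F_k)$ for every $k$, each $F_k(q+Lh) \geq \tau$, and averaging over $k$ preserves this to give $\bar F(q+Lh) \geq \tau$; definition of $Q_\tau(\bar F) = \inf\{t:\bar F(t)\geq\tau\}$ then yields $Q_\tau(\bar F) \leq q+Lh$. For the lower direction, I would argue contrapositively: for any $t < q - Lh$, we have $t < Q_\tau(F_k)$ for every $k$, and by the definition of the quantile as the infimum of the super-level set, this forces $F_k(t) < \tau$ for every $k$, hence $\bar F(t) < \tau$; taking the infimum over such $t$ gives $Q_\tau(\bar F) \geq q - Lh$.

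There is no real obstacle to this lemma; it is essentially a convexity-like property of the quantile functional restricted to mixtures whose components already lie in a common interval. The only subtlety worth being careful about is the strict-versus-weak inequality in the definition $Q_\tau(F) = \inf\{t : F(t)\geq\tau\}$, which is why the lower-bound half must be stated as ``$F_k(t) < \tau$ for all $t$ strictly less than $Q_\tau(F_k)$'' rather than as a direct bound at the quantile itself; this detail matters because $F_k$ need not be continuous even though by Assumption \ref{assm}(c) it is continuous locally around the quantile of interest.
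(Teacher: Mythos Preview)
Your proposal is correct and follows essentially the same approach as the paper: both arguments first use Assumption~\ref{assm}(a) to pin every $Q_\tau(F_k)$ inside $[q-Lh,\,q+Lh]$, then push this through the cdf inequalities $F_k(q+Lh)\geq\tau$ and $F_k(t)<\tau$ for $t<q-Lh$ (the paper writes the latter via left limits $F_k((\cdot)^-)\leq\tau$), and finally average to obtain the same bounds for $\bar F$. Your handling of the strict-inequality subtlety in the lower half is in fact a bit more explicit than the paper's.
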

\begin{proof}[Proof of Lemma \ref{lem:bias_term}]
By Assumption \ref{assm} (a)$, \forall k \in [n(x)],$
\[\left|Q_\tau(U|X=X_{i_k}) - Q_\tau(U|X=x)\right| \leq L h,\]
which implies that
\[F_k\Big(\big(Q_\tau(U|X=x) + Lh\big)^+\Big) = F_k\big(Q_\tau(U|X=x) + Lh\big) \geq \tau,\]
\[F_k\Big(\big(Q_\tau(U|X=x) + Lh\big)^-\Big) \leq \tau.\]
Hence,
\[\bar{F}\Big(\big(Q_\tau(U|X=x) + Lh\big)^+\Big) = \bar{F}\big(Q_\tau(U|X=x) + Lh\big) = \sum_{k=1}^{n(x)} \frac1{n(x)} F_k\big(Q_\tau(U|X=x) + L h\big) \geq \tau,\]
\[\bar{F}\Big(\big(Q_\tau(U|X=x) + L h\big)^-\Big) = \sum_{k=1}^{n(x)} \frac1{n(x)} F_k\Big(\big(Q_\tau(U|X=x) + L h\big)^-\Big) \leq \tau,\]
which justifies that
\[\left|Q_\tau(\bar{F}) - Q_\tau(U|X=x)\right| \leq L h.\]
\end{proof}

Lemma \ref{lem:bias_term} controls the bias term. For the variance term, we need a more careful analysis. We start with inspecting the properties of the expected pinball loss function. We will show that under Assumption \ref{assm} (b) and (c), the empirical risk minimization solution obtained at the second step of Algorithm \ref{alg:SNQ} is actually close to the true quantile of $\bar{F}$ by two steps:
\begin{step}
\label{stp:concentration}
The empirical risk minimizer $\hat{Q}_\tau(x)$ concentrates around its corresponding population risk's unique minimizer for sufficiently large $n(x) \geq C(\underline{p},\underline{r}, M)$, where the minimizer is denoted by $u^*$.
\end{step}
\begin{step}
\label{stp:shared_minimizer}
The unique population risk minimizer $u^*$ in Step \ref{stp:concentration} is identical to the population risk minimizer of $\mathbb{E}_{U\sim \bar{F}}[l_\tau(u, U)]$.
\end{step}

We prove the latter Step \ref{stp:shared_minimizer} first due to its simplicity. The following Lemma \ref{lem:expected_loss} proves Step \ref{stp:shared_minimizer}.

\begin{lemma}
\label{lem:expected_loss}
We define the population risk stated in Step \ref{stp:concentration} by $\bar{l}_\tau(u) \coloneqq \mathbb{E}_{U_k \sim F_k} [\sum_{k=1}^{n(x)} \frac1{n(x)} l_\tau(u, U_k)]$. Suppose Assumption \ref{assm} hold for some $L, \underline{p}, \underline{r}, M$. Then

(a) $\bar{l}_\tau$ is twicely differentiable and convex;

(b) $\bar{l}_\tau$ is identical to $\mathbb{E}_{U \sim \bar{F}}[l_\tau(u, U)]$ up to a constant;

(c) For sufficiently small $h \leq \frac{\underline{r}}{2L}$, $\exists r_0 \geq \frac{\underline{r}}{2}$, s.t. the minimizer of $\bar{l}_\tau$ is unique (denoted by $u^*$), $u^* \in [-M, M]$, and $\bar{l}_\tau$ is $\underline{p}$-strongly convex within a ball of radius $r_0$ around its minimizer $u^*$.
\end{lemma}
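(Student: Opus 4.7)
The strategy is to first obtain explicit formulas for the first and second derivatives of $\bar{l}_\tau$ by a direct computation, and then use these formulas together with Assumption \ref{assm} (c) and Lemma \ref{lem:bias_term} to establish the structural properties in (c). For part (a), I would write the expected pinball loss under each $F_k$ as
\[ \mathbb{E}_{U_k \sim F_k}[l_\tau(u, U_k)] = (1-\tau)\int_{-\infty}^u (u-v)\, p_{X_{i_k}}(v)\,dv + \tau \int_u^\infty (v-u)\, p_{X_{i_k}}(v)\,dv, \]
where $p_{X_{i_k}}$ denotes the conditional density, which exists by Assumption \ref{assm} (c). Differentiating under the integral — carefully handling the $u$-dependence in both integrand and limit — yields $\tfrac{d}{du}\mathbb{E}[l_\tau(u,U_k)] = F_k(u)-\tau$ and $\tfrac{d^2}{du^2}\mathbb{E}[l_\tau(u,U_k)] = p_{X_{i_k}}(u) \geq 0$. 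Averaging over $k$ gives $\bar{l}_\tau'(u) = \bar{F}(u)-\tau$ and $\bar{l}_\tau''(u) = \bar{p}(u) \geq 0$, proving twice differentiability and convexity. Part (b) is immediate from linearity of expectation, which in fact gives exact equality (hence ``up to a constant'' trivially).

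For part (c), I would proceed in two sub-steps. \emph{Existence and location of $u^*$:} any $u^*$ with $\bar{F}(u^*)=\tau$ is a global minimizer since $\bar{l}_\tau'=\bar{F}-\tau$ is monotone non-decreasing, and Lemma \ref{lem:bias_term} guarantees $|u^* - Q_\tau(U|X=x)| \leq Lh$, which combined with Assumption \ref{assm} (b) places $u^* \in [-M,M]$ (after absorbing the $Lh$ slack, since $h$ is small). \emph{Strong convexity and uniqueness:} for any $u$ in a ball of radius $r_0$ around $u^*$, a triple triangle inequality combined with Lipschitzness of the conditional quantile gives
\[ |u - Q_\tau(U|X=X_{i_k})| \leq |u - u^*| + |u^* - Q_\tau(U|X=x)| + |Q_\tau(U|X=x) - Q_\tau(U|X=X_{i_k})| \leq r_0 + 2Lh. \]
Whenever this is $\leq \underline{r}$, Assumption \ref{assm} (c) yields $p_{X_{i_k}}(u) \geq \underline{p}$ for every $k$, hence $\bar{l}_\tau''(u) = \bar{p}(u) \geq \underline{p}$. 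Choosing $h \leq \underline{r}/(2L)$ (possibly with slight tightening of the constant) allows $r_0 \geq \underline{r}/2$ to remain feasible. Uniqueness of $u^*$ then follows from global convexity combined with \emph{strict} convexity on this neighborhood: if two distinct minimizers existed, the strictly convex interior would rule it out.

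\textbf{Main obstacle.} The delicate step is the constant bookkeeping in part (c): translating the density lower bound — stated relative to each $Q_\tau(U|X=X_{i_k})$ — into a strong-convexity guarantee on a ball around $u^*$. One must chain three distinct distances (Lipschitz drift of the $X_{i_k}$-quantiles from the $x$-quantile, the bias of $u^*$ from $Q_\tau(U|X=x)$ via Lemma \ref{lem:bias_term}, and the radius $r_0$ itself) and verify that the stated threshold on $h$ yields the stated $r_0$; the other parts are routine calculus for convex losses with continuous densities.
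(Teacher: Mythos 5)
Your proposal is correct and follows essentially the same route as the paper: differentiate the expected pinball loss to get $\bar{l}_\tau'(u)=\bar{F}(u)-\tau$ and $\bar{l}_\tau''(u)=\bar{p}(u)$, then use the Lipschitz drift of the conditional quantiles together with Lemma \ref{lem:bias_term} to place a ball of radius roughly $\underline{r}/2$ inside every $B\big(Q_\tau(U|X=X_{i_k}),\underline{r}\big)$, which yields $\underline{p}$-strong convexity and uniqueness; the constant slack you flag (needing $r_0+2Lh\le\underline{r}$ under the stated $h\le\underline{r}/(2L)$) is present in the paper's own bookkeeping as well and is harmless. The one spot where your route is slightly weaker is the exact containment $u^*\in[-M,M]$: going through $|u^*-Q_\tau(U|X=x)|\le Lh$ only gives $u^*\in[-M-Lh,\,M+Lh]$, whereas the paper obtains the exact statement by sandwiching $Q_\tau(\bar{F})$ between $\min_k Q_\tau(U|X=X_{i_k})$ and $\max_k Q_\tau(U|X=X_{i_k})$, each of which lies in $[-M,M]$ by Assumption \ref{assm}(b) — an easy substitution into your argument.
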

To prove Lemma \ref{lem:expected_loss}, we introduce the following Lemma \ref{lem:derivative_of_loss}, whose proof is postponed to Appendix \ref{apd:aux_lemma}.
\begin{lemma}
\label{lem:derivative_of_loss}
For any distribution with cdf $F(u) = \mathbb{P}(U\leq u)$, the expected pinball loss with respect to $F$ is semi-derivative with respect to $u$, and
\begin{align*}
\partial_{-} \mathbb{E}_{U \sim F}[l_\tau(u, U)] &= F(u^{-}) - \tau,\\
\partial_{+} \mathbb{E}_{U \sim F}[l_\tau(u, U)] &= F(u) - \tau.
\end{align*}
Furthermore, for those $F(u^{-}) = F(u)$, the derivative exists.
\end{lemma}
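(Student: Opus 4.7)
Let $g(u) \coloneqq \mathbb{E}_{U\sim F}[l_\tau(u,U)]$. Since the two indicators in the definition of $l_\tau$ are strict, I may harmlessly write
\[ g(u) = (1-\tau)\int_{v\le u}(u-v)\,dF(v) + \tau\int_{v\ge u}(v-u)\,dF(v),\]
because both integrands vanish at $v=u$, so whether the endpoint is included does not matter. The plan is to compute the one-sided finite differences $g(u+\epsilon)-g(u)$ and $g(u)-g(u-\epsilon)$ for $\epsilon>0$ by splitting each integral into a "bulk" piece (over a region that persists as $\epsilon\to 0^+$) and a "sliver" piece (over an interval of length $\epsilon$ adjacent to $u$). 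The linear nature of the pinball loss means the bulk contributions give exactly the claimed formulas, and the slivers will vanish after dividing by $\epsilon$.

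Concretely, for the right derivative I would write
\[ g(u+\epsilon)-g(u) = (1-\tau)\Bigl[\epsilon F(u) + \!\!\int_{u<v\le u+\epsilon}\!\!(u+\epsilon-v)\,dF(v)\Bigr] + \tau\Bigl[-\epsilon\bigl(1-F((u+\epsilon)^{-})\bigr) - \!\!\int_{u<v<u+\epsilon}\!\!(v-u)\,dF(v)\Bigr],\]
by rearranging the integration ranges (the $\epsilon F(u)$ and $-\epsilon(1-F((u+\epsilon)^{-}))$ come from the region that was already present at parameter $u$, merely shifted). Dividing by $\epsilon$, the two sliver integrals are bounded by $F((u+\epsilon))-F(u)$ and $F((u+\epsilon)^{-})-F(u)$ respectively (since the integrands are bounded by $\epsilon$), both of which tend to $0$ by right continuity of $F$. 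Taking the limit $\epsilon\to 0^{+}$ yields $\partial_{+}g(u) = (1-\tau)F(u) - \tau\bigl(1-F(u)\bigr) = F(u)-\tau$.

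The left derivative is symmetric: a similar decomposition of $g(u)-g(u-\epsilon)$ produces bulk terms $(1-\tau)F(u-\epsilon)$ and $-\tau(1-F(u^{-}))$, together with two sliver integrals, each again $O(\epsilon)$ times the jump content of $F$ on $[u-\epsilon,u)$. Because $F(u-\epsilon)\to F(u^{-})$ as $\epsilon\to 0^{+}$, dividing by $\epsilon$ and taking the limit gives $\partial_{-}g(u) = F(u^{-}) - \tau$. Finally, when $F(u^{-})=F(u)$ (i.e.\ $F$ is continuous at $u$) the two one-sided derivatives coincide, so $g$ is differentiable at $u$ with derivative $F(u)-\tau$.

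The only subtlety, and the place I would be most careful, is the treatment of a possible atom of $F$ at $u$ itself: a naive bound on the sliver integrals by $F(u^{+})-F(u^{-})$ would fail to vanish. The saving observation is that the integrand $(v-u)$ or $(u-v)$ is exactly zero at $v=u$, so the atom contributes nothing to the sliver pieces, and the domains can be restricted to $v\ne u$. Once that is nailed down, everything else is bookkeeping, and the asymmetry $F(u^{-})$ vs $F(u)$ between the two semi-derivatives is precisely what the atom at $u$ produces in the bulk terms.
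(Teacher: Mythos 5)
Your proof is correct and follows essentially the same route as the paper's: write the expectation as an integral against $dF$, form the one-sided difference quotients, separate a bulk term (which yields $F(u)-\tau$ or $F(u^{-})-\tau$) from a sliver integral over an interval of length $\epsilon$, and kill the sliver using the c\`adl\`ag properties of $F$ together with the observation that the integrand vanishes at $v=u$. The only cosmetic difference is that the paper writes out the left derivative in detail and omits the right, while you do the reverse.
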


\begin{proof}[Proof of Lemma \ref{lem:expected_loss}]
By Lemma \ref{lem:derivative_of_loss}, we have
\[\nabla_u \bar{l}_\tau(u) = \frac1{n(x)} \sum_{k=1}^{n(x)} \nabla_u \mathbb{E}_{U\sim F_k}[l_\tau(u, U)] = \frac1{n(x)} \sum_{k=1}^{n(x)} F_k(u) - \tau.\]
Since we assume that $(X,U)$ follows a continuous distribution, and the conditional pdf exists, then
\[\nabla^2 \bar{l}_\tau(u) = \frac1{n(x)} \sum_{k=1}^{n(x)} p_k(u),\]
where $p_k$ is the pdf of $F_k, \ \forall k \in [n(x)]$, which proves part (a).

For part (b), one only needs to notice that $\bar{F}$ has the cdf of the form $\sum_{k=1}^{n(x)} \frac1{n(x)} F_k$, which implies that $\mathbb{E}_{U \sim \bar{F}}[l_\tau(u, U)]$ has the same derivative as $\bar{l}_\tau$.

For part (c), by Assumption \ref{assm} (c), we have ${n(x)}$ balls $B(Q_\tau(U|X=X_{i_k}), \underline{r}) \ k \in [n(x)]$, such that
\[p_k(u) \geq \underline{p}, \quad \forall u \in B\big(Q_\tau(U|X=X_{i_k}), \underline{r}\big).\] 
At the same time, from the proof of Lemma \ref{lem:bias_term}, we know that \[\left|Q_\tau(\bar{F}) - Q_\tau(U|X=X_{i_k})\right| \leq L h \leq \frac{1}{2} \cdot \underline{r},\]
which means that
\allowdisplaybreaks
\begin{align*}
\max_k \{Q_\tau(U|X=X_{i_k}) - \underline{r}\} + \frac{1}{2} \cdot \underline{r} & \leq \min_k Q_\tau(U|X=X_{i_k}) \\
&\leq Q_\tau(\bar{F}) \\
&\leq \max_k Q_\tau(U|X=X_{i_k}) \\
&\leq \min_k \{Q_\tau(U|X=X_{i_k}) + \underline{r}\} - \frac{1}{2} \cdot \underline{r}.
\end{align*}
Such an inequality implies that for at least a ball of radius $\frac{\underline{r}}{2}$,
\[ \nabla^2 \mathbb{E}_{U\sim \bar{F}}[l_\tau(u, U)] = \frac{1}{n(x)} \sum_{k=1}^{n(x)} p_k(u) \geq \underline{p}, \quad \forall u \in B(Q_\tau(\bar{F}), \frac{\underline{r}}{2}), \]
which proves the uniqueness of $u^*$ and the local strong convexity.

Since $\min_k Q_\tau(U|X=X_{i_k}) \leq Q_\tau(\bar{F}) \leq \max_k Q_\tau(U|X=X_{i_k})$ and $Q_\tau(U|X=X_{i_k}) \in [-M, M]$, we have
\[u^* = Q_\tau(\bar{F}) \in [-M, M].\]
\end{proof}

Now we go back to prove Step \ref{stp:concentration}. To get the optimal convergence rate, our arguments share the same spirits of \citet{bartlett2005local, koltchinskii2006local} of getting fast convergence rates for the M-estimators of low variance. But for simplicity, we derive our proof of uniform convergence via Bernstein's inequality and covering number arguments, which is more similar to the way of \citet{maurer2009empirical}.

We are now interested in
\[\xi_k(u) \coloneqq l_\tau(u, U_{i_k}) - l_\tau(Q_\tau(\bar{F}), U_{i_k}), \quad \forall u \in [-M, M], \ U_{i_k} \sim U | X=X_{i_k}, \ k \in [n(x)].\]
A direct computation shows that
\begin{lemma}
\label{lem:bounded_difference}
$\forall u \in [-M, M]$, we have
\[|\xi_k(u)| \leq \max\{\tau, 1-\tau\} \left|u-Q_\tau(\bar{F})\right| \leq \left|u-Q_\tau(\bar{F})\right|,\]
\[\mathrm{Var}(\xi_k(u)) \leq \frac{1}{3}\max\{\tau^2, (1-\tau)^2\} \left|u-Q_\tau(\bar{F})\right|^2 \leq \frac{1}{3}\left|u-Q_\tau(\bar{F})\right|^2.\]
\end{lemma}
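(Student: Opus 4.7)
The plan is to derive both bounds directly from the convex, piecewise-linear structure of $l_\tau(\cdot, v)$ as a function of its first argument. Throughout, write $q \coloneqq Q_\tau(\bar{F})$ and $\delta \coloneqq u - q$. For the pointwise bound, observe that $s \mapsto l_\tau(s, v) = (1-\tau)(s-v)^{+} + \tau(v-s)^{+}$ has slope $1-\tau$ on $\{s > v\}$ and slope $-\tau$ on $\{s < v\}$, so it is Lipschitz in $s$ with constant $\max\{\tau, 1-\tau\}$, uniformly in $v$. Applying this with $v = U_{i_k}$ at $s = u$ and $s = q$ gives
\[|\xi_k(u)| = |l_\tau(u, U_{i_k}) - l_\tau(q, U_{i_k})| \le \max\{\tau, 1-\tau\}\,|u - q|,\]
which is the first claim.

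For the variance bound, I would carry out an explicit case analysis on the location of $U_{i_k}$. Assuming $\delta > 0$ without loss of generality, one checks that $\xi_k(u) = (1-\tau)\delta$ when $U_{i_k} \le q$, $\xi_k(u) = -\tau\delta$ when $U_{i_k} \ge u$, and $\xi_k(u) = (1-\tau)\delta - (U_{i_k} - q)$ in the intermediate region $q < U_{i_k} < u$, which interpolates linearly between the two endpoint values; in particular $\xi_k(u) \in [-\tau\delta, (1-\tau)\delta]$ almost surely. Combining $\mathrm{Var}(\xi_k) \le \mathbb{E}[\xi_k^2]$ with the integral representation
\[\xi_k(u) = \int_q^u \big[(1-\tau)\,\mathbbm{1}\{s > U_{i_k}\} - \tau\,\mathbbm{1}\{s < U_{i_k}\}\big]\,ds,\]
the variance bound would follow by evaluating $\mathbb{E}[\xi_k^2]$ directly, using the elementary identity $\int_0^\delta [(1-\tau)\delta - w]^2\,dw = \tfrac{\delta^3}{3}[(1-\tau)^3 + \tau^3]$ for the contribution from the intermediate region, together with the inequality $(1-\tau)^3 + \tau^3 \le \max\{\tau^2, (1-\tau)^2\}$.

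The main technical obstacle is recovering the $\frac{1}{3}$ prefactor rather than the cruder bound $\max\{\tau^2, (1-\tau)^2\}\,\delta^2$ that one would get by simply squaring the pointwise Lipschitz estimate. This requires carefully combining the contributions from the two flat outer regions (which contribute $(1-\tau)^2 \delta^2 F_k(q)$ and $\tau^2 \delta^2 (1-F_k(u))$ to $\mathbb{E}[\xi_k^2]$) with the squared linear ramp in the middle, and leveraging the density regularity from Assumption~\ref{assm}(c) so that the intermediate-region integral is the dominant contribution in the small-$\delta$ regime relevant to the subsequent concentration analysis. The symmetric case $\delta < 0$ is handled by exchanging the roles of $\tau$ and $1-\tau$.
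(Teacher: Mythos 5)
Your proof of the first inequality is correct and is essentially the paper's: the paper writes $l_\tau(u,U)=\max\{(1-\tau)(u-U),\tau(U-u)\}$ and applies Lemma~\ref{lem:max_diff}, which is exactly your observation that $s\mapsto l_\tau(s,v)$ is $\max\{\tau,1-\tau\}$-Lipschitz uniformly in $v$.

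The variance part has a genuine gap, and the route you sketch cannot be completed. Your case analysis correctly gives $\xi_k(u)\in[-\tau\delta,(1-\tau)\delta]$ almost surely, but the plan for extracting the factor $\tfrac13\max\{\tau^2,(1-\tau)^2\}$ from $\mathbb{E}[\xi_k^2]$ fails for two reasons. First, the roles of the regions are reversed: when $F_k$ has a bounded density, the ramp region $q<U_{i_k}<u$ carries mass $O(\delta)$ and hence contributes only $O(\delta^3)$ to $\mathbb{E}[\xi_k^2]$ (your identity $\int_0^\delta[(1-\tau)\delta-w]^2\,dw=\tfrac{\delta^3}{3}[(1-\tau)^3+\tau^3]$ integrates against Lebesgue measure, not against $dF_k$), whereas the flat outer regions contribute $(1-\tau)^2\delta^2F_k(q)+\tau^2\delta^2(1-F_k(u))=\Theta(\delta^2)$; so the ramp is the negligible term, not the dominant one, and Assumption~\ref{assm}(c) only lower-bounds the density, so it cannot be used to suppress the flat regions. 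Second, the flat regions alone can push the variance above the stated bound: take $F_k$ with $F_k(q)=\tau$ and only $O(\delta)$ mass in $(q,u)$; then $\mathbb{E}[\xi_k(u)]=O(\delta^2)$ and $\mathrm{Var}(\xi_k(u))=\big[(1-\tau)^2\tau+\tau^2(1-\tau)\big]\delta^2+O(\delta^3)=\tau(1-\tau)\delta^2+O(\delta^3)$, which for $\tau$ near $1/2$ exceeds $\tfrac13\max\{\tau^2,(1-\tau)^2\}\delta^2=\tfrac1{12}\delta^2$. In other words, the second display of the lemma is not attainable with that constant by any computation along these lines. (For what it is worth, the paper's own one-line justification, that a $[a,b]$-supported variable has variance at most $\tfrac1{12}(b-a)^2$, quotes the variance of the uniform law; the correct universal bound is Popoviciu's $\tfrac14(b-a)^2$.) What your support computation does give rigorously is $\mathrm{Var}(\xi_k(u))\le\tfrac14\delta^2$ via Popoviciu applied to the length-$\delta$ interval $[-\tau\delta,(1-\tau)\delta]$, or more crudely $\mathrm{Var}(\xi_k(u))\le\mathbb{E}[\xi_k(u)^2]\le\max\{\tau,1-\tau\}^2\delta^2$; either suffices for the Bernstein step in Lemma~\ref{lem:pointwise_converge} up to absolute constants, and I would prove that weaker bound rather than chase the $\tfrac13$.
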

\begin{proof}[Proof of Lemma \ref{lem:bounded_difference}]
Note that the pinball loss can be expressed in an alternative way:
\[l_\tau(u, U) = \max\{(1-\tau) (u - U), \tau (U - u)\},\]
\[l_\tau(Q_\tau(\bar{F}), U) = \max\left\{(1-\tau) \left(Q_\tau(\bar{F}) - U\right), \enspace \tau \left(U - Q_\tau(\bar{F})\right)\right\}.\]
From the fact that $\left|(1-\tau) (u - U) - (1-\tau) (Q_\tau(\bar{F}) - U)\right| \leq (1-\tau) \left|u-Q_\tau(\bar{F})\right|$\\
and that $\left|\tau (U - u) - \tau \big(U - Q_\tau(\bar{F})\big)\right| \leq \tau |u-Q_\tau(\bar{F})|$, we prove the first statement via Lemma \ref{lem:max_diff}.

For the second claim, one just need to notice that any almost surely bounded by $[a,b]$ random variables have at most a variance of $\frac{1}{12}(b-a)^2$.
\end{proof}

Combining Lemma \ref{lem:Bernstein} and Lemma \ref{lem:bounded_difference}, we directly prove the following
\begin{lemma}[Pointwise Convergence]
\label{lem:pointwise_converge}
$\forall \delta >0, n(x) > 0, u \in [-M, M]$, with probability at least $1-\delta$ the following holds
\allowdisplaybreaks
\begin{align*}
\left|\frac{1}{n(x)}\sum_{k=1}^{n(x)} \mathbb{E}[\xi_k(u)] - \frac{1}{n(x)} \sum_{k=1}^{n(x)} \xi_k(u)\right| &\leq \frac{2}{3n(x)} \left|u-Q_\tau(\bar{F})\right| \log\left(\frac{1}{\delta}\right) \\
& \phantom{\leq } + \sqrt{\frac{2}{3n(x)} \left|u-Q_\tau(\bar{F})\right|^2 \log\left(\frac1\delta\right)}.
\end{align*}
\end{lemma}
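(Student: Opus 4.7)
The plan is a direct application of Bernstein's inequality (Lemma \ref{lem:Bernstein}) to the centered variables $\xi_k(u) - \mathbb{E}[\xi_k(u)]$ for $k = 1,\dots,n(x)$. Conditionally on the features $\{X_{i_k}\}$, the responses $U_{i_k} \sim F_k$ are mutually independent because the original $(X_i,U_i)$ are i.i.d., so the $\xi_k(u)$ are independent (though not identically distributed, since each $F_k$ can differ).

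First I would plug the two bounds from Lemma \ref{lem:bounded_difference} into Bernstein's inequality: take $M_0 = |u - Q_\tau(\bar{F})|$ as the almost-sure bound and $\frac{1}{n(x)}\sum_{k=1}^{n(x)} \mathrm{Var}(\xi_k(u)) \leq \tfrac{1}{3}|u-Q_\tau(\bar{F})|^2$ as the average-variance bound. This yields, for every $t>0$,
\[
\mathbb{P}\!\left(\left|\frac{1}{n(x)}\sum_{k=1}^{n(x)}\bigl(\xi_k(u) - \mathbb{E}[\xi_k(u)]\bigr)\right| \geq t\right) \leq \exp\!\left(-\frac{\tfrac{1}{2}\,n(x)\,t^2}{\tfrac{1}{3}|u-Q_\tau(\bar{F})|^2 + \tfrac{1}{3}|u-Q_\tau(\bar{F})|\,t}\right).
\]

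Next I would invert this tail bound by setting the right-hand side equal to $\delta$, which reduces to the quadratic inequality $\tfrac{1}{2}n(x) t^2 - \tfrac{1}{3}|u-Q_\tau(\bar{F})|\log(1/\delta)\,t - \tfrac{1}{3}|u-Q_\tau(\bar{F})|^2\log(1/\delta) \geq 0$. The quadratic formula gives the smallest admissible $t$ as
\[
t_* = \frac{|u-Q_\tau(\bar{F})|\log(1/\delta)}{3\,n(x)} + \sqrt{\frac{|u-Q_\tau(\bar{F})|^2\log^2(1/\delta)}{9\,n(x)^2} + \frac{\tfrac{2}{3}|u-Q_\tau(\bar{F})|^2\log(1/\delta)}{n(x)}}.
\]
Applying the elementary inequality $\sqrt{a+b} \leq \sqrt{a} + \sqrt{b}$ to the radical splits it into two terms, one of which combines with the leading linear term to produce the factor $\tfrac{2}{3n(x)}|u-Q_\tau(\bar{F})|\log(1/\delta)$, matching the announced upper bound.

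There is no real obstacle here; this statement is essentially a packaging of Bernstein's inequality in the Bennett/Bernstein-inversion form $\sqrt{\sigma^2 \log(1/\delta)/n} + M_0\log(1/\delta)/n$ tailored to the pinball-loss increments, and it will later be combined with a covering-number argument over $u \in [-M,M]$ to upgrade pointwise concentration to uniform concentration. The only mild care required is to note that Lemma \ref{lem:Bernstein} is already two-sided, so no extra union bound on the sign of the deviation is needed, and that independence of the $\xi_k(u)$ rests on the independence (not identical distribution) of the $U_{i_k}$.
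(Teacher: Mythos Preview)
Your proposal is correct and matches the paper's approach exactly: the paper's proof of Lemma \ref{lem:pointwise_converge} consists of the single line ``Direct corollary from Lemma \ref{lem:Bernstein} and Lemma \ref{lem:bounded_difference},'' and what you have written is precisely the unpacking of that corollary, plugging $M_0=|u-Q_\tau(\bar F)|$ and the variance bound into Bernstein, inverting the quadratic, and applying $\sqrt{a+b}\le\sqrt a+\sqrt b$.
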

\begin{proof}
Direct corollary from Lemma \ref{lem:Bernstein} and Lemma \ref{lem:bounded_difference}.
\end{proof}

To reach a uniform convergence guarantee from pointwise convergence, we utilize the covering number arguments.
\begin{definition}[$\epsilon$-Covering Numbers]
$\forall \epsilon > 0$, a function class $\mathcal{F}$ is said to have an $\epsilon$-covering number of
\[N\left(\epsilon, \mathcal{F}, \|\cdot\|_\infty\right) \coloneqq \inf\{|\mathcal{F}_0| \colon \mathcal{F}_0 \subset \mathcal{F}, \forall f \in \mathcal{F}, \exists f_0 \in \mathcal{F}_0, \text{s.t. } \|f-f_0\|_\infty \leq \epsilon\}.\]
\end{definition}

We compute the function class formed up by the functions $l_\tau(u,\cdot) - l_\tau(Q_\tau(\bar{F}), \cdot)$.
\begin{lemma}[Covering Number]
\label{lem:covering_number}
Assume Assumption \ref{assm} (b) holds with parameter $M$. Define $\mathcal{L}_{[-M, M]} \coloneqq \{l_\tau(u,\cdot) - l_\tau(Q_\tau(\bar{F}), \cdot) \colon u \in [-M, M]\}$. Then $\forall \epsilon > 0$,
\[N\left(\epsilon, \mathcal{L}_{[-M, M]}, \|\cdot\|_\infty\right) \leq \frac{\mathrm{diam}([-M, M])}{\epsilon} = \frac{2M}{\epsilon}.\]
\end{lemma}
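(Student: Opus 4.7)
The plan is to reduce the covering number of the function class $\mathcal{L}_{[-M,M]}$ to the covering number of the one-dimensional parameter set $[-M,M]$ by showing that the map $u \mapsto l_\tau(u, \cdot) - l_\tau(Q_\tau(\bar{F}), \cdot)$ is $1$-Lipschitz in the parameter $u$ when the target space is equipped with the sup-norm. Since the subtracted function $l_\tau(Q_\tau(\bar{F}), \cdot)$ is constant in $u$, it cancels in every difference, so the argument reduces to bounding $\|l_\tau(u_1, \cdot) - l_\tau(u_2, \cdot)\|_\infty$ uniformly.

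First, I would invoke the representation $l_\tau(u, U) = \max\{(1-\tau)(u-U),\ \tau(U-u)\}$ that already appears in the proof of Lemma \ref{lem:bounded_difference}. Applying Lemma \ref{lem:max_diff} (the max of two functions with Lipschitz constants bounded by $c$ is $c$-Lipschitz), and noting that both arguments of the max are $\max\{\tau, 1-\tau\}$-Lipschitz in $u$, one obtains
\[
\bigl| l_\tau(u_1, U) - l_\tau(u_2, U) \bigr| \;\le\; \max\{\tau, 1-\tau\}\, |u_1 - u_2| \;\le\; |u_1 - u_2|
\]
for every $U$. Taking the sup over $U$ gives $\|l_\tau(u_1,\cdot) - l_\tau(u_2,\cdot)\|_\infty \le |u_1 - u_2|$, hence the desired $1$-Lipschitzness of $u \mapsto f_u := l_\tau(u,\cdot) - l_\tau(Q_\tau(\bar{F}),\cdot)$ from $([-M,M], |\cdot|)$ into $(\mathcal{L}_{[-M,M]}, \|\cdot\|_\infty)$.

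Second, I would take any $\epsilon$-net of the interval $[-M,M]$ of cardinality at most $\lceil 2M/\epsilon \rceil$, constructed by placing equally spaced grid points, and push it through this Lipschitz map. The image is then an $\epsilon$-cover of $\mathcal{L}_{[-M,M]}$ of the same cardinality, yielding the stated bound $N(\epsilon, \mathcal{L}_{[-M,M]}, \|\cdot\|_\infty) \le 2M/\epsilon$ (with the usual convention on the ceiling). There is no real obstacle here: the entire content of the lemma is the Lipschitzness of the pinball loss in its first argument, which was already used implicitly in Lemma \ref{lem:bounded_difference}, so the proof is essentially a clean reuse of that observation combined with a trivial one-dimensional covering count.
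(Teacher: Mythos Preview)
Your proposal is correct and follows essentially the same approach as the paper: both establish the $1$-Lipschitzness of $u \mapsto l_\tau(u,\cdot)$ in the sup-norm via the max-representation and Lemma~\ref{lem:max_diff} (the paper simply cites ``a similar argument to Lemma~\ref{lem:bounded_difference}''), then convert an $\epsilon$-net of the interval $[-M,M]$ into a cover of $\mathcal{L}_{[-M,M]}$. Your write-up is in fact slightly more explicit about the net construction and the push-forward step than the paper's, which is terse on that point.
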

\begin{proof}[Proof of Lemma \ref{lem:covering_number}]
By a similar argument to that in Lemma \ref{lem:bounded_difference}, we have
\[\Big|\big(l_\tau(u_1,\cdot) - l_\tau(Q_\tau(\bar{F}),\cdot)\big) - \big(l_\tau(u_2,\cdot) - l_\tau(Q_\tau(\bar{F}),\cdot)\big)\Big| \leq |u_1-u_2|,\]
where the left-hand-side is the function infinity form by itself.

Combining the above fact with another that
\[|u_1 - u_2| \leq \sup_{u_1^\prime,u_2^\prime \in [-M, M]}|u_1^\prime - u_2^\prime| = \mathrm{diam}([-M, M]),\]
we complete the proof.
\end{proof}

Since there will be many coefficients dependence, we abbreviate any constant $C$ that depends polynomially on some other factors $(c_1, \cdots, c_q)$ by $C(c_1, \cdots, c_q)$. Now we provide the uniform convergence result by combining the pointwise convergence and the covering number.
\begin{lemma}[Uniform Convergence]
\label{lem:uniform_converge}
$\forall \epsilon,\delta > 0$, assume $n(x)$ is sufficiently large such that $n(x)\geq\frac{2}{3} \log(\frac{2M}{\epsilon \delta})$, we have with probability at least $1-\delta$,
\begin{align*}
\left|\frac{1}{n(x)}\sum_{k=1}^{n(x)} \mathbb{E}[\xi_k(u)] - \frac{1}{n(x)} \sum_{k=1}^{n(x)} \xi_k(u)\right| 
&\leq 4\epsilon + \frac{2}{3n(x)} \left|u-Q_\tau(\bar{F})\right| \log\left(\frac{2M}{\epsilon \delta}\right) \\
& \phantom{\leq} + \sqrt{\frac{2}{3n(x)} \left|u-Q_\tau(\bar{F})\right|^2 \log\left(\frac{2M}{\epsilon\delta}\right)}
\end{align*}
holds for $\forall u \in [-M, M]$.

Specifically, if we select $\epsilon = \frac{1}{n(x)}, \delta = \frac{1}{n(x) M}$,\\
we have $\forall n(x) \geq C_1(\log(M))\coloneqq 6\log(\max\{2M, 10\}) $,
\begin{align*}
\left|\frac{1}{n(x)}\sum_{k=1}^{n(x)} \mathbb{E}[\xi_k(u)] - \frac{1}{n(x)} \sum_{k=1}^{n(x)} \xi_k(u)\right| 
&\leq \frac{1}{n(x)}\Big(4+ \frac{8M}{3} \log( 2 n(x) M)\Big) \\
& \phantom{\leq} + \sqrt{\frac{4}{3n(x)} \left|u-Q_\tau(\bar{F})\right|^2 \log(2 n(x) M)}
\end{align*}
holds for $\forall u \in [-M, M]$.
\end{lemma}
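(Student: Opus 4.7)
The plan is to combine the $\epsilon$-net from Lemma \ref{lem:covering_number} with the pointwise Bernstein bound of Lemma \ref{lem:pointwise_converge} via a standard union bound, and then pay careful attention to the discretization error so that everything collapses into the clean form $4\epsilon + (\text{linear}) + (\text{sqrt})$ stated in the lemma.

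First I would fix an $\epsilon$-cover $\{u_j\}_{j=1}^N$ of $[-M,M]$ with $N\le 2M/\epsilon$, noting from the proof of Lemma \ref{lem:covering_number} that $|u-u_j|\le \epsilon$ implies $\|l_\tau(u,\cdot)-l_\tau(u_j,\cdot)\|_\infty\le \epsilon$. Apply Lemma \ref{lem:pointwise_converge} at each $u_j$ with failure probability $\delta/N$ and take a union bound; since $\log(N/\delta)\le \log(2M/(\epsilon\delta))$, we get with probability at least $1-\delta$ that for every cover point,
\[
\Bigl|\tfrac{1}{n(x)}\sum_k(\mathbb{E}[\xi_k(u_j)]-\xi_k(u_j))\Bigr|\le \tfrac{2}{3n(x)}|u_j-Q_\tau(\bar F)|\log\tfrac{2M}{\epsilon\delta}+\sqrt{\tfrac{2}{3n(x)}|u_j-Q_\tau(\bar F)|^2\log\tfrac{2M}{\epsilon\delta}}.
\]

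Next, for any $u\in[-M,M]$, I would select the nearest $u_0$ from the cover and decompose the target deviation at $u$ by inserting $\pm \tfrac{1}{n(x)}\sum_k \xi_k(u_0)$ and $\pm \tfrac{1}{n(x)}\sum_k \mathbb{E}[\xi_k(u_0)]$. Because $|l_\tau(u,U)-l_\tau(u_0,U)|\le \epsilon$ almost surely (from the Lipschitz-in-$u$ property used in Lemma \ref{lem:covering_number}), both discretization terms are bounded by $\epsilon$, giving a $2\epsilon$ discretization penalty. For the middle term I would substitute the pointwise bound at $u_0$ together with the triangle inequality $|u_0-Q_\tau(\bar F)|\le |u-Q_\tau(\bar F)|+\epsilon$.

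The remaining step is the bookkeeping that collapses the extra $\epsilon$'s into $4\epsilon$. For the linear-in-deviation term the surplus $\tfrac{2}{3n(x)}\epsilon\log(2M/(\epsilon\delta))\le \epsilon$ by the standing assumption $n(x)\ge \tfrac{2}{3}\log(2M/(\epsilon\delta))$. For the square root I would use $\sqrt{(|u-Q_\tau(\bar F)|+\epsilon)^2}=|u-Q_\tau(\bar F)|+\epsilon$ so that the surplus is $\epsilon\sqrt{\tfrac{2}{3n(x)}\log(2M/(\epsilon\delta))}\le \epsilon$ under the same sample-size condition. Summing $2\epsilon+\epsilon+\epsilon=4\epsilon$ yields the stated bound. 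The only real subtlety is choosing to push $(a+b)^2$ through the square root additively (not via $(a+b)^2\le 2a^2+2b^2$, which would leak a $\sqrt 2$) and noticing that the sample-size hypothesis is exactly the right threshold to force the $\sqrt{}$-of-$\log/n$ factor below one.

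Finally, for the specialization to $\epsilon=1/n(x)$ and $\delta=1/(n(x)M)$, I would just substitute and use $\log(2n(x)^2 M^2)\le 2\log(2n(x)M)$, the trivial bound $|u-Q_\tau(\bar F)|\le 2M$ on the linear term, and verify that the condition $n(x)\ge \tfrac{2}{3}\log(2M/(\epsilon\delta))$ reduces to something of the form $n(x)\ge C\log(\max\{2M,10\})$, which is what $C_1(\log M)$ is designed to absorb. The hardest part is not any single inequality but rather engineering the $2\epsilon+\epsilon+\epsilon$ accounting so that the output matches the stated form without stray multiplicative constants; everything else is mechanical.
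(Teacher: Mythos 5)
Your proposal is correct and follows essentially the same route as the paper: discretize $[-M,M]$ via the $\epsilon$-cover of Lemma \ref{lem:covering_number}, apply the pointwise Bernstein bound of Lemma \ref{lem:pointwise_converge} at each cover point with failure probability $\delta/N$, union bound, and then transfer back to arbitrary $u$ using the $1$-Lipschitzness of $u\mapsto\xi_k(u)$. Your explicit $2\epsilon+\epsilon+\epsilon$ accounting (and the observation that the hypothesis $n(x)\ge\tfrac{2}{3}\log(2M/(\epsilon\delta))$ is exactly what forces both surplus terms below $\epsilon$) is in fact more detailed than the paper's own closing step, which compresses this bookkeeping into a single sentence.
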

\begin{proof}[Proof of Lemma \ref{lem:uniform_converge}]
$\forall \epsilon > 0,$ we can select an $\epsilon$-cover of the function class $\mathcal{L}_{[-M, M]}$, denoted by 
\[\mathcal{S} = \{L[u_j] = l_\tau(u_j, \cdot) - l_\tau(Q_\tau(\bar{F}), \cdot)\}_{j=1}^{N(\epsilon, \mathcal{L}_{[-M, M]}, \|\cdot\|_\infty)}.\]
Note that 
\[\xi_k(u) = l_\tau(u,U_{i_k}) - l_\tau(Q_\tau(\bar{F}), U_{i_k}) = L[u](U_{i_k}), \quad \forall u \in [-M, M],\]
which means that 
$\exists u_j \in \mathcal{S}$, s.t. \[|\xi_k(u) - \xi_k(u_j)| = \left|L[u](U_{i_k}) - L[u_j](U_{i_k})\right| \leq \|L[u] - L[u_j]\|_\infty \leq \epsilon.\]
Now $\forall \delta > 0,$ we apply Lemma \ref{lem:pointwise_converge} on $\{\xi_k(u_j)\}_{k=1}^m, u_j \in \mathcal{S}$ with probability guarantee of at least $1-\frac{\delta}{N(\epsilon, \mathcal{L}_{[-M, M]}, \|\cdot\|_\infty)}$, we have $\forall u_j \in \mathcal{S},$
\begin{align*}
\left|\frac{1}{m}\sum_{k=1}^m \mathbb{E}[\xi_k(u_j)] - \frac{1}{m} \sum_{k=1}^m \xi_k(u_j)\right| 
&\leq \frac{2}{3m} \left|u_j-Q_\tau(\bar{F})\right| \log\left(\frac{N(\epsilon, \mathcal{L}_{[-M, M]}, \|\cdot\|_\infty)}{\delta}\right) \\
& \phantom{\leq} + \sqrt{\frac{2}{3m} \left|u_j-Q_\tau(\bar{F})\right|^2 \log\left(\frac{N(\epsilon, \mathcal{L}_{[-M, M]}, \|\cdot\|_\infty)}{\delta}\right)}.
\end{align*}
Combining above inequality with the fact that $|\xi_k(u) - \xi_k(u_j)| \leq \epsilon$ and the way we select the cover in Lemma \ref{lem:covering_number} such that $|u-u_j| \leq \epsilon$, we complete the proof.

To verify the specific conclusion, one just need to notice that our choice of $n(x)$ ensures that $n(x) \geq \frac{4}{3} \log (2n(x) M)$ for $\epsilon = \frac{1}{n(x)}, \delta = \frac{1}{n(x) M}$.
\end{proof}

\begin{lemma}[Generalization Bound for $\hat{Q}_\tau$]
\label{lem:generalization_bound}
If we take $\hat{Q}_\tau(x) = \argmin_{u} \sum_{k=1}^{n(x)} \frac1{n(x)} l_\tau(u, U_{i_k})$ as we do in Algorithm \ref{alg:SNQ}, then for sufficiently large $m \geq C_2(\underline{p}^{-1}, \underline{r}^{-1}, M)$, we have with probability at least $1-\frac{1}{{n(x)} M}$,
\[0 \leq \bar{l}_\tau(\hat{Q}_\tau(x)) - \bar{l}_\tau(Q_\tau(\bar{F})) \leq \frac{C_3(\underline{p}^{-1}, M, \log(n(x) M))}{n(x)} = \tilde{O}\left(\frac{1}{n(x)}\right),\]
where $C_3 \coloneqq 8 + \frac{8M}{3} \log (n(x) M) + \frac{16 \log(n(x) M)}{3 \underline{p}}.$

More specifically, with probability at least $1-\frac{1}{n(x) M}$,
\[ \left|\hat{Q}_\tau(x) - Q_\tau(\bar{F})\right| \leq \frac{C_4(\underline{p}^{-1}, M, \log(n(x) M))}{\sqrt{n(x)}} = \tilde{O}\left(\frac{1}{\sqrt{n(x)}}\right), \]
where $C_4 \coloneqq \sqrt{C_3 \cdot \frac{2}{\underline{p}}}$.
\end{lemma}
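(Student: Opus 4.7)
My plan is to turn the empirical-risk-minimization definition of $\hat{Q}_\tau(x)$ into a fast-rate $M$-estimation bound, by combining three ingredients already set up: (i) the optimality of $\hat{Q}_\tau(x)$ for the empirical loss; (ii) the localized Bernstein-type uniform convergence of Lemma \ref{lem:uniform_converge}; and (iii) the local $\underline{p}$-strong convexity of the population loss $\bar{l}_\tau$ around $Q_\tau(\bar{F})$ given by Lemma \ref{lem:expected_loss}(c). The target is the standard ``slow-rate-to-fast-rate'' conversion for empirical risk minimization: first bound the population excess risk by the empirical excess risk plus a concentration error that scales with $r$, then invert strong convexity to get the $1/\sqrt{n(x)}$ parameter recovery rate, and finally plug back in to get the $1/n(x)$ excess-risk rate.

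Write $r := |\hat{Q}_\tau(x)-Q_\tau(\bar{F})|$ and $\Delta(u) := \bar{l}_\tau(u)-\bar{l}_\tau(Q_\tau(\bar{F})) = \mathbb{E}\bigl[\tfrac{1}{n(x)}\sum_k \xi_k(u)\bigr]$. Since $\xi_k(Q_\tau(\bar{F}))=0$, the definition of $\hat{Q}_\tau(x)$ immediately gives $\tfrac{1}{n(x)}\sum_k \xi_k(\hat{Q}_\tau(x)) \leq 0$, which proves the left inequality $\Delta(\hat{Q}_\tau(x)) \geq 0$ and furnishes a trivial empirical upper bound. Plugging $u=\hat{Q}_\tau(x)$ into Lemma \ref{lem:uniform_converge} with the suggested choice $\epsilon=1/n(x)$, $\delta=1/(n(x)M)$ then yields
\[
\Delta(\hat{Q}_\tau(x)) \;\leq\; \frac{A(M,\log(n(x)M))}{n(x)} \;+\; r\sqrt{\frac{B\,\log(n(x)M)}{n(x)}},
\]
with $A,B$ polynomial in $M$. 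Combining with the local strong convexity inequality $\Delta(\hat{Q}_\tau(x)) \geq \tfrac{\underline{p}}{2} r^2$ produces the quadratic inequality $\tfrac{\underline{p}}{2} r^2 \leq A/n(x) + r\sqrt{B\log(n(x)M)/n(x)}$. Solving by the standard quadratic root bound gives $r \leq C_4/\sqrt{n(x)}$ with $C_4^2 \sim (M + \log(n(x)M))/\underline{p}$, and re-substituting into the concentration inequality recovers $\Delta(\hat{Q}_\tau(x)) \leq C_3/n(x)$ as claimed.

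The hard part is ensuring that $\hat{Q}_\tau(x)$ actually lies inside the strong-convexity ball $B(Q_\tau(\bar{F}),r_0)$, since Lemma \ref{lem:uniform_converge} is uniform only over a bounded interval and $\hat{Q}_\tau(x)$ a priori equals some sample $U_{i_k}$ that may lie outside $[-M,M]$ or outside the $r_0$-ball. I would handle this by a preliminary localization step inserted before the fast-rate argument above. Pick the two barrier points $u^{\pm} := Q_\tau(\bar{F})\pm r_0/2$, both inside $[-M,M]$ by Lemma \ref{lem:expected_loss}(c) (enlarging the covering interval to $[-M-r_0,M+r_0]$ if needed only changes the constants in Lemma \ref{lem:uniform_converge}). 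By strong convexity $\Delta(u^\pm) \geq \underline{p}\, r_0^2/8$; applying Lemma \ref{lem:uniform_converge} at $u^\pm$ shows that for $n(x) \geq C_2(\underline{p}^{-1},\underline{r}^{-1},M)$ chosen large enough that the Bernstein deviation is below $\underline{p}\, r_0^2/16$, the empirical loss at $u^\pm$ strictly exceeds that at $Q_\tau(\bar{F})$ with probability at least $1-1/(n(x)M)$. Because the empirical pinball loss is convex in $u$, its minimizer $\hat{Q}_\tau(x)$ must then lie strictly between $u^-$ and $u^+$, hence inside the strong-convexity ball, which unlocks the quadratic-inequality step and completes the proof.
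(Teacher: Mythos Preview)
Your proof is correct and follows essentially the same route as the paper: both combine the empirical optimality $\tfrac{1}{n(x)}\sum_k \xi_k(\hat{Q}_\tau(x)) \leq 0$, the uniform Bernstein bound of Lemma~\ref{lem:uniform_converge}, and the local $\underline{p}$-strong convexity of Lemma~\ref{lem:expected_loss}(c) into a self-referential quadratic inequality, with the localization step handled (explicitly in your version, tersely in the paper's) by convexity of the empirical pinball loss. The only cosmetic difference is that the paper solves the quadratic in $w=\sqrt{\bar{l}_\tau(\hat{Q}_\tau(x))-\bar{l}_\tau(Q_\tau(\bar{F}))}$ and then reads off $r$ via strong convexity, whereas you solve directly in $r$ and re-substitute for the excess risk---the two orderings are equivalent.
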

\begin{proof}[Proof of Lemma \ref{lem:generalization_bound}]
We first claim that for sufficiently large $n(x) \geq C_2$, the empirical minimizer $\hat{Q}_\tau(x)$ must fall into the neighborhood $B(Q_\tau(\hat{F}), \frac{\underline{r}}{2})$ stated in Lemma \ref{lem:expected_loss} (c). In fact, one just needs to verify that
\[ \frac{1}{n(x)} \left(4 + \frac{8M}{3} \log(2 n(x) M) \right) + \frac{1}{\sqrt{n(x)}} \cdot \sqrt{\frac{16}{3} M^2 \log(2 n(x) M)} \leq \frac{\underline{p} \cdot \underline{r}^2}{8}, \]
which will definitely hold for sufficiently large $n(x) \geq C_2(\underline{p}^{-1}, \underline{r}^{-1}, M)$.

From the fact that minimizing $\sum_{k=1}^{n(x)} \frac1{n(x)} l_\tau(u, U_{i_k})$ is equivalent to minimizing it minus some constant, we know that
\[\sum_{k=1}^{n(x)} \frac1{n(x)} \xi_k(\hat{Q}_\tau(x)) \leq 0.\]
By Lemma \ref{lem:expected_loss} and Lemma \ref{lem:uniform_converge}, we have
\begin{align*}
0 & \leq \bar{l}_\tau(\hat{Q}_\tau(x)) - \bar{l}_\tau(Q_\tau(\bar{F})) \\
& = \frac1{n(x)} \sum_{k=1}^{n(x)} \mathbb{E}[\xi_k(\hat{Q}_\tau(x)] \\
& \leq \frac{1}{n(x)}\left(4 + \frac{8M}{3} \log(2 n(x) M)\right) + \sqrt{\frac{16}{3n(x)} |\hat{Q}_\tau(x) - Q_\tau(\bar{F})|^2 \log(2 n(x) M)}\\
& \leq \frac{1}{n(x)}\left(4 + \frac{8M}{3} \log(2 n(x) M)\right) + \sqrt{\frac{16}{3n(x)}\cdot \frac{2}{\underline{p}} \left(\bar{l}_\tau(\hat{Q}_\tau(x)) - \bar{l}_\tau(Q_\tau(\bar{F}))\right) \log (2n(x) M)},
\end{align*}
where the first inequality follows from Lemma \ref{lem:expected_loss} (a), the second inequality from Lemma \ref{lem:uniform_converge}, and the last from Lemma \ref{lem:expected_loss} (c).

Solve the quadratic inequality
\[w^2 \leq \frac{\alpha}{n(x)} + \sqrt{\frac{\beta}{n(x)}} w,\]
where $\alpha = 4+\frac{8M}{3} \log(2 n(x) M), \beta = \frac{32 \log(2 n(x) M)}{3 \underline{p}}$,
we have
\[ w \leq \frac{\sqrt{\frac{\beta}{n(x)}} + \sqrt{\frac{\beta}{n(x)} + \frac{4\alpha}{n(x)}}}{2} \leq \frac{1}{\sqrt{n(x)}}\left(\sqrt{\beta} + \sqrt{\alpha}\right), \]
where $w$ is exactly $\sqrt{ \bar{l}_\tau(\hat{Q}_\tau(x)) - \bar{l}_\tau(Q_\tau(\bar{F}))}$.

Hence
\[ \bar{l}_\tau(\hat{Q}_\tau(x)) - \bar{l}_\tau(Q_\tau(\bar{F})) \leq \frac{1}{n(x)} \left(2\beta + 2\alpha\right) = \frac{1}{n(x)} \left(8 + \frac{16M}{3} \log (2n(x) M) + \frac{64 \log(2n(x) M)}{3 \underline{p}}\right). \]

The latter conclusion follows from the local strong convexity of $\bar{l}_\tau$ again, as is shown in Lemma \ref{lem:expected_loss} (c).
\end{proof}

Lemma \ref{lem:generalization_bound} proves our Step \ref{stp:concentration}, finally. We can combine Step \ref{stp:concentration} and Step \ref{stp:shared_minimizer} now to get an upper bound that is critical for the following analysis:
\begin{lemma}[Bounding $\left|\hat{Q}_\tau(x) - Q_\tau(U|X=x)\right|$]
\label{lem:final_bound}
Suppose Assumption \ref{assm} holds. For sufficiently small $h \leq \frac{\underline{r}}{2L}$, we have

\begin{align*}
\left|\hat{Q}_\tau(x) - Q_\tau(U|X=x)\right| &\leq 
L h + \mathbbm{1}\{n(x) < C_2\}M\\
&\phantom{\leq}+ \mathbbm{1}\{n(x) \geq C_2\}\left(M\mathbbm{1}\{\text{Bounds in Lemma \ref{lem:generalization_bound} fail}\}
+ \frac{C_4}{\sqrt{n(x)}}\right),
\end{align*}
where $C_2 = C_2(\underline{p}^{-1}, \underline{r}^{-1}, M)$, $C_4 = C_4(\underline{p}^{-1}, M, \log(n(x) M))$ in Lemma \ref{lem:generalization_bound}, and the failing probability in Lemma \ref{lem:generalization_bound} is no larger than $\frac{1}{n(x) M}$ provided $n(x) \geq C_2$.
\end{lemma}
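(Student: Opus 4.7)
The plan is to combine the bias--variance split that the preceding lemmas have already prepared. By the triangle inequality,
\[\bigl|\hat{Q}_\tau(x) - Q_\tau(U|X=x)\bigr| \leq \bigl|Q_\tau(\bar{F}) - Q_\tau(U|X=x)\bigr| + \bigl|\hat{Q}_\tau(x) - Q_\tau(\bar{F})\bigr|,\]
so it suffices to treat the two summands separately. The first (bias) summand is handed off to Lemma \ref{lem:bias_term}, which under Assumption \ref{assm}(a) yields $|Q_\tau(\bar{F}) - Q_\tau(U|X=x)| \leq Lh$ without further conditions beyond those already present. The hypothesis $h \leq \underline{r}/(2L)$ in the current lemma is inherited from Lemma \ref{lem:expected_loss}(c): it guarantees that $Q_\tau(\bar F)$ lies in the region where $\bar l_\tau$ is locally $\underline{p}$-strongly convex, which is the machinery we need in order to translate a small excess risk into a small parameter error for the variance summand.

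For the variance summand I would perform a three-way case analysis on the random variable $n(x)$. If $n(x) < C_2(\underline{p}^{-1}, \underline{r}^{-1}, M)$ the sufficient-sample hypothesis of Lemma \ref{lem:generalization_bound} does not hold, so I fall back on the deterministic a priori bound: $Q_\tau(\bar F) \in [-M,M]$ by Lemma \ref{lem:expected_loss}(c), and $\hat{Q}_\tau(x)$ can likewise be restricted to $[-M,M]$ (either by projecting the $\argmin$ in Algorithm \ref{alg:SNQ} onto $[-M,M]$, which is consistent with Assumption \ref{assm}(b), or by noting that the pinball-loss minimizer may always be chosen among the observed $U_{i_k}$ and the relevant samples lie in this band). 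This accounts for the $\mathbbm{1}\{n(x) < C_2\}\, M$ term, with the constant $M$ representing the effective diameter of the admissible region. If instead $n(x) \geq C_2$, I invoke Lemma \ref{lem:generalization_bound}: with probability at least $1 - 1/(n(x) M)$ the second, sharper conclusion of that lemma delivers $|\hat{Q}_\tau(x) - Q_\tau(\bar F)| \leq C_4/\sqrt{n(x)}$, and on the complementary event of probability at most $1/(n(x)M)$ the same crude $[-M,M]$-confinement argument yields the worst-case bound $M$. Summing these contingencies gives exactly $\mathbbm{1}\{n(x) \geq C_2\}\bigl(M\,\mathbbm{1}\{\text{bounds fail}\} + C_4/\sqrt{n(x)}\bigr)$ as stated.

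I do not expect a serious obstacle here, since this lemma is essentially a bookkeeping statement that packages Lemmas \ref{lem:bias_term}, \ref{lem:expected_loss}, and \ref{lem:generalization_bound} into a single deterministic bound conditional on the good event. The only mildly delicate point is justifying the worst-case $M$ constant on both the small-sample branch and the failure branch; this is where one either invokes a truncation convention for $\hat{Q}_\tau$ or argues via the sample-based representation of the pinball-loss $\argmin$. Once this is in place, adding the bias bound $Lh$ from Lemma \ref{lem:bias_term} to the variance bound obtained above via the triangle inequality yields the displayed expression verbatim, and the remark about the failure probability being at most $1/(n(x)M)$ is simply a restatement of the high-probability clause of Lemma \ref{lem:generalization_bound}.
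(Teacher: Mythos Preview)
Your proposal is correct and takes exactly the same approach as the paper, whose own proof is the single sentence ``A direct corollary from Lemma \ref{lem:bias_term} and Lemma \ref{lem:generalization_bound}.'' Your triangle-inequality split plus the three-way case analysis on $n(x)$ is precisely the intended unpacking, and your flag about the worst-case $M$ constant (versus $2M$ from a naive $[-M,M]$ diameter bound) is a fair caveat that the paper itself glosses over.
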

\begin{proof}[Proof of Lemma \ref{lem:final_bound}]
A direct corollary from Lemma \ref{lem:bias_term} and Lemma \ref{lem:generalization_bound}.
\end{proof}

\begin{proof}[Proof of the pointwise consistency part of Theorem \ref{thm:consistent}]
Since the $\tilde{O}\left(\frac{1}{\sqrt{n(x)}}\right)$ term and the failing probability term in Lemma \ref{lem:final_bound} tend to zero as $n(x) \rightarrow \infty$, we only need to verify that we can select $h$ such that as $n \rightarrow \infty$, we have

(a) $h \rightarrow 0$;

(b) $\forall C > 0, \mathbb{P}(n(x) \geq C) \rightarrow 1$.

We will show that both conditions are true for any $x_0$ where the pdf $p(x)$ is positive and continuous at $x=x_0$, if we select $h = C_5 n^{-\frac{1}{d+2}}$.

The first argument automatically holds as $n \rightarrow \infty$. We turn to prove the second one.

Since $p(x)$ is continuous at $p(x_0)$, we can find sufficiently large $n \geq N_0$, such that
\[p(x) \geq \frac{p(x_0)}{2}, \quad \forall x \in B(x_0, h).\]
Then the probability of $\mathbbm{1}\{X_i \in B(x_0, h)\}$ can be lower bounded by
\[\mathbb{P}(X_i \in B(x_0, h)) \geq \frac{p(x_0)}{2} V h^d = C_6 n^{-\frac{d}{d+2}},\]
where $V$ is the volume of the unit ball in $\mathcal{X} \subset \mathbb{R}^d$, $C_6 \coloneqq \frac{p(x_0)}{2} V C_5$.

$\forall C > 0, j \in \mathbb{N}_+,$ we can define a sequence of parameters $\{\lambda_j\}_{j=1}^\infty$, such that for any $\eta_j \sim \mathrm{Poisson}(\lambda_j)$,
\[ \mathbb{P}(\eta_j \geq C) \geq 1 - \frac{1}{j}, \quad j = 1, 2, \dots, \]
where we assume without loss of generality that $\lambda_j$ is non-decreasing.

For each $j$, we can define an auxiliary sequence of independent random variables $\{\nu_{l, q, j}\}$ for $l = [q], q = 1,2, \dots, j = 1,2,\dots$, such that each random variable $\nu_{l,q,j} \sim \mathrm{Bernoulli}(\frac{\lambda_j}{q})$.

By Lemma \ref{lem:poisson}, we can find a sequence of $\{q_j\}_{j=1}^\infty$, such that
\[ \mathbb{P}\left(\sum_{l=1}^{q} \nu_{l,q,j} \geq C\right) \geq \mathbb{P}(\eta_j \geq C) - \frac{1}{j} \geq 1-\frac{2}{j}, \quad \forall q \geq q_j, j = 1,2,\dots, \]
where we select $q_j$ without loss of generality that $\frac{\lambda_j}{q_j}$ is non-increasing.

As $\{\mathbbm{1}\{X_i \in B(x_0,h)\}\}_{i=1}^n$ is also an independent Bernoulli random variable sequence with parameter $C_6 n^{-\frac{d}{d+2}}$, we can select a non-decreasing sequence of $\{N_j\}_{j=1}^\infty$, such that $\forall n \geq N_j$,
\[C_6 n^{-\frac{d}{d+2}} \geq \frac{\lambda_j}{n},\quad n \geq q_j.\]

Then $\forall n \geq N_j$, we have
\[\mathbb{P}\left(n(x_0) = \sum_{i=1}^n \mathbbm{1}\{X_i \in B(x_0, h)\} \geq C\right) \geq \mathbb{P}\left(\sum_{i=1}^n \nu_{i,n,j} \geq C\right) \geq 1 - \frac{2}{j},\]
where the first inequality follows from the result of stochastically dominant of $\mathbbm{1}\{X_i \in B(x_0, h)\}$ over $\nu_{l,n,j}$.

$\forall \epsilon > 0,$ we can select sufficiently large $j \geq \frac{2}{\epsilon}$, then $\forall n \geq \max\{N_0, N_j\}$,
\[\mathbb{P}(n(x_0)\geq C) \geq 1-\epsilon,\]
which completes the proof.
\end{proof}

To prove the mean squared error part of Theorem \ref{thm:consistent}, we state a lemma on the Binomial random variable that will be proved later.
\begin{lemma}
\label{lem:binomial}
For any $\zeta \sim \mathrm{Binomial}(n,p)$, we have

(a) $\mathbb{E}\Big[\frac{1}{\zeta} \mathbbm{1}\{\zeta > 0\}\Big] \leq \frac{2}{(n+1)p}$;

(b) $\forall r \in \mathbb{N}, r < np$, we have $\mathbb{P}(\zeta \leq r) \leq \frac{(n-r)p}{(np - r)^2}$.
\end{lemma}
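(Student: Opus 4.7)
\textbf{Proof proposal for Lemma \ref{lem:binomial}.}

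The plan is to handle the two parts independently using standard binomial identities and Chebyshev's inequality, respectively.

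For part (a), the first step is to compute the related expectation $\mathbb{E}\bigl[\tfrac{1}{\zeta+1}\bigr]$ exactly, since this shifted version avoids the issue of $\zeta=0$. Using the identity $\frac{1}{k+1}\binom{n}{k} = \frac{1}{n+1}\binom{n+1}{k+1}$, one reindexes the sum so that the remaining summands form the tail of a $\mathrm{Binomial}(n+1,p)$ probability mass, yielding the closed form
\[\mathbb{E}\!\left[\frac{1}{\zeta+1}\right] = \frac{1-(1-p)^{n+1}}{(n+1)p} \leq \frac{1}{(n+1)p}.\]
The second step is to relate $\tfrac{1}{\zeta}\mathbbm{1}\{\zeta>0\}$ to $\tfrac{1}{\zeta+1}$: on the event $\zeta\geq 1$ we have $\zeta+1 \leq 2\zeta$, hence $\tfrac{1}{\zeta}\mathbbm{1}\{\zeta>0\} \leq \tfrac{2}{\zeta+1}$ everywhere. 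Taking expectations and applying the above bound gives (a).

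For part (b), the strategy is a one-sided Chebyshev bound. We write $\mathbb{P}(\zeta \leq r) = \mathbb{P}(np - \zeta \geq np - r)$ and apply Markov's inequality to $(np-\zeta)^2$, using $\mathrm{Var}(\zeta) = np(1-p)$. This yields
\[\mathbb{P}(\zeta \leq r) \leq \frac{np(1-p)}{(np-r)^2}.\]
The final step is to observe that the assumption $r < np$ is exactly what is needed to upgrade $np(1-p)$ to $(n-r)p$: indeed $np(1-p) \leq (n-r)p$ is equivalent (after dividing by $p$) to $n - np \leq n - r$, i.e.\ $r \leq np$. Substituting gives the claim.

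Neither part presents a serious obstacle; the only mildly delicate point is recognizing the combinatorial identity in part (a) that turns $\mathbb{E}[1/(\zeta+1)]$ into a telescoping binomial sum, and in part (b) the algebraic matching between the variance bound $np(1-p)$ and the tighter numerator $(n-r)p$ that the lemma advertises. Both are essentially one-line observations once set up correctly.
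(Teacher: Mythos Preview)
Your proof is correct. Part (a) matches the paper's argument essentially line for line: both compute $\mathbb{E}[1/(\zeta+1)]$ via the identity $\tfrac{1}{k+1}\binom{n}{k}=\tfrac{1}{n+1}\binom{n+1}{k+1}$, bound it by $\tfrac{1}{(n+1)p}$, and then use $\tfrac{1}{k}\le \tfrac{2}{k+1}$ on $\{k\ge 1\}$.

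Part (b), however, is genuinely different and cleaner. The paper proceeds by a direct tail-ratio argument: it first proves the mirror statement $\mathbb{P}(\zeta\ge r)\le \tfrac{rq}{(r-np)^2}$ for $r>np$ by bounding the ratio $\mathbb{P}(\zeta=r+1)/\mathbb{P}(\zeta=r)\le 1-\tfrac{r-np}{rq}$, summing the resulting geometric series to get $\mathbb{P}(\zeta\ge r)\le \mathbb{P}(\zeta=r)\cdot\tfrac{rq}{r-np}$, and finally bounding $\mathbb{P}(\zeta=r)$ by $\tfrac{1}{r-np}$ via monotonicity of the pmf between the mode and $r$; the claim then follows by the symmetry $r\leftrightarrow n-r$, $p\leftrightarrow q$. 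Your route through Chebyshev is a one-liner that lands on exactly the same bound, with the only nontrivial step being the algebraic observation $np(1-p)\le (n-r)p\iff r\le np$. What the paper's argument buys is independence from second-moment structure (it would adapt to other unimodal discrete laws with controllable pmf ratios), whereas your Chebyshev argument is shorter and fully sufficient here since the binomial variance is explicit.
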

Now we return to the proof of the mean squared error part of Theorem \ref{thm:consistent}.
\begin{proof}[Proof of the mean squared error part of Theorem \ref{thm:consistent}]
By direct computation from Lemma \ref{lem:final_bound}, we have
\begin{align*}
& \phantom{=} \mathbb{E}\left[|\hat{Q}_\tau(X_{\text{test}}) - Q_\tau(U|X=X_{\text{test}})|^2\right]\\
&= \mathbb{E}_{X_{\text{test}}}\Big[\mathbb{E}_{X_{1: n}}\big[ \mathbb{E}_{U_{1: n}}[|\hat{Q}_\tau(X_{\text{test}}) - Q_\tau(U|X=X_{\text{test}})|^2]\big]\Big]\\
&\leq \mathbb{E}_{X_{\text{test}}}\Bigg[\mathbb{E}_{X_{1: n}}\bigg[4L^2 h^2 \\
& \phantom{\mathbb{E}_{X_{\text{test}}}[\mathbb{E}_{X_{1: n}}[4L^2 }+ \Big[\frac{4C_4(\underline{p}^{-1}, \underline{M}, \log(n(X_{\text{test}}) M))^2}{n(X_{\text{test}})} + 4M^2 (\frac{1}{n(X_{\text{test}}) M})^2\Big] \mathbbm{1}\{n(X_{\text{test}}) \geq C_2\} \\
& \phantom{\mathbb{E}_{X_{\text{test}} ++}[\mathbb{E}_{X_{1: n}}[4L^2 } +  4M^2 \mathbbm{1}\{n(X_{\text{test}}) < C_2\} \bigg]\Bigg]\\
&\leq 4L^2 h^2+\mathbb{E}_{X_{\text{test}}}\Bigg[\mathbb{E}_{X_{1: n}}\bigg[\frac{C_4^\prime(\underline{p}^{-1}, \underline{M}, \log(n M))}{n(X_{\text{test}})}\mathbbm{1}\{n(X_{\text{test}}) \geq C_2\} + 4M^2 \mathbbm{1}\{n(X_{\text{test}}) < C_2\} \bigg]\Bigg],
\end{align*}
where the first inequality comes from the fact that $(a+b+c+d)^2 \leq 4(a^2+b^2+c^2+d^2)$, and the second from that $n(X_{\text{test}}) \leq n$. Note that we replace the original $C_4$ that depends on the logarithmic term of $n(X_{\text{test}})$ with a new $C_4^\prime$ that does not depend logarithmically on $n(X_{\text{test}})$ but logarithmically on the entire $n$. From the definition of $C_4$ in Lemma \ref{lem:generalization_bound}, we recall that the dependence on the logarithmic term is polynomial.

By Lemma \ref{lem:binomial} (a), we have
\begin{align*}
\mathbb{E}_{X_{1:n}}\left[\frac{1}{n(X_{\text{test}})} \mathbbm{1}\{n(X_{\text{test}} \geq C_2)\}\right] & \leq \mathbb{E}_{X_{1:n}}\left[\frac{1}{n(X_{\text{test}})} \mathbbm{1}\{n(X_{\text{test}} > 0)\}\right]\\
& \leq \frac{2}{(n+1) \mathbb{P}(X_i \in B(X_{\text{test}}, h))}.
\end{align*}
We try to bound $\mathbb{E}_{X}[\frac{1}{\mathbb{P}(X_i \in B(X, h))}]$.

Since $\mathcal{X} \subset [0,1]^d$, we can generate an $\frac{h}{2}$-cover of $\mathcal{X}$, denoted by $\mathcal{S} = \{x_j\}_{j=1}^{C_d}$, where
\[C_d \leq \frac{C}{h^d},\]
where $C$ is some intrinsic constant with respect to the space $\mathbb{R}^d$ equipped with metric $\|\cdot\|$.

Then
\begin{align*}
\mathbb{E}_{X}\left[\frac{1}{\mathbb{P}(X_i \in B(X, h))}\right] & = \int_{\mathcal{X}} \frac{1}{P(B(x, h))}\mathrm{d}P\\
& \leq \sum_{j=1}^{C_d} \int_{\mathcal{X}} \frac{\mathbbm{1}\{x \in B(x_j, \frac{h}{2})\}}{P(B(x,h))}\mathrm{d}P\\
& \leq \sum_{j=1}^{C_d} \int_{\mathcal{X}} \frac{\mathbbm{1}\{x \in B(x_j, \frac{h}{2})\}}{P(B(x_j,\frac{h}{2}))}\mathrm{d}P\\
& = C_d \leq \frac{C}{h^d},
\end{align*}
where $P$ is the probability measure with respect to $X$, and the first inequality follows from that $\mathcal{S}$ is a $\frac{h}{2}$-cover of $\mathcal{X}$, the second from the fact that $B(x_j, \frac{h}{2}) \subset B(x,h)$ for those $\|x-x_j\|\leq \frac{h}{2}$, and the inequality from the fact that $\int \mathbbm{1}\{x \in B(x_j,\frac{h}{2})\} \mathrm{d}P = P(B(x_j,\frac{h}{2}))$.

We now try to bound $\mathbb{E}_{X_{\text{test}}}\Big[\mathbb{E}_{X_{1:n}}\big[\mathbbm{1}\{n(X_{\text{test}}) < C_2\}\big]\Big]$.

If $n P(B(x, h)) \geq 2 \lfloor C_2 \rfloor$, then by Lemma \ref{lem:binomial} (b), we have
\[ \mathbb{P}(n(x) < C_2) \leq \frac{n \lfloor C_2 \rfloor}{(n P(B(x, h)) - \lfloor C_2 \rfloor)^2}\leq \frac{4}{n P(B(x, h))}.\]
If $n P(B(x, h)) < 2 \lfloor C_2 \rfloor$, then for sufficiently large $n \geq 4 \lfloor C_2 \rfloor - 1$, by direct calculation we have,
\[\frac{P(B(x, h))}{1-P(B(x, h))} \leq 2 \lfloor C_2 \rfloor \cdot \frac{2}{n-1} \leq \frac{4}{3} \lfloor C_2 \rfloor  \cdot \frac{3}{n-2} \leq \lfloor C_2 \rfloor \cdot \frac{4}{n-3} \leq \cdots \leq \lfloor C_2 \rfloor \cdot\frac{\lfloor C_2 \rfloor}{n + 1 - \lfloor C_2 \rfloor},\]
which implies that
\[\mathrm{C}_{n}^l \Big(1-P(B(x,h))\Big)^{n-l}\Big(P(B(x, h))\Big)^l \leq 2 \cdot \frac{4}{3}\cdot \lfloor C_2 \rfloor  \cdot \mathrm{C}_{n}^{0} \Big(1-P(B(x,h))\Big)^{n}, \quad \forall l \in [\lfloor C_2 \rfloor],\]
where $\mathrm{C}_{n}^l$ is the number of combinations of $(n,l)$.
Hence
\begin{align*}
&\phantom{\leq} \mathbb{P}(n(x) < C_2)\\
&\leq \mathbb{P}(n(x) \leq \lfloor C_2 \rfloor)\\
&=\sum_{l=0}^{\lfloor C_2 \rfloor} \mathrm{C}_{n}^l \Big(1-P(B(x,h))\Big)^{n-l}\Big(P(B(x, h))\Big)^l \\
&\leq \frac{8 (C_2)^2}{3} \Big(1-P(B(x,h))\Big)^{n} \\
&\leq \frac{8 (C_2)^2}{3} \exp(-n P(B(x,h))).
\end{align*}

Since $\exp(-a) \leq \frac{1}{a} \cdot \max_{b} \{b \exp(-b)\} = \frac{1}{e a},$ we have
\[ \mathbb{P}(n(x) < C_2) \leq \frac{C_7}{n P(B(x,h))},\]
where $C_7 = \frac{8 (C_2)^2}{3 e}$.

Therefore, we can conclude that (w.l.o.g. we assume $C_7 \geq 4$)
\[ \mathbb{P}(n(x) < C_2) \leq \frac{\max\{C_7, 4\}}{n P(B(x,h))} = \frac{C_7}{n P(B(x,h))}. \]

Hence, we can finally give the bound by
\begin{align*}
&\phantom{\leq}\mathbb{E}\left[|\hat{Q}_\tau(X_{\text{test}}) - Q_\tau(U|X=X_{\text{test}})|^2\right] \\
& \leq 4L^2 h^2+\mathbb{E}_{X_{\text{test}}}\Bigg[\mathbb{E}_{X_{1: n}}\bigg[\frac{C_4^\prime(\underline{p}^{-1}, \underline{M}, \log(n M))}{n(X_{\text{test}})}\mathbbm{1}\{n(X_{\text{test}}) \geq C_2\} + 4M^2 \mathbbm{1}\{n(X_{\text{test}}) < C_2\} \bigg]\Bigg] \\
& \leq 4 L^2 h^2 + C_4^\prime C h^{-d} \cdot \frac{1}{n} + C_7 C h^{-d} \cdot \frac{1}{n} \\
& = 4 L^2 h^2 + C^\prime_2(\underline{p}^{-1}, \underline{r}^{-1}, M, \log(n M)) h^{-d} \cdot \frac{1}{n}.
\end{align*}
For the special case of $L=0$, we can select $h = \Theta(M)$, hence reaching a fast rate of convergence of order $O\left(\frac{C_2^\prime}{n}\right)$.

If $L > 0$, by selecting $h = n^{-\frac{1}{d+2}} L^{\frac{2}{d+2}} (d C_2^\prime)^{-\frac{1}{d+2}}$, we get the convergence result that
\[\mathbb{E}\left[|\hat{Q}_\tau(F_X) - Q_\tau(F_X)|^2\right] \leq L^{\frac{2d}{d+2}} \cdot n^{-\frac{2}{d+2}} \Big(d C_2^\prime\big(L, \underline{p}^{-1}, \underline{r}^{-1}, M,\log(n)\big)\Big)^{\frac{2}{d+2}},\]
where the dependence of $C_2^\prime$ on $\log(n)$ is polynomial. We conclude that
\[\mathbb{E}\left[|\hat{Q}_\tau(F_X) - Q_\tau(F_X)|^2\right] = \tilde{O}\left(L^{\frac{2d}{d+2}} n^{-\frac{2}{d+2}}\right).\]
\end{proof}

\begin{remark}
A for large $d$ cases, the left behind term $(d C_2^\prime){\frac{2}{d+2}}$ is almost $1$, compared to other terms. The major contribution to the error upper bound is made by the $L^{\frac{2d}{d+2}} n^{-\frac{2}{d+2}}$ term. If $L$ is replaced by $kL$, then $n$ must be at least $k^d n$ to keep the upper bound to remain the same, which implies that the contribution of $L$ is much more significant than $n$.
\end{remark}

As for the lower bound, Theorem \ref{thm:lowerbound_Lipschitz} follows immediately from Theorem 3.2 in \citet{gyorfi2002distribution} with H\"{o}lderness parameter $(1,L)$. Note that the class $\mathcal{P}^L$ we construct has a conditional distribution identical to the Gaussian of the unit variance, which automatically fulfills our Assumption \ref{assm} (b) and (c). The only additional requirement in $\mathcal{P}^L$ compared to their $\mathcal{D}^{(1,L)}$ is that we require the existence of $\mu(x) = 0$ for some $x \in [0,1]^d$. In their proof, they construct a sub-class based on the division of $[0,1]^d$ into smaller cubes, where the mean function $\mu(x)$ is $(1,L)$-H\"{o}lder (i.e. $L$-Lipschitz) on each cube while being zero on the boundary of those cubes. Hence our additional requirement of $\exists x \in [0,1]^d$ such that $\mu(x) = 0$ does not affect the result.
\begin{lemma}[Theorem 3.2 in \citet{gyorfi2002distribution}]
\label{lem:lowerbound_Lipschitz}
Consider a conditional mean estimation problem (i.e. regression problem) with the estimator sequence $\{\hat{\mu}_n\}$ and true conditional mean $\mu$. For the class $\mathcal{P}^L$, the sequence
\[a_n = L^{\frac{2d}{d+2}} n^{-\frac{2}{d+2}}\]
is a lower minimax rate of convergence. In particular,
\[\liminf_{n \rightarrow \infty} \inf_{\hat{\mu}_n} \sup_{(X,Y)\sim \mathcal{P}, \mathcal{P} \in \mathcal{P}^L} \frac{\mathbb{E}\big[\|\hat{\mu}_n - \mu\|^2\big]}{L^{\frac{2d}{d+2}} n^{-\frac{2}{d+2}}} \geq C_8 > 0,\]
for some constant $C_8$ independent of $L$.
\end{lemma}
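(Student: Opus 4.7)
The plan is to prove the lower bound via Assouad's lemma applied to a carefully constructed finite family of $L$-Lipschitz regression functions, following a standard ``bump packing'' of $[0,1]^d$. Fix once and for all a smooth prototype $\phi\colon \mathbb{R}^d \to [0,1]$ supported in the unit ball $B(0,1/2)$ with $\|\phi\|_\infty = 1$ and Lipschitz constant $1$. Pick a bandwidth $h \in (0,1)$ (to be optimized) and partition $[0,1]^d$ into $m = \lfloor 1/h \rfloor^d$ disjoint cubes $Q_1,\dots,Q_m$ of side $h$ with centers $c_j$. Define the bumps $\psi_j(x) = L h\,\phi\bigl((x - c_j)/h\bigr)$; each is supported strictly inside $Q_j$, has $L^\infty$-height of order $L h$, and is $L$-Lipschitz (the factor $Lh$ exactly cancels the $1/h$ produced by rescaling $\phi$). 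For each $\omega \in \{0,1\}^m$ set $\mu_\omega(x) = \sum_{j=1}^m \omega_j\,\psi_j(x)$; disjoint supports give that $\mu_\omega$ is still $L$-Lipschitz, and vanishes at the cube boundaries, so the corresponding distribution $\mathcal{P}_\omega$ (with $X \sim \mathrm{Unif}[0,1]^d$ and $U = \mu_\omega(X) + N$) lies in $\mathcal{P}^L$.

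Next I compute the two quantities Assouad's lemma asks for. Using that $X$ is uniform on $[0,1]^d$ and that the $\psi_j$'s have disjoint supports, for any $\omega,\omega' \in \{0,1\}^m$ with Hamming distance $\rho(\omega,\omega')$,
\[
\|\mu_\omega - \mu_{\omega'}\|_{L^2}^2 \;=\; \rho(\omega,\omega')\,\|\psi_1\|_2^2 \;=\; c_\phi\,\rho(\omega,\omega')\,L^2 h^{d+2},
\]
where $c_\phi = \int \phi^2$. Because the noise is standard Gaussian, the KL divergence between the product laws of $n$ i.i.d.\ samples under two hypotheses differing only in coordinate $j$ equals $\tfrac{n}{2}\|\psi_j\|_2^2 = \tfrac{c_\phi}{2}\,n L^2 h^{d+2}$.

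Now apply Assouad's lemma (e.g.\ Theorem 2.12 in Tsybakov). Choose $h$ so that the single-coordinate KL divergence is at most a fixed constant strictly below $1/2$, i.e.\ $h = c_1 (n L^2)^{-1/(d+2)}$ for a sufficiently small absolute $c_1 > 0$. Assouad then yields
\[
\inf_{\hat\mu_n}\ \sup_{\omega \in \{0,1\}^m}\ \mathbb{E}\bigl[\|\hat\mu_n - \mu_\omega\|_2^2\bigr] \;\gtrsim\; m \cdot L^2 h^{d+2} \;\asymp\; L^2 h^2 \;\asymp\; L^{2d/(d+2)}\,n^{-2/(d+2)},
\]
where the middle step uses $m \asymp h^{-d}$. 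Since $\{\mathcal{P}_\omega\} \subset \mathcal{P}^L$, the supremum over $\mathcal{P}^L$ is at least as large, and dividing by $a_n = L^{2d/(d+2)} n^{-2/(d+2)}$ and taking $\liminf_{n\to\infty}$ produces a positive constant $C_8$ determined only by $\phi$ and $d$ (hence independent of $L$, as claimed).

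The main obstacle is the usual bookkeeping around constants. Two points deserve care: (i) one must choose $\phi$ with Lipschitz constant \emph{exactly} $1$, and bumps whose supports are separated by a positive gap inside their cubes, so that $\mu_\omega$ is $L$-Lipschitz with no hidden multiplicative factor and therefore genuinely lies in $\mathcal{P}^L$; (ii) one must verify that in the Assouad reduction all dependence on $L$ appears only through the product $nL^2$ inside the KL bound and through an overall $L^2$ prefactor in the $L^2$ separation, so that the final balancing extracts the rate $L^{2d/(d+2)} n^{-2/(d+2)}$ with a constant $C_8$ independent of $L$. Both points are handled by absorbing the intrinsic Lipschitz constant of $\phi$ into the normalization of $\psi_j$ and by tracking $L$ explicitly through the computation, without ever letting it enter the Assouad constant.
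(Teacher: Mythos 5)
The paper offers no proof of this lemma at all: it is imported verbatim as Theorem 3.2 of \citet{gyorfi2002distribution}, and the only work the paper does is the subsequent verification (in the proof of Theorem \ref{thm:lowerbound_Lipschitz}) that the class $\mathcal{P}^L$ satisfies Assumption \ref{assm} and that mean estimation for a known-variance Gaussian is equivalent to quantile estimation. Your Assouad-based argument is therefore not comparable to anything in the paper itself, but it is a correct, self-contained proof of the cited result, and it essentially reconstructs the argument underlying Gy\"{o}rfi et al.'s Theorem 3.2: the same partition of $[0,1]^d$ into $\asymp h^{-d}$ cubes carrying $L$-Lipschitz bumps of height $Lh$ that vanish on cube boundaries (which is exactly why the paper's extra requirement that $\mu$ vanish somewhere is harmless), the same $L^2$-separation $c_\phi\,\rho(\omega,\omega')L^2h^{d+2}$, and the same balancing $h\asymp (nL^2)^{-1/(d+2)}$ yielding $L^{2d/(d+2)}n^{-2/(d+2)}$ with a constant depending only on $\phi$ and $d$. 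The difference is purely presentational: the textbook runs the hypercube reduction as a direct Bayes-risk computation over random signs, while you quote Assouad's lemma, which buys a shorter writeup at the cost of an external reference. Your computations check out --- the single-coordinate KL is $\tfrac{n}{2}\lVert\psi_j\rVert_2^2=\tfrac{c_\phi}{2}nL^2h^{d+2}$, all $L$-dependence enters through $nL^2$ and the $L^2$ prefactor, and the exponent arithmetic $L^2(nL^2)^{-2/(d+2)}=L^{2d/(d+2)}n^{-2/(d+2)}$ is right. Two minor points you should make explicit in a full writeup: the $L$-Lipschitzness of $\mu_\omega$ across different cubes follows from the a.e.\ gradient bound $\lVert\nabla\mu_\omega\rVert\le L$ on the convex domain (disjoint supports alone do not immediately give it for points in different cubes), and the choice $h=c_1(nL^2)^{-1/(d+2)}$ only satisfies $h<1$ for $n$ large enough relative to $L^{-2}$, which is harmless under the $\liminf_{n\to\infty}$ but worth stating.
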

\begin{proof}[Proof of Theorem \ref{thm:lowerbound_Lipschitz}]
Since for any Gaussian distribution with known variance, estimating its mean is equivalent to estimating its any $\tau^{\text{-th}}$ quantile, Theorem \ref{thm:lowerbound_Lipschitz} is a direct corollary from Lemma \ref{lem:lowerbound_Lipschitz}, once one verifies that $\mathcal{P}^L$ satisfies Assumption \ref{assm}.

Verifying Assumption \ref{assm} (a): Since the $\tau^{\text{-th}}$ conditional quantile is of the type $\mu(x) +\Phi(\tau)$ and $\mu(X)$ is $L$-Lipschitz, the conditional quantile is also of course $L$-Lipschitz.

Verifying Assumption \ref{assm} (b): Because $\mu(x)$ is $L$-Lipschitz and $\mathcal{X}$ is bounded, its image $\mu(\mathcal{X})$ is bounded. Furthermore, as we know $0 \in \mu(\mathcal{X})$, we can bound $\mu(\mathcal{X})$ by $[-L \sqrt{d}, L \sqrt{d}]$, leading to the boundedness of the conditional quantile set $\mu(\mathcal{X}) + \Phi(r)$.

Verifying Assumption \ref{assm} (c): The pdf around $\mu(x) +\Phi(\tau)$ is of the same shape as that of standard Gaussian at $\Phi(\tau)$, which is definitely lower bounded from zero in a neighborhood.
\end{proof}

\section{Proofs in Section \ref{subsec:dimension_reduction}}
\label{apd:dimension_reduction}

\begin{proof}[Proof of Theorem \ref{thm:Markovian}]
By the definition of mutual independence, we have $\forall A \in \mathcal{F}(A), B \in \mathcal{F}(U)$,
\[ \mathbb{P}(U \in B | Z \in z(A)) \mathbb{P}(X \in A | Z \in z(A)) = \mathbb{P}(U \in B, X \in A | Z \in z(A)). \]
Hence,
\begin{align*}
\mathbb{P}(U \in B | Z \in z(A)) & = \frac{\mathbb{P}(U \in B, X \in A | Z \in z(A))}{\mathbb{P}(X \in A | Z \in z(A))}\\
& = \frac{\mathbb{P}(U \in B, X \in A, Z \in z(A))\,/ \,\mathbb{P}(Z \in z(A))}{\mathbb{P}(X \in A, Z \in z(A))\,/\,\mathbb{P}(Z \in z(A))}\\
& = \frac{\mathbb{P}(U \in B, X \in A, Z \in z(A))}{\mathbb{P}(X \in A, Z \in z(A))}\\
& = \frac{\mathbb{P}(U \in B, X \in A)}{\mathbb{P}(X \in A)}\\
& = \mathbb{P}(U \in B | X \in A),
\end{align*}
leading to the conclusion that
\[F_{U|X=x} = F_{U|Z=z(x)}.\]
Thus, if we have any consistency guarantee or mean squared error guarantee on
\[|\hat{Q}_\tau(U|Z=z(x)) - Q_\tau(U|Z = z(x))|,\]
then we also get the same result for
\[|\hat{Q}_\tau(U|Z=z(x)) - Q_\tau(U|X = x)|.\]
Notice that the dimension of $\mathcal{Z}$ is $d_0 \leq d$, we can conclude the proof.
\end{proof}

\begin{proof}[Proof of Theorem \ref{thm:subfeature}]
By definition of quantiles, we know
\[\mathbb{P}\big(U \leq Q_\tau(U|Z) \, \big| \, Z\big) = F_{U|Z}(Q_\tau(U|Z)) \geq \tau.\]
Since we assume that $F_{U|Z}$ is absolutely continuous with respect to the Lebesgue measure, the Radon-Nikodym derivative exists, which is the conditional probability density function.
Thus $F_{U|Z}$ must be locally Lipschitz. Then for any fixed bounded neighborhood $I$ containing $Q_\tau(U|Z)$, $F_{U|Z}$ is Lipschitz on $\bar{I}$ with some Lipschitz constant $L(\bar{I})$. Therefore $\forall \epsilon > 0$ and small enough such that $B(Q_\tau(U|Z), \epsilon) \subset I$,
\[F_{U|Z}(Q_\tau(U|Z) - \epsilon) \geq F_{U|Z}(Q_\tau(U|Z)) - L(I) \epsilon.\]
By the definition of quantiles, we have
\[F_{U|Z}(Q_\tau(U|Z) - \epsilon) < \tau.\]
Hence
\[ F_{U|Z}(Q_\tau(U|Z)) < L(I) \epsilon + \tau. \]
Taking $\epsilon \rightarrow 0$, we have
\[F_{U|Z}(Q_\tau(U|Z)) \leq \tau.\]
Combining it with the fact that $F_{U|Z}(Q_\tau(U|Z)) \geq \tau$, we prove that
\[\mathbb{P}\big(U \leq Q_\tau(U|Z) \, \big| \, Z\big) = F_{U|Z}(Q_\tau(U|Z)) = \tau.\]
\end{proof}

\begin{proof}[Proof of Theorem \ref{thm:trade_off}]
We only prove the middle inequality, since the remaining two hold from similar arguments.

By a similar argument as Proposition \ref{prop:pinball}, if one knows $Z^{(d_2)}$, then the best single point decision they can make with respect to conditional expectation of pinball loss is
\[Q_\tau(U|Z^{(d_2)}) \in \argmin_{u \in \mathbb{R}} \mathbb{E}[l_\tau(u, U)|Z^{(d_2)}].\]
Since $Z^{(d_1)} \in \mathcal{F}(Z^{(d_2)})$, if another learner makes a decision based on the first $d_1$ components of $Z^{(d_2)}$, say, $Q_\tau(U|Z^{(d_1)})$, then they must suffer a pinball loss no smaller than that $Z^{(d_2)}$ learner, which is
\[ \mathbb{E}[l_\tau(Q_\tau(U|Z^{(d_2)}), U)|Z^{(d_2)}] \leq \mathbb{E}[l_\tau(Q_\tau(U|Z^{(d_1)}), U)|Z^{(d_2)}]. \]
By taking the expectation with respect to $Z^{(d_2)}$, the tower property of the conditional expectation tells us that
\begin{align*}
\mathbb{E}\Big[l_\tau(Q_\tau(U|Z^{(d_2)}), U)\Big] & = \mathbb{E}\Big[\mathbb{E}[l_\tau(Q_\tau(U|Z^{(d_2)}), U)|Z^{(d_2)}] \Big]\\
&\leq \mathbb{E}\Big[\mathbb{E}[l_\tau(Q_\tau(U|Z^{(d_1)}), U)|Z^{(d_2)}]\Big]\\
& = \mathbb{E}\Big[l_\tau(Q_\tau(U|Z^{(d_1)}), U)\Big].
\end{align*}
Noticing that $Y = U + \hat{f}(X)$ and $l_\tau(a+c,b+c) = l_\tau(a,b)$, we have
\[l_\tau(u + \hat{f}(X), Y) = l_\tau(u, U),\]
which verifies the proof finally.
\end{proof}

\begin{remark}
One may wonder if one can conduct the proof of Theorem \ref{thm:trade_off} in a simpler way via Jensen's inequality since $l_\tau(\cdot, \cdot)$ is convex with respect to its first component. But unfortunately, such an argument does not work for the quantile case, since in general,
\[\mathbb{E}[Q_\tau(U|X)] \neq Q_\tau(U).\]
\end{remark}

\section{Proofs of Auxiliary Lemmas in Appendix \ref{apd:upperbound}}
\label{apd:aux_lemma}
\begin{proof}[Proof of Lemma \ref{lem:derivative_of_loss}]
Denote the probability measurement by $P$. From the definition of pinball loss and expectation, we have
\begin{align*}
\mathbb{E}_{U}[l_\tau(u, U)]
& = \int_{-\infty}^{u^-} (\tau - 1) (U - u) \mathrm{d}P + \int_{u^-}^\infty \tau (U-u) \mathrm{d}P \\
& = \int_{-\infty}^{u} (\tau - 1) (U - u) \mathrm{d}P + \int_{u}^\infty \tau (U-u) \mathrm{d}P 
\end{align*}
For a small but positive $\Delta > 0$, we have
\allowdisplaybreaks
\begin{align*}
&\phantom{=} \mathbb{E}_{U}[l_\tau(u - \Delta, U)] - \mathbb{E}_{U}[l_\tau(u, U)] \\
& = \int_{-\infty}^{u^- -\Delta} (\tau - 1) (U - u + \Delta) \mathrm{d}P + \int_{u^- -\Delta}^\infty \tau (U - u + \Delta) \mathrm{d}P\\
& \phantom{=} -\int_{-\infty}^{u^-} (\tau - 1) (U - u) \mathrm{d}P - \int_{u^-}^\infty \tau (U-u) \mathrm{d}P \\
& = \int_{-\infty}^{u^-} (\tau - 1) (U - u + \Delta) \mathrm{d}P + \int_{u^-}^\infty \tau (U - u + \Delta) \mathrm{d}P\\
& \phantom{=} -\int_{-\infty}^{u^-} (\tau - 1) (U - u) \mathrm{d}P - \int_{u^-}^\infty \tau (U-u) \mathrm{d}P \\
& \phantom{=} -\int_{u^- - \Delta}^{u^-} (\tau-1) (U-u+\Delta) \mathrm{d}P + \int_{u^- - \Delta}^{u^-} \tau (U-u+\Delta) \mathrm{d}P \\
& = \Delta \left[\int_{-\infty}^{u^-} (\tau - 1) \mathrm{d}P + \int_{u^-}^\infty \tau \mathrm{d}P\right] \\
& \phantom{=} + \int_{u^- -\Delta}^{u^-} (U-u+\Delta) \mathrm{d}P\\
& = \Delta (\tau - F(u^-)) + \int_{u^- -\Delta}^{u^-} (U-u+\Delta) \mathrm{d}P.
\end{align*}
Note that any cdf is c\`adl\`ag (right continuous with left limits). Thus, $\forall \epsilon > 0$, $\exists \delta > 0$, s.t. $\forall \Delta < \delta$, 
\[|F(u^- -\Delta) - F(u^-)| \leq \epsilon.\]
Then we have
\begin{align*}
0 &\leq \int_{u^- -\Delta}^{u^-} (U-u+\Delta) \mathrm{d}P\\
 &\leq \int_{u^- -\Delta}^{u^-} \Delta \mathrm{d}P \\
 &\leq \Delta \epsilon.
\end{align*}
Hence
\[\Delta (\tau - F(u^-)) \leq \mathbb{E}_{U}[l_\tau(u - \Delta, U)] - \mathbb{E}_{U}[l_\tau(u, U)] \leq \Delta (\tau - F(u^-) + \epsilon).\]
Then
\begin{align*}
\tau - F(u^-) &\leq \liminf_{\Delta \rightarrow 0^+} \frac{\mathbb{E}_{U}[l_\tau(u - \Delta, U)] - \mathbb{E}_{U}[l_\tau(u, U)]}{\Delta}\\
& \leq \limsup_{\Delta \rightarrow 0^+} \frac{\mathbb{E}_{U}[l_\tau(u - \Delta, U)] - \mathbb{E}_{U}[l_\tau(u, U)]}{\Delta}\\
& \leq \tau - F(u^-) + \epsilon.
\end{align*}
Since $\epsilon$ can be arbitrarily small, the limit exists, and
\[ \partial_{-} \mathbb{E}_{U \sim F}[l_\tau(u, U)] = -\lim_{\Delta\rightarrow 0^+} \frac{\mathbb{E}_{U}[l_\tau(u - \Delta, U)] - \mathbb{E}_{U}[l_\tau(u, U)]}{\Delta} = F(u^-) -\tau. \]
The other side of semi-derivative can be derived in a similar way, which we omit for simplicity.
\end{proof}

\begin{lemma}
\label{lem:max_diff}
For any $a_1,a_2,b_1,b_2 \in \mathbb{R}$, we have
\[|\max\{a_1, b_1\} - \max\{a_2, b_2\}| \leq \max\{|a_1 - a_2|, |b_1 - b_2|\}.\]
\end{lemma}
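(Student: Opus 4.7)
The plan is to give a short proof by adding and subtracting the ``cross'' quantities, exploiting the fact that $\max$ is monotone and commutes with additive shifts. Set $M \coloneqq \max\{|a_1-a_2|,|b_1-b_2|\}$. The target inequality is a two-sided bound on $\max\{a_1,b_1\}-\max\{a_2,b_2\}$, so by symmetry it suffices to establish one direction, namely $\max\{a_1,b_1\}-\max\{a_2,b_2\}\le M$, and then swap the roles of the indices $1$ and $2$.

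To establish one direction, I would first observe the elementary bounds $a_1 \le a_2 + |a_1-a_2| \le a_2 + M$ and similarly $b_1 \le b_2 + M$. Then, using monotonicity of $\max$ coordinatewise together with the identity $\max\{u+c,v+c\}=\max\{u,v\}+c$, conclude
\[
\max\{a_1,b_1\} \le \max\{a_2+M,\, b_2+M\} = \max\{a_2,b_2\} + M.
\]
Re-running the same two lines with the roles of $(a_1,b_1)$ and $(a_2,b_2)$ swapped yields $\max\{a_2,b_2\} \le \max\{a_1,b_1\}+M$, and combining the two inequalities gives $|\max\{a_1,b_1\}-\max\{a_2,b_2\}|\le M$, as desired.

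There is no genuine obstacle here; the only mild subtlety is to avoid a clumsy case analysis (which would have to distinguish between the four possibilities for which element attains each max, and moreover a sub-case on the sign of the difference). Using the ``shift-then-max'' trick above sidesteps all such case splits and keeps the argument to a handful of lines. If desired, an even terser alternative is to invoke the $1$-Lipschitz property of $\max\colon\mathbb{R}^2\to\mathbb{R}$ under the $\ell^\infty$ norm, which is exactly the statement being proved; but the additive-shift argument is self-contained and requires no external fact.
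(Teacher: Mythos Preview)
Your proof is correct and cleaner than the paper's. The paper proceeds by case analysis: it first disposes of the ``aligned'' cases where the two maxima are attained by the same coordinate (both at $a$ or both at $b$), and then handles the ``crossed'' case by fixing a representative ordering ($a_1<b_1<b_2<a_2$) and computing directly. Your argument instead uses the single observation that $a_1\le a_2+M$ and $b_1\le b_2+M$ together with monotonicity and translation-equivariance of $\max$, which dispatches all cases uniformly in two lines and then appeals to symmetry in the indices. What your approach buys is brevity and the avoidance of any WLOG reductions or sub-case bookkeeping; what the paper's approach buys is only that it is perhaps more ``bare-hands'' for a reader uncomfortable with the shift identity, though at the cost of more moving parts. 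Either way the content is elementary, but your version is the tidier one.
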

\begin{proof}[Proof of Lemma \ref{lem:max_diff}]
If $a_i \geq b_i$ (or $a_i \leq b_i$) simultaneously for $i=1,2$, the claim is straightforward. Without loss of generality, we assume $a_1 < b_1 < b_2 < a_2$. Then
\[\text{LHS} = a_2 - b_1 \leq a_2 - a_1 = \text{RHS}.\]
\end{proof}

\begin{proof}[Proof of Lemma \ref{lem:binomial}]
As for part (a), we notice that
\begin{align*}
\mathbb{E}\left[\frac{1}{1+\zeta}\right] & = \sum_{j=0}^n \frac{1}{j+1} \mathrm{C}_{n}^{j} p^j (1-p)^{n-j} \\
& = \frac{1}{(n+1)p }\sum_{j=0}^n \mathrm{C}_{n+1}^{j+1} p^{j+1} (1-p)^{n-j}\\
& \leq \frac{1}{(n+1)p} \sum_{j=0}^{n+1} \mathrm{C}_{n+1}^{j} p^j (1-p)^{n-j+1}\\
& = \frac{1}{(n+1)p} (p + 1 - p)^{n+1} = \frac{1}{(n+1)p},
\end{align*}
where $\mathrm{C}_n^j$ is the number of combinations of $(n,l)$.

Since $\frac{1}{k} \leq \frac{2}{k+1}$ for any $k \geq 1$, we have
\[\mathbb{E}\left[\frac{1}{\zeta} \mathbbm{1}\{\zeta > 0\}\right] \leq \mathbb{E}\left[\frac{2}{1+\zeta}\right] \leq \frac{2}{(n+1)p}.\]

For part (b), we prove its symmetric argument that $\forall r > np$,
\[ \mathbb{P}(\zeta \geq r) \leq \frac{rq}{(r - np)^2},\]
where $q = 1-p$.

The probability ratio of two adjacent value is
\begin{align*}
\frac{\mathbb{P}(\zeta = r+1)}{\mathbb{P}(\zeta = r)} & = \frac{(n-r) p}{(r+1)q} \\
& \leq \frac{(n-r)p}{rq}\\
& = 1 - \frac{r-np}{rq}\\
& < 1.
\end{align*}
Hence for any $k \geq r$, we have
\[\mathbb{P}(\zeta = k) \leq \mathbb{P}(\zeta = r) \cdot \left(1 - \frac{r-np}{rq}\right)^{k-r}.\]
Summing all these $k$'s from $r$ to $\infty$, we have
\begin{align*}
\mathbb{P}(\zeta \geq r) & = \sum_{k=r}^\infty \mathbb{P}(\zeta = k)\\
& \leq \mathbb{P}(\zeta = r) \sum_{k=r}^\infty \left(1 - \frac{r-np}{rq}\right)^{k-r} \\
& = \mathbb{P}(\zeta = r)  \frac{1}{1 - \left(1-\frac{r-np}{rq}\right)} \\
& = \mathbb{P}(\zeta = r) \frac{rq}{r-np}.
\end{align*}
From a similar argument of inspecting the probability ratio between $\lceil np \rceil \leq k \leq r$, we know that
\[\mathbb{P}(\zeta = k) \geq \mathbb{P}(\zeta = r).\]
There are at least $r - \lceil np \rceil + 1 \geq r - np$ such $k$'s (including $r$ itself), so we have
\[1 \geq (r - np) \mathbb{P}(\zeta = r),\]
which leads to the conclusion that
\[\mathbb{P}(\zeta \geq r) \leq \frac{rq}{(r- np)^2}, \quad \forall r > np.\]
If we have $r < np$, then we repeat the above arguments by replacing $r$ with $n-r$ and reversing $p$ and $q$, we can prove
\[\mathbb{P}(\zeta \leq r) \leq \frac{(n-r)p}{(np - r)^2}, \quad \forall r < np.\]
\end{proof}

\end{document}